\theoremstyle{plain}
\newtheorem{theorem}{Theorem}[section]
\newtheorem{proposition}[theorem]{Proposition}
\newtheorem{lemma}[theorem]{Lemma}
\newtheorem{corollary}[theorem]{Corollary}
\theoremstyle{definition}
\newtheorem{assumption}[theorem]{Assumption}
\theoremstyle{remark}
\newcommand{\mb}{\mathbf}
\newcommand{\mbb}{\mathbb}
\newcommand{\mc}{\mathcal}
\newcommand{\improve}[1]{\textcolor{green!60!black}{\scriptsize{ #1}\%}}
\newcommand{\decrease}[1]{\textcolor{red!60!black}{\scriptsize{ #1}\%}}
\newcommand{\repdataset}{\mc D_{\text{train}}^{\text{rep}}}
\newcommand{\moldataset}{\mc D_{\text{train}}^{\text{mol}}}
\newcommand{\molrepdataset}{\mc D_{\text{train}}^{\text{mol-rep}}}
\newcommand{\molgen}{p_\theta (\mc M | r)}
\newcommand{\repgen}{p_\varphi (r)}
\crefname{table}{table}{Table}
\crefname{figure}{figure}{Figure}
\newcommand{\cai}[1]{\textcolor{blue}{CZ: #1}}
\newcommand{\ZL}[1]{\textcolor{yellow!75!black}{(ZL: #1)}}
\newcommand{\revised}[1]{#1}
\definecolor{softpink}{HTML}{FFB6C1} 
\definecolor{softpink_lighter}{HTML}{FFDFE5}
\icmltitlerunning{Geometric Representation Condition Improves Equivariant Molecule Generation}
\begin{document}

\twocolumn[
\icmltitle{Geometric Representation Condition Improves Equivariant Molecule Generation}



\icmlsetsymbol{equal}{*}

\begin{icmlauthorlist}
\icmlauthor{Zian Li}{equal,pkuai,pkuint}
\icmlauthor{Cai Zhou}{equal,mit,tsinghua}
\icmlauthor{Xiyuan Wang}{pkuai,pkuint}
\icmlauthor{Xingang Peng}{pkuai,pkuint}
\icmlauthor{Muhan Zhang}{pkuai}
\end{icmlauthorlist}

\icmlaffiliation{pkuai}{Institute for Artificial Intelligence, Peking University, Beijing, China}
\icmlaffiliation{pkuint}{School of Intelligence Science and Technology, Peking University, Beijing, China}
\icmlaffiliation{mit}{Department of Electrical Engineering and Computer Science, Massachusetts Institute of Technology, Cambridge, MA, USA}
\icmlaffiliation{tsinghua}{Department of Automation, Tsinghua University, Beijing, China}

\icmlcorrespondingauthor{Muhan Zhang}{muhan@pku.edu.cn}

\icmlkeywords{Machine Learning, ICML}

\vskip 0.3in
]



\printAffiliationsAndNotice{\icmlEqualContribution} 

\begin{abstract}
Recent advances in molecular generative models have demonstrated great promise for accelerating scientific discovery, particularly in drug design. However, these models often struggle to generate high-quality molecules, especially in conditional scenarios where specific molecular properties must be satisfied. In this work, we introduce GeoRCG, a general framework to improve molecular generative models by integrating geometric representation conditions with provable theoretical guarantees. We decompose the generation process into two stages: first, generating an informative geometric representation; second, generating a molecule conditioned on the representation. Compared with single-stage generation, the easy-to-generate representation in the first stage guides the second stage generation toward a high-quality molecule in a goal-oriented way. Leveraging EDM and \revised{SemlaFlow} as base generators, we observe significant quality improvements in unconditional molecule generation on the widely used QM9 and GEOM-DRUG datasets. More notably, in the challenging conditional molecular generation task, our framework achieves an average 50\% performance improvement over state-of-the-art approaches, highlighting the superiority of conditioning on semantically rich geometric representations.
Furthermore, with such representation guidance, the number of diffusion steps can be reduced to as small as 100 while largely preserving the generation quality achieved with 1,000 steps, thereby significantly reducing the generation iterations needed. Code is available at \href{https://github.com/GraphPKU/GeoRCG}{https://github.com/GraphPKU/GeoRCG}.
\end{abstract}

\section{Introduction}
Recent years have seen rapid development in generative modeling techniques for molecule generation~\citep{garcia2021n, hoogeboom2022equivariant, luo2022autoregressive, wu2022diffusion, xu2023geometric, le2023navigating, morehead2024geometry}, which have demonstrated great promise in accelerating scientific discoveries such as drug design~\citep{graves2020review}. By representing molecules as \textit{point clouds of chemical elements} embedded in Euclidean space (potentially with edges~\citep{vignac2023midi,irwin2024efficient}) and employing equivariant models such as EGNN~\citep{satorras2021n} as backbone architectures, these approaches ensure the \textit{O(3)- (or SO(3)-) invariance} of the modeled molecule probability and have shown significant progress in both unconditional and conditional molecule generation tasks.

Despite the advances, precisely modeling the molecular distribution $q(\mc M)$ still remains a challenge, with current models often falling short of satisfactory results. This is especially true in more practical scenarios where the goal is to capture the conditional distribution $q(\mc M | c)$ for conditional generation, with $c$ representing a desired property such as the HOMO-LUMO gap. In such cases, recent models still produce molecules with property errors \textit{significantly larger} than the data lower bound~\citep{hoogeboom2022equivariant, xu2023geometric}. This challenge arises in part because molecules are naturally supported in a lower-dimensional manifold~\citep{mislow2012introduction, de2022convergence, you2023latent}, yet they are embedded in a 3D space with much higher ambient dimensions ($N \times (3 + d)$, where $N$ is the number of atoms and $d$ the atom feature dimension). Consequently, directly learning these distributions without additional guidance or conditioning solely on a single property can result in substantial errors~\citep{ScoreSDE}, often leading to unstable or undesirable molecular samples.

\begin{figure*}[!htbp]
    \centering
    \includegraphics[width=\textwidth]{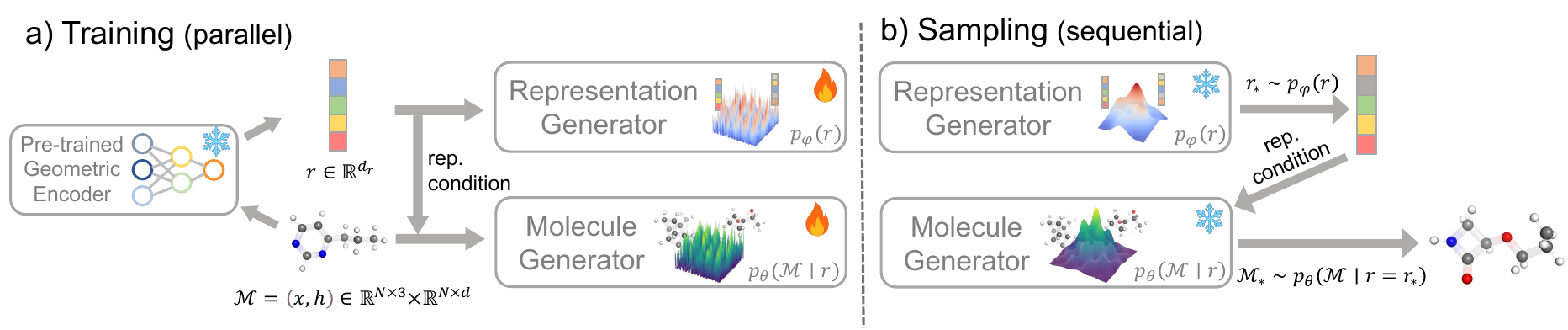}
    \caption{Training and sampling procedure of GeoRCG for unconditional molecule generation. a) During training, each molecule $\mc M$ is mapped into a representation $r$ by a pre-trained, frozen geometric encoder $E$. The representation distribution is then learned by a lightweight representation generator. The molecule generator is trained in a self-conditioned manner, generating a molecule $\mc M$ conditioned on its own representation $E(\mc M)$. b) During sampling, an informative representation is first generated, which subsequently guides the molecule generator to produce high-quality molecules.}
    \label{fig:train_and_sample}
\end{figure*}

In this work, we propose GeoRCG (\textbf{Geo}metric-\textbf{R}epresentation-\textbf{C}onditioned Molecule \textbf{G}eneration), a general framework for improving the generation quality of molecular generative models by leveraging \textit{geometric representation conditions} for both unconditional and conditional generation; see \Cref{fig:train_and_sample} for an overview of the framework. At a high level, rather than directly learning the extrinsic molecular distribution, we aim to first transform it into a more compact and semantically meaningful representation distribution, with the help of a well-pretrained geometric encoder $E$ such as Unimol~\citep{zhou2023uni} and Frad~\citep{feng2023fractional}. This distribution is much simpler because it does not exhibit any group symmetries, such as $O(3)/SO(3)$ and $S(N)$ groups which are present in extrinsic molecular distributions. As a result, a lightweight representation generator~\citep{li2023self} can effectively capture this simple distribution. In the second stage, we employ a conditional molecular generator to achieve the ultimate objective: molecular generation. Unlike conventional approaches, our molecular generator is directly informed by the first-stage geometric representation, which encapsulates crucial molecular structure and property information. This guidance enables the generation of high-quality molecular structures with improved fidelity. 

Our approach is directly inspired by RCG~\citep{li2023self}, which, however, focuses on image data with fixed sizes and positions and does not necessitate handling Euclidean and permutation symmetries---factors that are markedly different in molecular data. Compared to recent work GraphRCG~\citep{wang2024graphrcg} which applies the RCG framework to 2D graph data, we explicitly handle 3D geometry that is more complex due to the additional Euclidean symmetry. Moreover, we avoid the complicated step-wise bootstrapped training and sampling process proposed in~\citet{wang2024graphrcg} that requires noise alignment, sequential training, and simultaneous encoder training. Instead, we adopt a simple and intuitive framework that enables parallel training and leverages advanced pre-trained geometric encoders containing valuable external knowledge~\citep{zaidi2022pre, feng2023fractional}, thus achieving competitive results without complex training procedures. Notably, while \citet{li2023self} primarily focuses on empirical evaluation, we also provide \textit{generic theoretical characterizations} of the representation-conditioned diffusion model class for both unconditional and conditional generation, offering a rigorous understanding of the improved performance.

To illustrate the effectiveness of our approach, we select one of the \textit{simplest} and most classical equivariant generative models, EDM~\citep{hoogeboom2022equivariant}, as the base molecular generator of GeoRCG. \revised{For better performance on the more challenging dataset GEOM-DRUG, we also apply GeoRCG onto the recent state-of-the-art (SOTA) model SemlaFlow~\citep{irwin2024efficient}. }
Experimentally, our method achieves the following significant improvements:
\begin{itemize}[noitemsep, topsep=0pt, left=0.2cm]
\item Substantially \textbf{enhancing the quality} (e.g., molecule stability) of the generated molecules on the widely used QM9 and GEOM-DRUG datasets. On QM9, GeoRCG not only improves the performance of EDM by a large margin, but also significantly surpasses several recent baselines with advanced performance~\citep{wu2022diffusion, xu2023geometric, morehead2024geometry, song2024unified}. \revised{On GEOM-DRUG, GeoRCG also significantly improves EDM's performance, and consistently enhances SemlaFlow’s already SOTA results.}
\item More remarkably, in \textbf{conditional molecule generation tasks}, GeoRCG yields an average \textbf{50\% improvement} in performance (i.e., difference of generated molecule's property with conditions), while many contemporary models struggle to achieve even marginal gains.
\item By incorporating classifier-free guidance into the molecule generator~\citep{li2023self} and employing low-temperature sampling for representation generation~\citep{ingraham2023illuminating}, GeoRCG demonstrates a \textbf{flexible trade-off between molecular quality and diversity} on QM9 dataset without additional training, which is especially advantageous in specific molecular generation tasks that prioritize quality over diversity.
\item With the assistance of the representation guidance, GeoRCG significantly \textbf{reduces the number of diffusion steps} required by approximately 10x, while preserving the quality of molecular generation.

\end{itemize}

\section{Related Works}

\textbf{Molecular Generative Models.} Early work has primarily focused on modeling molecules as 2D graphs (composed of atom types, connections, and edge types), utilizing 2D graph generative models to learn the graph distribution~\citep{vignac2022digress, jang2023hierarchical, le2023navigating, jo2023graph, luo2023fast, latentgraphdiffusion}. However, since molecules inherently exist in 3D space where physical laws govern their behavior and spatial geometry provides critical information related to key properties, recent research has increasingly focused on leveraging 3D generative models to directly learn the \textit{geometric} distribution by modeling molecules as point clouds of chemical elements. Notable early autoregressive models include G-SchNet~\citep{gebauer2019symmetry} and G-SphereNet~\citep{luo2022autoregressive}. More recently, diffusion models have demonstrated effectiveness in this domain, as evidenced by models like EDM~\citep{hoogeboom2022equivariant} and subsequent advancements that enhance EDM with latent space~\citep{xu2023geometric}, prior information~\citep{wu2022diffusion} and more powerful backbones~\citep{morehead2024geometry}. Furthermore, recent advances in flow methods~\citep{lipman2022flow, liu2022flow} have inspired the development of geometric, equivariant flow methods including EquiFM~\citep{song2024equivariant} and GOAT~\citep{hong2024fast}, which enable much faster molecule generation speed. Beyond these, there are also methods that jointly model 2D and 3D information~\citep{vignac2023midi, you2023latent, huang2024learning, irwin2024efficient} (also called 3D graph~\citep{you2023latent}), where representative methods include MiDi~\citep{vignac2023midi} and SemlaFlow~\citep{irwin2024efficient} that jointly learn atom types, bond types, formal charges and coordinates.

\paragraph{Pre-training for Molecular Encoders} Learning meaningful molecular representations is crucial for downstream tasks like molecular property prediction~\citep{fang2022geometry}. The strategy of pre-training on large-scale datasets followed by fine-tuning on smaller, task-specific datasets has been proven to significantly improve model performance in vision and language domains~\citep{kenton2019bert, brown2020language, dosovitskiy2020image}. Building on this success, recent studies have explored pre-training methods for molecular data, aiming to achieve similar performance improvements~\citep{zhou2023uni, feng2023fractional, liu2022molecular, fang2022geometry, jiao2024equivariant, ni2024pre}. Common pretext tasks involve masking and recovering atom types, bond lengths, or bond angles~\citep{fang2022geometry, zhou2023uni}. However, since molecules exist in continuous 3D space, a more effective approach is introduced by adding carefully crafted noise into the molecular coordinates and training the model to denoise it. Examples of such noise types include isotropic Gaussian noise~\citep{zaidi2022pre, zhou2023uni}, Riemann-Gaussian noise~\citep{jiao2023energy}, and complex hybrid noise~\citep{ni2024pre, feng2023fractional, jiao2024equivariant}. Notably,~\citet{zaidi2022pre} showed that denoising equilibrium structures effectively corresponds to learning the underlying force field, thereby producing molecular representations that are physically and chemically informative.

\textbf{Latent Generative Models.} At a high level, our framework can also be viewed as a latent generative model, where data distributions are learned in a latent space (our stage 1) and decoded back through some decoder (our stage 2). Most prior work in this domain either focuses on regular data forms (e.g., images) with fixed positions and sizes~\citep{van2017neural, razavi2019generating, dai2019diagnosing, aneja2021contrastive, rombach2022high, li2023self}, or on graph data without Euclidean symmetry and requires explicit modeling~\citep{wang2024graphrcg}. Molecular data, however, presents unique challenges in both aspects. One of the key issues in this context is how to define the latent space---defining it as ``latent coordinates and features'' as in GeoLDM~\citep{xu2023geometric} still results in a geometrically structured and thus complex space, while defining it on representations as we do introduces the challenge of effectively ``decoding'' a global, non-symmetric embedding back into geometric objects. LGD~\citep{latentgraphdiffusion} trains a diffusion model on a unified Euclidean latent space obtained by jointly training a powerful encoder and a simple decoder, and performs both generation and prediction tasks focusing on 2D graphs. LDM-3DG~\citep{you2023latent} adopts representation latent space but employs a cascaded (2D+3D) auto-encoder (AE) framework, where the decoder is designed (or trained) to be \textit{deterministic}, rendering poor performance on the 3D part as evidenced in our experiments. In contrast, we model the decoder as a powerful \textit{generative model}, focusing solely on geometric learning while demonstrating superior effectiveness. 

\section{Methods}

\subsection{Preliminaries}

In this work, we represent molecules as \textit{point clouds of chemical elements} in 3D space, denoted by $\mc{M} = (\mb x, \mb h)$, where $\mb x = (\mb x_1, \dots, \mb x_N)^\top \in \mbb R^{N\times 3}$ represents the atomic coordinates of $N$ atoms, and $\mb h = (\mb h_1, \dots, \mb h_N)^\top \in \mbb R^{N\times d}$ captures the node features of dimension $d$, such as atomic numbers and charges. This formulation follows the approach of~\citet{hoogeboom2022equivariant, xu2023geometric, morehead2024geometry} and is widely adopted in molecular representation learning~\citep{thomas2018tensor, li2024distance, zaidi2022pre}, facilitating the integration of pre-trained molecular encoders~\citep{zaidi2022pre, feng2023fractional}. After generating point clouds of chemical elements, these methods infer bond types using lookup tables based on atom types and pairwise distances, or relying on advanced packages like OpenBabel~\citep{o2011open}. \revised{Notably, approaches like MiDi~\citep{vignac2023midi} and SemlaFlow~\citep{irwin2024efficient} additionally represent molecules with explicit bond types, enabling joint learning and generating of 2D and 3D information, which typically results in improved performance.} We use $q$ to denote the underlying data distribution, such as molecule distributions $q(\mc M)$, and $p$ to denote the approximated distributions captured by parametric models.

We denote the pre-trained geometric encoder as $E:\bigcup_{N=1}^{+\infty} ({\mbb{R}^{N\times 3}\times \mbb{R}^{N\times d}}) \to \mbb{R}^{d_r}$, which embeds a molecule $\mc M$ with an arbitrary number of nodes $N$ into a representation vector $r$ of fixed dimension $d_r$. The geometric encoder exhibits \textit{E(3)- (or SE(3)-) invariance}, suggesting that $E(\mc M) = E(\mb x, \mb h) = E(\mb x\mb R^T + \mb t, \mb h)$ for any $\mb t \in \mbb R^3$ and $\mb R \in O(3)$ (or $SO(3)$), where $O(3)$ is the set of orthogonal matrices (and $SO(3)$ being the set of special orthogonal matrices).

\subsection{GeoRCG: Geometric-Representation-Conditioned Molecular Generation}
\label{sec:method}

\textbf{Geometric Representation Generator.} To improve the quality of the generated molecules, we propose to first
transform the geometrically structured molecular distribution $q(\mc M)$ into a non-geometric representation distribution $q(r)$ using a well-pretrained geometric encoder $E$ that maps each molecule $\mc M$ to its representation $r$. Learning the representation distribution $q(r)$ is considerably easier, since representations do not exhibit any symmetry as in explicit molecular generative models~\citep{hoogeboom2022equivariant}. We thus leverage a simple yet effective MLP-based diffusion architecture as proposed in~\citep{li2023self} for the representation generator $\repgen$, which follows DDIM schemes~\citep{song2020denoising} for training and adopts predictor-corrector frameworks for sampling~\citep{song2020score}.

\begin{figure}[htbp]
    \centering
    \begin{subfigure}{0.5\columnwidth}
        \centering
        \includegraphics[width=0.9\columnwidth]{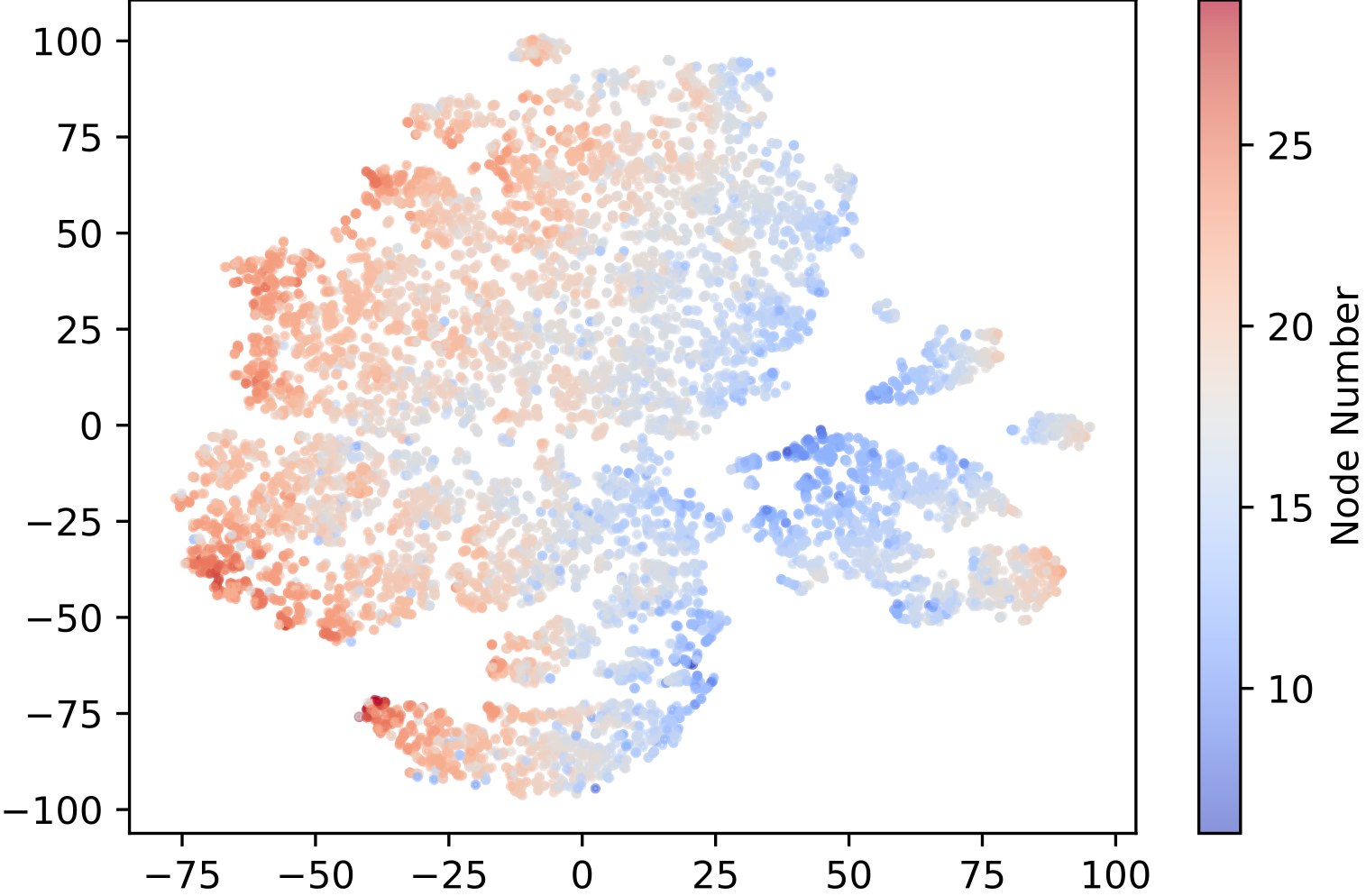}
    \end{subfigure}%
    \hfill
    \begin{subfigure}{0.5\columnwidth}
        \centering
        \includegraphics[width=0.9\columnwidth]{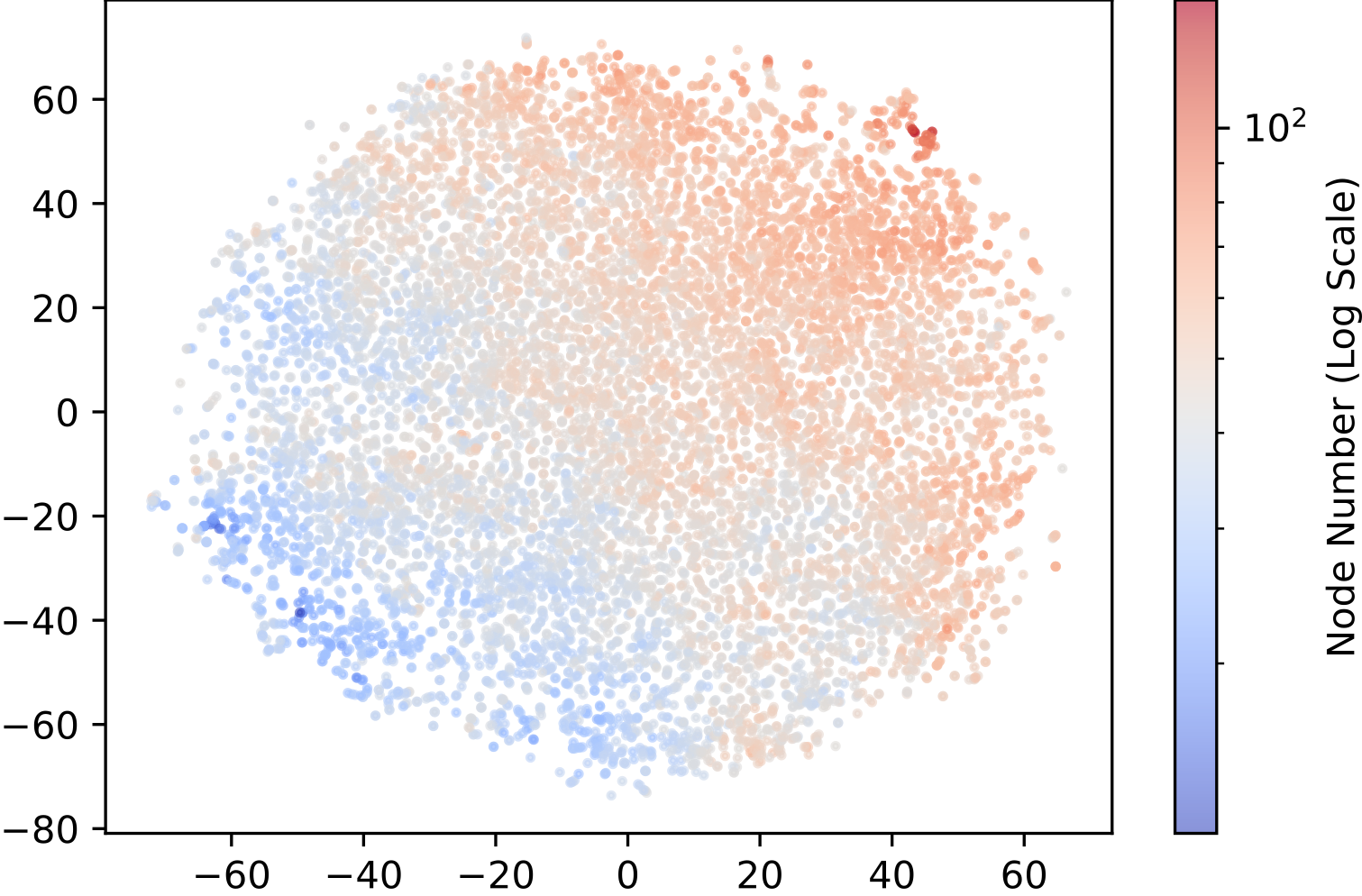}
    \end{subfigure}
\caption{t-SNE visualizations of the representations produced by Frad~\citep{feng2023fractional} for the QM9 dataset (left) and by Unimol~\citep{zhou2023uni} for the GEOM-DRUG dataset (right). The representations exhibit clear clustering based on node count.}
\label{fig:data_rep_clustering}
\end{figure}

One additional design compared to previous practices~\citep{li2023self, wang2024graphrcg} is that we condition the representation generator on the molecule’s node number \( N \) by default\footnote{We omit the condition $N$ in our probability decompositions and mathematical derivations for statement simplicity, as its inclusion does not affect the overall framework and conclusions.}. This is crucial to ensuring \textit{consistency} between the size of the representation's underlying molecule and the size of the molecule it guides to generate. Moreover, molecules with different sizes often have distinct modes in structures and properties~\citep{hoogeboom2022equivariant}, which is reflected in their geometric representations learned by modern pre-trained geometric encoders~\citep{zhou2023uni, feng2023fractional}, as shown in~\Cref{fig:data_rep_clustering}. From the figures, it is evident that by conditioning on \( N \), the learning process for the representation generator becomes simpler and more effective, leading to the following loss function of our representation generator:
\begin{equation}
\label{eq:loss_rep}
        \mc L_{\text{rep}} = \mbb E_{(r, N) \in \repdataset, \epsilon \sim \mc N(0,I), t\sim \mathcal{U}(0, T)} \left[ ||r - f_\varphi(r_t; t, N)||^2 \right],
\end{equation}
where \( \repdataset = \{ (E(\mathcal{M}), N(\mc M)) | \mathcal{M} \in \moldataset \} \), with \( N(\mc M) \) representing atom number of \( \mc M \) and \( \moldataset \) denoting the molecule dataset. Here, \( f_\varphi \) is the MLP backbone~\citep{li2023self}, and \( r_t = \sqrt{\alpha_t}r + \sqrt{1 - \alpha_t} \epsilon \) is the noisy representation computed with the predefined schedule \( \alpha_t \in (0, 1] \).

\textbf{Molecule Generator.} Since the ultimate goal of our framework is to generate molecules from $q(\mc M)$, we decompose the molecular distribution as $q(\mc{M}) = \int q(\mc{M} | r) q(r) \, {\rm d}r
$ to explicitly enable geometric-representation conditions. Consequently, a geometric-representation-conditioned molecular generator $\molgen$ is required. In principle, we can use many modern molecule generators~\citep{hoogeboom2022equivariant, xu2023geometric, morehead2024geometry, irwin2024efficient}, as these models can all take additional conditions. 

To illustrate the effectiveness of our approach, we choose a relatively simple model EDM~\citep{hoogeboom2022equivariant} as the base generator \revised{and \textit{primarily demonstrate our method with it}. Furthermore, we showcase the generality of our approach by adapting it to a recent flow-matching based SOTA model, SemlaFlow~\citep{irwin2024efficient}, emphasizing its ability to consistently improve SOTA models' performance.}

EDM is designed to ensure the $O(3)$-invariance
, i.e., for any $\mb R \in O(3)$, $p_\theta(\mc M) = p_\theta(\mb x, \mb h )=p_\theta(\mb x\mb R^T, \mb h)$. To accommodate EDM to representation conditions, we use the following training objective: 
\begin{equation}
\label{eq:loss_mol}
\mathcal{L}_{\text {mol}} = \mathbb{E}_{(\mc M, r) \sim \molrepdataset, t \sim \mathcal{U}(0, T), \epsilon \sim \hat{\mathcal{N}}(0, \mathbf{I})} \left[ || \epsilon - f_\theta(\mc M_t; t, r) ||^2 \right],
\end{equation}
where $\molrepdataset= \{(\mathcal{M}, E(\mathcal{M})) | \mathcal{M} \in \moldataset\}$, and sampling from $\hat{\mathcal{N}}(0, \mathbf{I})$ entails drawing $\epsilon_0 = [\epsilon_0^{(x)}, \epsilon_0^{(h)}]$ from $\mathcal{N}(0, \mathbf{I})$, adjusting $\epsilon_0^{(x)}$ by subtracting its geometric center to obtain $\epsilon^{(x)}$, and setting $\epsilon = [\epsilon^{(x)}, \epsilon_{0}^{(h)}]$. This ensures the zero center-of-mass property, as the distribution is defined on this subspace to ensure translation invariance~\citep{hoogeboom2022equivariant}. The noisy molecule is given by $\mc M_t = \alpha_t^{(\mc M)} [\mathbf{x}, \mathbf{h}] + \sigma_t^{(\mc M)} \epsilon$, with time-dependent schedules $\alpha_t^{(\mc M)}$ and $\sigma_t^{(\mc M)}$, while the diffusion backbone $f_\theta$, which is instantiated with EGNN~\citep{satorras2021n}, is conditioned on $r$.

\textbf{Combining the Two Generators Together.} The representation generator $\repgen$ and the molecule generator $\molgen$ together model the molecular distribution $p_{\varphi,\theta}(\mc M) := \int  \molgen \repgen \, {\rm d}r$, which approximates the data distribution $q(\mc M) = \int q(\mc M | r) q(r)\, {\rm d}r$ that we aim to capture. One notable advantage of the framework is that the decomposition enables \textbf{parallel training} of the two generators. The entire training and sampling procedure is summarized in~\Cref{alg:train_and_sample}. 

\paragraph{Theoretical Analysis of GeoRCG.}
There are several key properties of GeoRCG that facilitate high-quality molecule generation. First, GeoRCG preserves symmetry properties of the base molecule generator $p_\theta(\mc M)$:
\begin{proposition}
\label{remark:symmetry preserving}
\textbf{(Symmetry Preservation)} Assume the original molecular generator $p_{\theta} (\mc M)$ is O(3)- or SO(3)-invariant. Then, the two-stage generator $p_{\varphi, \theta}(\mc M)$ is also O(3)- or SO(3)-invariant.
\end{proposition}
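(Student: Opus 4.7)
The plan is to reduce the statement to two ingredients: the O(3)/SO(3)-invariance of the conditional molecule generator $p_\theta(\mc M \mid r)$ in its first argument for each fixed $r$, together with linearity of integration over $r$. Since the representation generator produces a density $p_\varphi(r)$ on $\mbb R^{d_r}$ that makes no reference to a molecular coordinate frame, the measure $p_\varphi(r)\,dr$ is unaffected by any rotation applied to $\mc M$, so the question reduces to whether the conditional $p_\theta(\mc M \mid r)$ is itself invariant.

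First, I would argue invariance of the conditional by revisiting the mechanism that makes the base EDM's $p_\theta(\mc M)$ invariant: an equivariant EGNN backbone together with the rotation-invariant noise distribution $\hat{\mc N}(0,\mb I)$ on the zero-center-of-mass subspace. When the backbone is additionally fed the representation $r$, the input $r$ enters only as an invariant scalar conditioning vector that is not acted on by any $\mb R\in O(3)$ — a point justified both by the E(3)-invariance of the pre-trained encoder $E$ (so that the training targets $r=E(\mc M)$ are themselves invariant) and by the fact that $r$ is injected into $f_\theta$ via invariant feature channels. Consequently, every step of the equivariance argument for the unconditional case goes through verbatim, yielding
\[p_\theta(\mb x \mb R^\top, \mb h \mid r) = p_\theta(\mb x, \mb h \mid r) \quad \text{for every fixed } r.\]

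Second, I would push the rotation through the integral defining the two-stage density:
\[p_{\varphi,\theta}(\mb x \mb R^\top, \mb h) = \int p_\theta(\mb x \mb R^\top, \mb h \mid r)\, p_\varphi(r)\, dr = \int p_\theta(\mb x, \mb h \mid r)\, p_\varphi(r)\, dr = p_{\varphi,\theta}(\mb x, \mb h),\]
which establishes O(3)-invariance. The SO(3) case is identical upon restricting $\mb R$ to orientation-preserving rotations.

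The main obstacle is conceptual rather than computational: the hypothesis is phrased in terms of the unconditional $p_\theta(\mc M)$, whereas the two-stage density depends on the conditional $p_\theta(\mc M\mid r)$. The content of the proof therefore lies in carefully justifying that conditioning on an invariant feature $r$ preserves rotational invariance of the base generator; once this is in hand, the remainder is a one-line application of Fubini and the group-invariance of $p_\varphi(r)\,dr$.
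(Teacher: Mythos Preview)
Your proposal is correct and follows essentially the same approach as the paper: both push the rotation through the integral $\int p_\theta(\mc M\mid r)\,p_\varphi(r)\,{\rm d}r$ after asserting that the conditional $p_\theta(\mc M\mid r)$ inherits the $O(3)/SO(3)$-invariance of the base generator since $r$ enters only as a non-symmetric condition. You are in fact more careful than the paper in spelling out \emph{why} conditioning on an invariant feature $r$ preserves invariance (the paper simply states that the symmetry ``remains valid when additional non-symmetric conditions $r$ are applied''), which is exactly the right place to focus attention.
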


\textit{Proof.} This result follows directly from the definition. Specifically, $p_{\varphi, \theta}(\mc M) = \int p_{\theta}(\mc M | r) \, \repgen \, {\rm d}r = \int p_{\theta}(\mb x\mb R^T , \mb h | r) \, \repgen \, {\rm d}r = p_{\varphi, \theta}(\mb x\mb R^T , \mb h)
$ for any $\mb R \in O(3)$ (or $SO(3)$). The second equality holds due to the symmetric property of $p_{\theta}(\mc M)$, which remains valid when additional non-symmetric conditions $r$ are applied.

Moreover, representation-conditioned diffusion models can achieve no higher overall total variation distance than traditional diffusion models, and can arguably yield better results, as the representation encodes key data information that may further reduce estimation error. We present the rigorous bound in~\Cref{theorem_error_fine_grained_main_text}, and provide corresponding proof and detailed discussions in \Cref{subsec:theoryunconditional}. Remarkably, this is a \textit{generic theoretical characterization} that applies to prior \textit{experimental} work~\citep{li2023self}. \revised{For models that account for equivariant symmetries such as EDM, we build upon results from~\citep{feng2024unigem, you2023latent} to establish finer-grained bounds, as detailed in~\Cref{theorem_error_fine_grained_equivariant}.}

\begin{theorem}\label{theorem_error_fine_grained_main_text}
Consider the random variable $x \in \mathbb{R}^{N (d+3)} \sim q(x)$, and assume that the second moment $m_x$ of $x$ is bounded as $m_x^2 := \mathbb{E}_{q(x)}[\|x - \bar{x}\|^2] < \infty$, where $\bar{x} := \mathbb{E}_{q(x)}[x]$. Further, assume that the score $\nabla \ln q(x_t)$ is $L_x$-Lipschitz for all $t$, and that the score estimation error in the second-stage diffusion is bounded by $\epsilon_{\varphi, \theta, \text{cond}}$ such that $\mathbb{E}_{r \sim p_\varphi(r),\, x_t \sim q_t(x_t|r)}[\|s_\theta(x_t, t, r) - \nabla \ln q_t(x_t|r)\|^2] \leq \epsilon_{\varphi, \theta, \text{cond}}^2$. Denote the step size as $h := T/N_d$, where $T$ is the total diffusion time and $N_d$ is the number of discretization steps, and assume that $h \preceq 1/L_x$. Suppose that we sample $x \sim p_\theta(x|r)$ from Gaussian noise, where $r \sim p_\varphi(r)$, and denote the final distribution of $x$ as $p_{\theta,\varphi}(x)$. Define $p_0^{q_{T|\varphi}}$, which is the ending point of the reverse process starting from $q_{T|\varphi}$ instead of Gaussian noise. Here, $q_{T|\varphi}$ is the $T$-th step in the forward process starting from $q_{0|\varphi} := \frac{1}{A} \int_{r} q(x_0|r) p_\varphi(r)\, {\rm d}r$, where $A$ is the normalization factor. Denote the $k$-dim isotropic Gaussian distribution as $\gamma^k$. Then the following holds,
\begin{align}
{\rm TV}(p_{\theta,\varphi}(x), q(x))
\preceq
& \underbrace{\sqrt{{\rm KL}(q_{0|\varphi}||\gamma^{N(d+3)})}\exp(-T)}_{\text{convergence of forward process}} \\
+ & \underbrace{(L_x\sqrt{N(d+3)h}+L_xm_x h)\sqrt{T}}_{\text{discretization error}} \\
+ & \underbrace{\epsilon_{\varphi, \theta, \rm cond} \sqrt{T}}_{\text{conditional score estimation error}} \\
+ & \underbrace{{\rm TV}(q_{0|\varphi}, q_0)}_{\text{representation generation error}}
\end{align}
\end{theorem}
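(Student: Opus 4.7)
The plan is to insert two intermediate distributions and apply the triangle inequality for total variation, obtaining
\[
{\rm TV}(p_{\theta,\varphi}, q) \leq {\rm TV}(p_{\theta,\varphi}, p_0^{q_{T|\varphi}}) + {\rm TV}(p_0^{q_{T|\varphi}}, q_{0|\varphi}) + {\rm TV}(q_{0|\varphi}, q).
\]
Here $p_0^{q_{T|\varphi}}$ is understood as the output of the \emph{same} learned, discretized reverse process used to produce $p_{\theta,\varphi}$, but initialized from $q_{T|\varphi}$ instead of pure Gaussian noise (with $r$ still drawn from $p_\varphi$). The third term is precisely the representation generation error appearing in the statement, so it requires no further work.

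\textbf{Starting-distribution mismatch.} For ${\rm TV}(p_{\theta,\varphi}, p_0^{q_{T|\varphi}})$, both distributions arise by pushing forward different initial distributions ($\gamma^{N(d+3)}$ vs.\ $q_{T|\varphi}$) through the same Markov kernel, so the data processing inequality yields ${\rm TV}(p_{\theta,\varphi}, p_0^{q_{T|\varphi}}) \leq {\rm TV}(\gamma^{N(d+3)}, q_{T|\varphi})$. Applying Pinsker together with the standard exponential contraction of KL along the OU forward process gives ${\rm TV}(\gamma^{N(d+3)}, q_{T|\varphi}) \preceq \sqrt{{\rm KL}(q_{0|\varphi} \| \gamma^{N(d+3)})}\,\exp(-T)$, which matches the ``convergence of forward process'' term.

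\textbf{Discretization and conditional score estimation.} For ${\rm TV}(p_0^{q_{T|\varphi}}, q_{0|\varphi})$, both processes are initialized at $q_{T|\varphi}$, but the former uses the learned score $s_\theta(x_t, t, r)$ and time discretization with step $h$, whereas the latter is the exact endpoint of the continuous reverse SDE driven by $\nabla \ln q_t(x_t|r)$. Conditioning on $r$, I would apply the standard Girsanov-based convergence analysis for DDPM (in the form of Chen et al.\ 2023) which, under the $L_x$-Lipschitz score and bounded second moment $m_x$ assumptions, yields a discretization error of order $(L_x\sqrt{N(d+3)h} + L_x m_x h)\sqrt{T}$ and a conditional score-approximation error of order $(\mathbb{E}_r\|s_\theta(\cdot,\cdot,r) - \nabla \ln q_t(\cdot|r)\|_{L^2}^2)^{1/2}\sqrt{T}$. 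Integrating over $r \sim p_\varphi(r)$ and invoking Jensen's inequality converts the latter into $\epsilon_{\varphi,\theta,\text{cond}}\sqrt{T}$, exactly matching the remaining two terms.

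\textbf{Main obstacle.} The delicate step is cleanly threading the conditioning on $r$ through the classical unconditional convergence argument: one must either run Girsanov conditionally on $r$ and then integrate against $p_\varphi(r)$, or work once on the joint $(x,r)$ path space. Care is needed because the Lipschitz and second-moment hypotheses are stated for the unconditional law $q(x)$, so I would verify they transfer to the conditionals $q(x|r)$ for $p_\varphi$-almost every $r$ (or reformulate them as $p_\varphi$-averaged conditional assumptions) without inflating constants, and check that Jensen's inequality is applied in the right direction to recover precisely the averaged $\epsilon_{\varphi,\theta,\text{cond}}$ in the bound. Once this is handled, summing the three pieces yields the stated four-term decomposition.
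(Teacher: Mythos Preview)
Your proposal is correct and follows essentially the same route as the paper: the paper decomposes ${\rm TV}(p_{\theta,\varphi}, q)$ via the identical three-term triangle inequality with intermediate point $p_0^{q_{T|\varphi}}$ and $q_{0|\varphi}$, then handles each piece exactly as you outline (data processing plus OU contraction for the first, Girsanov-type analysis inherited from the proof of the preceding lemma for the second, and the third left as-is). Your identification of the main obstacle---threading the conditioning on $r$ through a convergence argument whose hypotheses are stated unconditionally---is apt, and the paper itself does not elaborate on this point beyond saying the argument proceeds ``analogously.''
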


\textbf{Balancing Quality and Diversity of Molecule Generation.} In many scientific applications, researchers prioritize generating higher-quality molecules over more diverse ones. To facilitate this, we introduce a feature that allows fine-grained control over the trade-off between diversity and quality \textit{in the sampling stage (thus without retraining)}. This is achieved by integrating two key techniques: low-temperature sampling~\citep{ingraham2023illuminating} (controlled via the temperature $\mc T$) for the representation generator, and classifier-free guidance~\citep{ho2022classifier, zheng2023guided} (controlled via the coefficient $w$) for the molecule generator. We provide more details about the two techniques in~\Cref{sec:alg}. The combination of the two techniques enables flexible and explicit control, which we refer to as ``Balancing Controllability" and demonstrate its effectiveness in~\Cref{sec:unconditional_exp}.

\textbf{Handling Conditional Molecule Generation.} The framework discussed thus far focuses on unconditional molecule generation, where no specific property \( c \) (e.g., HOMO energy) is prespecified. However, for molecule generation, a more practical and desired scenario is conditional (also called controllable) generation, where additional conditions $c$, such as the HOMO-LUMO gap energy, are introduced, and our objective shifts to generating molecules from the distribution \( q(\mathcal{M} | c) \). In GeoRCG, this conditional generation is naturally decomposed as \( p_{\theta, \varphi}(\mathcal{M} | c) := \int  p_\theta(\mathcal{M} | r) p_\varphi(r | c) \, {\rm d}r\) , suggesting that we first generate a ``property-meaningful'' molecular representation \( r \), which is then \textit{independently} used to condition the second-stage molecule generation; see \Cref{fig:retrain} for an illustration. A key advantage of this modeling approach is that, when different properties (e.g., HOMO, LUMO, GAP energy) need to be captured, \textbf{only the representation generator requires retraining} under the new conditions. This retraining is highly efficient due to the lightweight nature of the representation generator. Notably, GeoRCG demonstrates outstanding conditional generation performance, as shown in~\Cref{sec:conditional_exp}. Moreover, we theoretically demonstrate that, under mild assumptions, the representation generator can provably estimate the conditional distribution and generate representations that lead to provable reward improvements toward the target, which subsequently benefits the second-stage generation. Further theoretical details are provided in~\Cref{subsec:theory_conditional}.

\begin{figure}[htbp]
    \centering
    \includegraphics[width=\columnwidth]{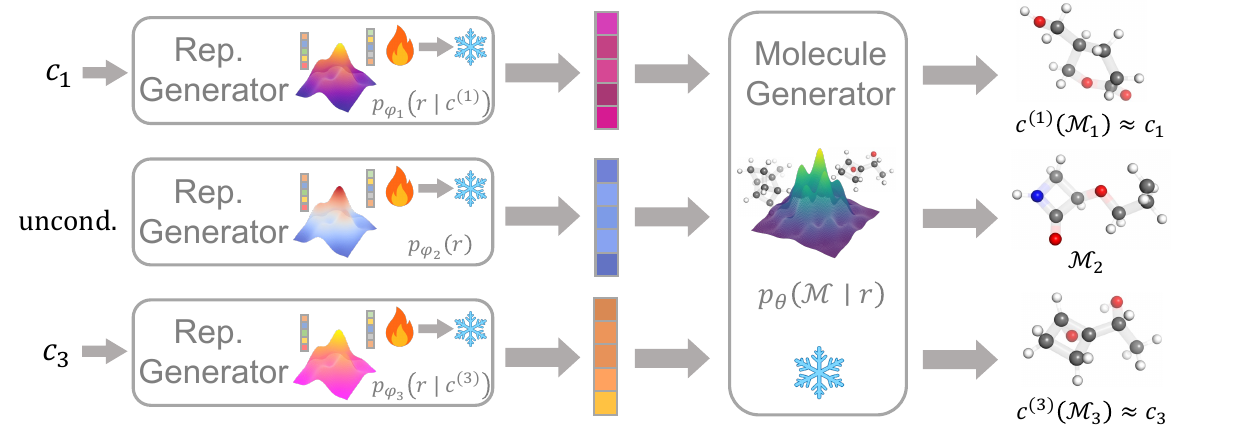}
    \caption{A single molecule generator can be employed for both unconditional and conditional molecule generation with respect to various properties. For conditional generation, only the representation generator is re-trained on (molecule, property) pairs, allowing it to conditionally sample property-meaningful representations during the sampling stage.}
    \label{fig:retrain}
\end{figure}



\begin{table*}[t]
\centering
\caption{Unconditional molecule generation on QM9 and GEOM-DRUG. The \textcolor{gray!70}{gray cells} denotes the base molecule generator employed in GeoRCG.}
\label{tab:unconditional_main}
\resizebox{\textwidth}{!}{
\begin{tabular}{l|cccc|cc}
\hline
 & \multicolumn{4}{c|}{QM9} & \multicolumn{2}{c}{DRUG} \\
\hline
\diagbox{Methods}{Metrics} & Atom Sta (\%) $\uparrow$       & Mol Sta (\%) $\uparrow$        & Valid (\%)  $\uparrow$         & Valid \& Unique (\%) $\uparrow$  & Atom Sta (\%) $\uparrow$ & Valid (\%)  $\uparrow$ \\
\hline
Data                          & 99                   & 95.2                 & 97.7                 & 97.7            & 86.5 & 99.9     \\
\hline
G-Schnet                      & 95.7                 & 68.1                 & 85.5                 & 80.3          & - & -       \\
\hline
GDM                           & 97                   & 63.2                 & -                    & -           & 75 & 90.8         \\
GDM-AUG                       & 97.6                 & 71.6                 & 90.4                 & 89.5        & 77.7 & 91.8         \\
GraphLDM                      & 97.2                 & 70.5                 & 83.6                 & 82.7        & 76.2 & 97.2         \\
GraphLDM-AUG                  & 97.9                 & 78.7                 & 90.5                 & 89.5         & 79.6 & 98       \\
\hline
\cellcolor{gray!30}EDM       & \cellcolor{gray!30}98.7    & \cellcolor{gray!30}82     & \cellcolor{gray!30}91.9    & \cellcolor{gray!30}90.7    & \cellcolor{gray!30}81.3 & \cellcolor{gray!30}92.6       \\
EDM-Bridge                    & 98.8                 & 84.6                 & 92                   & 90.7          & 82.4 & 92.8       \\
GeoLDM                        & 98.9(0.1)            & 89.4(0.5)            & 93.8(0.4)            & 92.7(0.5)     & 84.4 & \textbf{99.3}       \\
GCDM                          & 98.7(0.0)            & 85.7(0.4)            & 94.8(0.2)            & \ul{93.3(0.0)}     & \textbf{89} & 95.5       \\
\hline
ENF                           & 85                   & 4.9                  & 40.2                 & 39.4           & - & -      \\
EquiFM                        & 98.9(0.1)            & 88.3(0.3)            & 94.7(0.4)            & \textbf{93.5(0.3)}       & 84.1&	\ul{98.9}    \\
GOAT                          & 98.4                 & 84.1                 & 90.9                 & 89.99          & 81.8 & 96.0      \\
\hline
GeoBFN                   & \ul{99.08(0.03)}          & \ul{90.87(0.1)}           & \ul{95.31(0.1)}           & 92.96(0.1)      & \ul{85.6} & 92.08     \\
\hline
GeoRCG (EDM)                    & \textbf{99.12(0.03)}\improve{0.43} & \textbf{92.32(0.06)}\improve{12.59}& \textbf{96.52(0.2)}\improve{5.03} & 92.45(0.2)\improve{1.93}       & 84.3(0.12)\improve{3.69} & 98.5(0.12)\improve{6.37}   \\
\hline
\end{tabular}
}
\end{table*}

\section{Experiments}

\subsection{Experiment Setup}
\label{sec:exp_setup}

\textbf{Datasets and Tasks.} As a method for 3D molecule generation, we evaluate GeoRCG on the widely used datasets QM9~\citep{ramakrishnan2014quantum} and GEOM-DRUG~\citep{gebauer2019symmetry, gebauer2022inverse, axelrod2022geom}. We focus on two tasks: unconditional molecule generation, where the goal is to sample from $q(\mc M)$, and conditional (or controllable) molecule generation, where a property $c$ is given, and we aim to sample from $q(\mc M | c)$.

We use ``GeoRCG (EDM)'' to denote the variant of GeoRCG that employs EDM as the base molecule generator, ``GeoRCG (Semla)'' to refer to its application built upon SemlaFlow~\citep{irwin2024efficient}, and ``GeoRCG'' when the context is clear or for general purpose. To ensure fair comparisons, we follow the dataset split and configurations exactly as in~\citet{anderson2019cormorant, hoogeboom2022equivariant, xu2023geometric}. Without further clarification, we \textbf{bold} the highest scores and \underline{underline} the second-highest one. Additionally, to highlight the direct improvement over the base model, we display \textcolor{green!60!black}{green} numbers next to the score to indicate the improvement, and \textcolor{red!60!black}{red} numbers to denote a decrease. Without further clarification, results are calculated based on 10k randomly sampled molecules, averaged over three runs, with standard errors reported in parentheses.

\textbf{Instantiation of the Pre-trained Encoder.} We employ Frad~\citep{feng2023fractional}, which was pre-trained on the PCQM4Mv2 dataset~\citep{nakata2017pubchemqc} using a hybrid noise denoising objective, as the geometric encoder for QM9 dataset. For GEOM-DRUG, we adopt Unimol~\citep{zhou2023uni} architecture but perform our own pretraining using the dataset from~\citep{zhou2023uni}, with GEOM-DRUG included as an additional pretraining dataset. This is because GEOM-DRUG contains unique chemical elements not found in PCQM4Mv2 or other commonly used pretraining datasets such as ZINC or ChemBL~\citep{li2021effective}. We note that, when using Frad~\citep{feng2023fractional} as the encoder, GeoRCG also leads to significant improvements on the GEOM-DRUG dataset, although with slightly lower performance compared to Unimol; see~\Cref{sec:additional_exps}.

\textbf{Baselines.} A direct comparison is made with our base molecule generators, EDM or SemlaFlow. For GeoRCG (EDM), we compare it against generative models that, like EDM, do not explicitly generate bonds but instead infer them based on bond lengths. Although this approach may be less effective in generating valid molecules, it is widely adopted and \textit{presents a greater challenge for generative models in learning 3D geometric distributions}---precisely where our geometric representation guidance offers the most significant improvement. These models include: (1) the non-equivariant counterparts of EDM and GeoLDM~\citep{xu2023geometric}, specifically GDM(-AUG)\citep{hoogeboom2022equivariant} and GraphLDM(-AUG)\citep{xu2023geometric}; (2) the autoregressive method G-SchNet~\citep{gebauer2019symmetry}; (3) advanced equivariant diffusion models such as GeoLDM~\citep{xu2023geometric}, EDM-Bridge~\citep{wu2022diffusion}, and GCDM~\citep{morehead2024geometry}; (4) fast equivariant flow-based methods like E-NF~\citep{garcia2021n}, EquiFM~\citep{song2024equivariant}, and GOAT~\citep{hong2024fast}; and (5) the recently introduced Bayesian-based method GeoBFN~\citep{song2024unified}. 

For GeoRCG (Semla), we compare it with recent advanced 2D\&3D methods that directly generate bonds to produce higher-quality samples similar to SemlaFlow, including MiDi~\citep{vignac2023midi} and EQGAT-diff~\citep{le2023navigating}.

Note that we intentionally separate the comparison between EDM-like 3D-only models and SemlaFlow-like 2D\&3D models, focusing on the improvements brought by GeoRCG to the base model. This is because combining the comparisons would be unfair, as 2D\&3D models additionally learn bond information, which reduces the complexity of generating valid molecules~\citep{morehead2024geometry}.

We provide further experiments, including \textbf{ablation studies on the pre-trained encoder}, in~\Cref{sec:additional_exps}.

\subsection{Unconditional Molecule Generation}
\label{sec:unconditional_exp}

We first evaluate the quality of unconditionally generated molecules from GeoRCG, with the commonly adopted validity and stability metrics for assessing molecules' quality~\citep{hoogeboom2022equivariant}. See~\Cref{sec:exp_details} for detailed descriptions of these metrics.

We present the main results of GeoRCG on the QM9 and GEOM-DRUG datasets in~\Cref{tab:unconditional_main} and~\Cref{tab:unconditional_semlaflow}. Below, we highlight the key findings: (i) \textbf{Improvement over the base model:} By leveraging geometric representations, GeoRCG significantly outperforms the base model, on both QM9 and GEOM-DRUG datasets. Notably, on QM9, GeoRCG (EDM) increases stable molecules from $82\%$ to $93.9\%$ and validity from $91.9\%$ to $97.4\%$, while also improving molecule uniqueness. (ii) \textbf{Superior performance compared to advanced methods:} GeoRCG (EDM) also surpasses included advanced models on the QM9 dataset. \revised{On the GEOM-DRUG dataset, GeoRCG (EDM) outperforms models such as EDM-Bridge and GOAT, and gets a high score in validity. Although GeoRCG (EDM) falls short of achieving the best performance, we attribute this to the relatively limited capabilities of EDM. To address this, we replace EDM with the recent SOTA flow-matching based model, SemlaFlow~\citep{irwin2024efficient}, as the base model on the GEOM-DRUG dataset, as shown in~\Cref{tab:unconditional_semlaflow}. As demonstrated, GeoRCG (Semla) consistently enhances SemlaFlow’s SOTA performance across all metrics on the GEOM-DRUG dataset.}

\begin{table}[htbp]
\centering
\caption{
Unconditional molecule generation on GEOM-DRUG for 2D\&3D methods. Molecule stability and validity are reported as percentages, while energy and strain energy are expressed in kcal·mol$^{-1}$. Results marked with $^*$ were reproduced in our own experiments.}
\label{tab:unconditional_semlaflow}
\resizebox{\columnwidth}{!}{
\begin{tabular}{l|ccc|cc}
\hline
Methods              & Atom Stab $\uparrow$            & Mol Stab $\uparrow$            & Valid $\uparrow$               & Energy $\downarrow$              & Strain $\downarrow$              \\
\hline
MiDi               & 99.8                 & 91.6                 & 77.8                 & -                    & -                    \\
EQGAT-diff         & 99.8(0.01)           & 93.4(0.21)           & 94.6(0.24)           & 148.8(0.9)          & 140.2(0.7)          \\
SemlaFlow$^*$          & \textbf{99.8(0.00)}           & 97.4(0.07)           & 94.4(0.17)           & 95.72(1.24)           & 56.42(1.07)            \\
\hline
GeoRCG (Semla) & \textbf{99.8(0.00)} & \textbf{97.6(0.00)} & \textbf{95.3(0.13)} & \textbf{88.6(1.03)} & \textbf{47.64(1.10)} \\
\hline
\end{tabular}
}
\end{table}

We proceed to investigate the ``\textbf{Balancing Controllablility}'' feature of GeoRCG introduced in~\Cref{sec:method}. To this end, we conducted a grid search by varying both $w$ and $\mc T$ on QM9 dataset for GeoRCG (EDM), as depicted in~\Cref{fig:balance_controlled} (see~\Cref{sec:additional_exps} for the extended figure that includes validity). The results indicate a clear trend: increasing $w$ and decreasing $\mc T$ improve validity and stability at the expense of uniqueness, allowing for fine-grained, flexible control over molecule generation. At its best, this approach achieves a molecule stability of 93.9\% and a validity of 97.42\%, approaching the dataset's upper bound, with a trade-off in lower validity\&uniqueness of 86.82\%.

\begin{figure}[htbp] 
\caption{Balance controllable generation on QM9 of GeoRCG (EDM). Increasing $w$ and decreasing $\mc T$  enhances stability, with the cost of a reduction in uniqueness.}
\label{fig:balance_controlled}
    \centering
    \begin{subfigure}{0.49\columnwidth}
        \centering
        \includegraphics[width=\columnwidth]{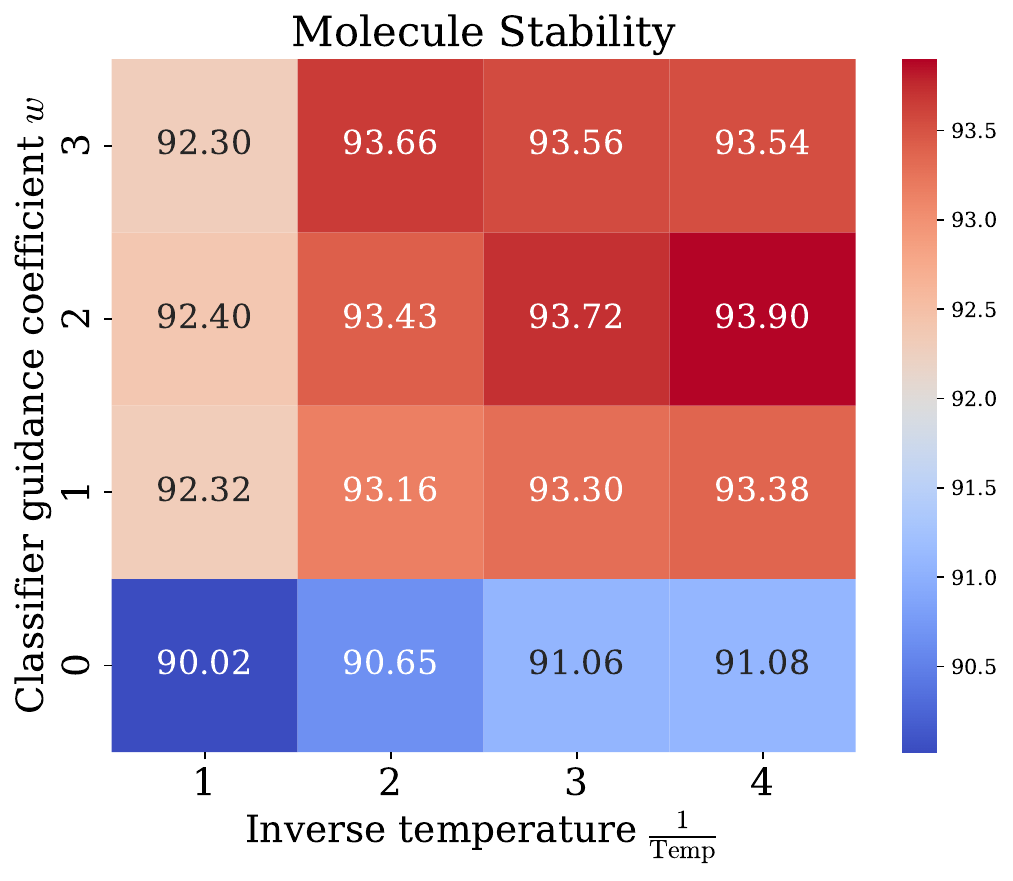}
    \end{subfigure}
    \begin{subfigure}{0.49\columnwidth}
        \centering
        \includegraphics[width=\columnwidth]{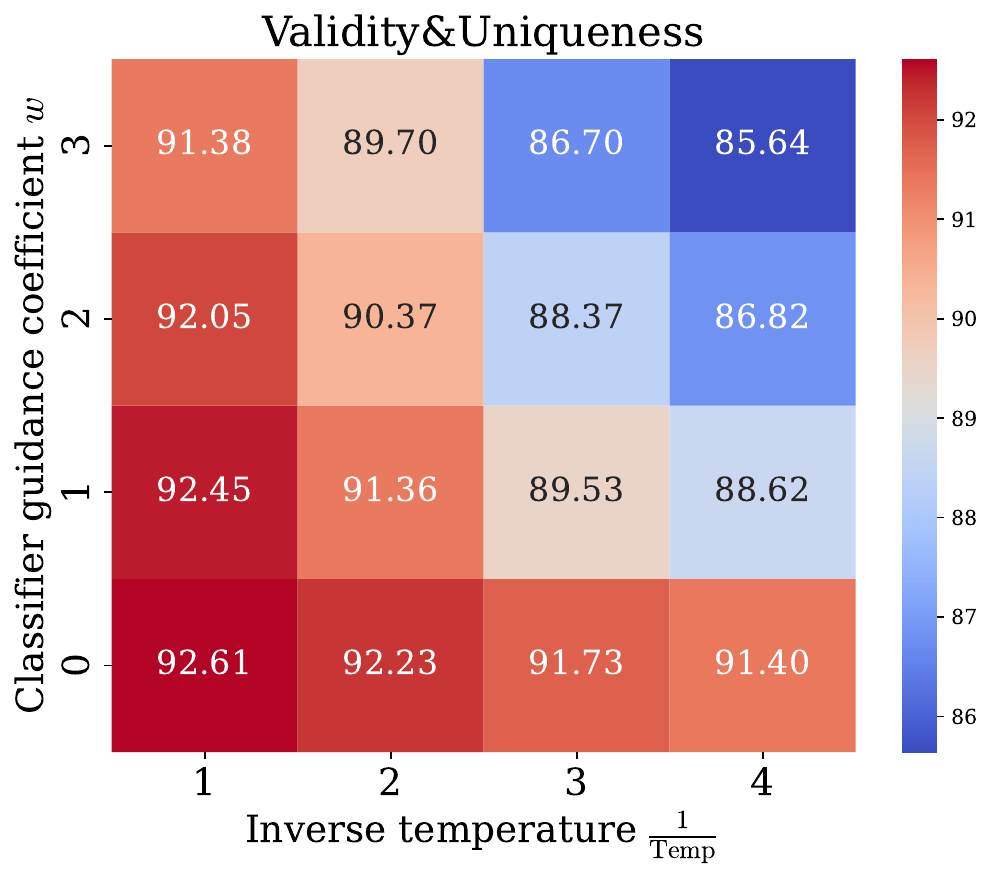}
    \end{subfigure}
\end{figure}

\begin{table*}[t]
\centering
\caption{Conditional molecule generation on QM9. The metric used is the MSE between the target property value and the classifier-predicted value. The \textcolor{gray!70}{gray cells} denote the baseline molecule generator employed in our proposed approach. Models marked with $*$ indicate results obtained from our own experiments; these are provided only as a coarse reference due to potentially differing evaluation criteria, see~\Cref{sec:exp_details} for details.}
\label{tab:conditional_main}
\resizebox{\textwidth}{!}{
\begin{tabular}{l|cccccc}
\hline
\diagbox{Methods}{Properties} & $\alpha$              & $\Delta \varepsilon$ & $\varepsilon_{\text{HOMO}}$ & $\varepsilon_{\text{LUMO}}$ & $\mu$                 & $C_v$                 \\
\hline
QM9 (lower bound)                  & 0.1                   & 64                   & 39                          & 36                          & 0.043                 & 0.04                  \\
\hline
Random               & 9.01                  & 1470                 & 646                         & 1457                        & 1.616                 & 6.857                 \\
N\_atoms             & 3.86                  & 866                  & 426                         & 813                         & 1.053                 & 1.971                 \\
\hline
\cellcolor{gray!30}EDM                  & \cellcolor{gray!30}2.76                  & \cellcolor{gray!30}655                  & \cellcolor{gray!30}356                         & \cellcolor{gray!30}584                         & \cellcolor{gray!30}1.111                 & \cellcolor{gray!30}1.101                 \\
GeoLDM               & 2.37                  & 587                  & 340                         & 522                         & 1.108                 & 1.025                 \\
GCDM                 & \ul{1.97}                  & 602                  & 344                         & 479                         & \ul{0.844}                 & \ul{0.689}                 \\
\hline
EquiFM               & 2.41                  & 591                  & 337                         & 530                         & 1.106                 & 1.033                 \\
GOAT                 & 2.74                  & 605                  & 350                         & 534                         & 1.01                  & 0.883                 \\
\hline
LDM-3DG$^*$              &   12.29       &     1160        &        583         &       1093            &    1.42         &    5.74        \\
\hline
GeoBFN               & 2.34                  & \ul{577}                  & \ul{328}                         & \ul{516}                         & 0.998                 & 0.949                 \\
\hline
GeoRCG (EDM)          & \textbf{0.86(0.01)}\improve{68.84} & \textbf{325.2(3.4)}\improve{50.35} & \textbf{202.2(1.2)}\improve{43.20} & \textbf{257.9(5.5)}\improve{55.84} & \textbf{0.805(0.006)}\improve{27.54} & \textbf{0.475(0.005)}\improve{56.86}\\
\hline
\end{tabular}
}
\end{table*}

\revised{In~\Cref{sec:additional_exps}, we present additional experiments on QM9, demonstrating that GeoRCG \textbf{enhances distribution-level geometric metrics}, such as BondAngleW1, which underscore GeoRCG’s improved geometric learning capabilities.}

\subsection{Conditional Molecule Generation}
\label{sec:conditional_exp}

We now turn to a more challenging task: generating molecules with a specific property value $c$ from $q(\mc M | c)$. We strictly follow the evaluation protocol outlined in~\citep{hoogeboom2022equivariant}. Speicifically, QM9 is split into two halves, and an EGNN classifier~\citep{satorras2021n} is trained on the first half for evaluating the generated molecules' property, while the generator is trained on the second half. We focus on six properties: polarizability ($\alpha$), orbital energies ($\varepsilon_{\text{HOMO}}$, $\varepsilon_{\text{LUMO}}$), their gap ($\Delta \varepsilon$), dipole moment ($\mu$), and heat capacity ($C_v$).

The results are presented in~\Cref{tab:conditional_main}. The first three baselines, as introduced by EDM~\citep{hoogeboom2022equivariant}, represent the classifier’s inherent bias as the lower bound for performance, the random evaluation result as the upper bound, and the dependency of properties on $N$. For more details, please refer to~\Cref{sec:exp_details}.

As shown, GeoRCG (EDM) \textit{nearly doubles the performance} of the best existing models for most properties, with an average $50\%$ improvement over the best ones. This is a task where many recent models struggle to make even modest improvements, as evidenced in the table. Notably, for different properties, we \textit{only re-train the representation generator}, as demonstrated in~\Cref{sec:method}, significantly saving training time. In~\Cref{fig:sweep}, we visualize the generated samples, which exhibit minimal property errors and display a clear trend as the target values increase. Additional randomly generated molecules are provided in~\Cref{sec:vis_mol}.

\begin{figure*}[t]
    \centering
\includegraphics[width=0.8\textwidth]{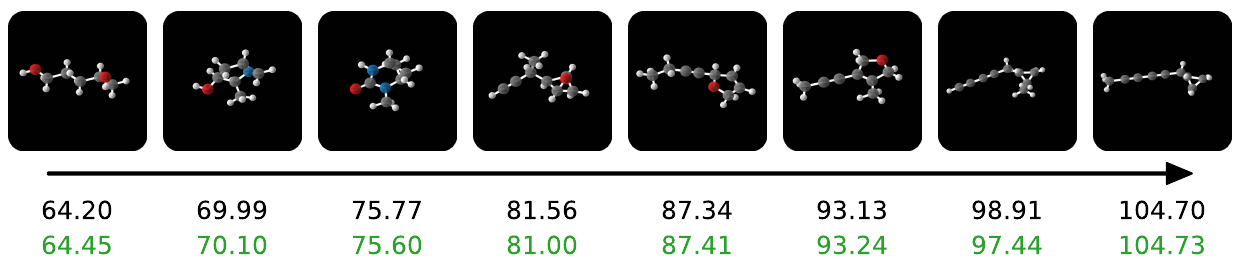}
    \caption{Conditionally generated molecules on property $\alpha$ using GeoRCG (EDM). The black number indicates the condition value, the \textcolor{green!50!black}{green} number represents the oracle property value for the generated molecule conformer.}
    \label{fig:sweep}
    \vspace{-1mm}
\end{figure*}

A potential concern is that for a given property value $c$, $p_\varphi(r| c)$ may produce a representation corresponding to a molecule from the training dataset, allowing the molecule generator to simply recover its full conformation based on that representation. This could lead to small property errors but a lack of novelty. To address this, we conducted a thorough evaluation of the generated molecules across each property, finding that the novelty (the proportion of new molecules not present in the training dataset) remains comparable to other methods. Additionally, the conditionally generated molecules demonstrate much higher molecule stability than EDM~\citep{hoogeboom2022equivariant}. Further details can be found in~\Cref{sec:additional_exps}.

\begin{table}[htbp]
\centering
\caption{Unconditional molecule generation on QM9 with fewer diffusion steps. The \textcolor{cyan!80!black}{blue} cells indicate the highest value among methods with the same number of diffusion steps, while \textbf{bold} font emphasizes values that outperform \textit{all} other methods across all diffusion steps.}
\label{tab:faster_generation}
\resizebox{\columnwidth}{!}{
\begin{tabular}{l|c|ccc}
\hline
\diagbox{Methods}{Metrics}  & \# Steps & Atom Sta (\%) $\uparrow$       & Mol Sta (\%) $\uparrow$        & Valid (\%)  $\uparrow$         \\
\hline
Data                                                               & -                & 99                   & 95.2                 & 97.7                 \\
\hline
EquiFM       &     200          & 98.9(0.1)            & 88.3(0.3)            & 94.7(0.4)         \\
GOAT         &     90      & 98.4                 & 84.1                 & 90.9              \\
\hline
EDM                                                                & 50                   & 97.0(0.1)            & 66.4(0.2)            & -                     \\
EDM-Bridge                                                         & 50                   & 97.3(0.1)            & 69.2(0.2)            & -                     \\
GeoBFN                                                             & 50                   & 98.28(0.1)          & 85.11(0.5)           & 92.27(0.4)            \\          
GeoRCG (EDM)                                                               & 50                   & \cellcolor{cyan!20}{98.75(0.05)}\improve{1.80} & \cellcolor{cyan!20}{89.08(0.52)}\improve{34.16} & 95.05(0.33)           \\
\hline
EDM                                                                & 100                  & 97.3(0.1)            & 69.8(0.2)            & -                     \\
EDM-Bridge                                                         & 100                  & 97.9(0.1)            & 72.3(0.2)            & -                     \\
GeoBFN                                                             & 100                   & 98.64(0.1)          & 87.21(0.3)           & 93.03(0.3)            \\    
GeoRCG (EDM)                                                               & 100                  & \cellcolor{cyan!20}\textbf{99.08(0.03)}\improve{1.83} & \cellcolor{cyan!20}\textbf{91.85(0.34)}\improve{31.59} & \textbf{96.49(0.27)}           \\
\hline
EDM                                                                & 500                  & 98.5(0.1)            & 81.2(0.1)            & -                     \\
EDM-Bridge                                                         & 500                  & 98.7(0.1)            & 83.7(0.1)            & -                     \\
GeoBFN                                                             & 500                  & 98.78(0.8)          & 88.42(0.2)           & 93.35(0.2)            \\    
GeoRCG (EDM)                                                               & 500                  & \cellcolor{cyan!20}\textbf{99.09(0.01)}\improve{0.60} & \cellcolor{cyan!20}\textbf{91.89(0.24)}\improve{13.17} & \textbf{96.57(0.12)}           \\
\hline
EDM                                                                & 1000                 & 98.7                 & 82                   & 91.9                  \\
EDM-Bridge                                                         & 1000                 & 98.8                 & 84.6                 & 92                    \\
GeoBFN                                                             & 1000                 & 99.08(0.06)          & 90.87(0.2)           & 95.31(0.1)            \\    
GeoRCG (EDM)                                                               & 1000                 & \cellcolor{cyan!20}\textbf{99.12(0.03)}\improve{0.43} & \cellcolor{cyan!20}\textbf{92.32(0.06)}\improve{12.59} & \cellcolor{cyan!20}\textbf{96.52(0.2)}\improve{5.03}         \\
\hline
\end{tabular}
}
\end{table}

\begin{table}[htbp]
\centering
\caption{
Unconditional molecule generation on GEOM-DRUG with
fewer diffusion steps.}
\label{tab:faster_generation_Drug}
\resizebox{\columnwidth}{!}{
\begin{tabular}{l|cc|cc}
\hline
\# Steps & \multicolumn{2}{c|}{Atom Sta (\%) $\uparrow$} & \multicolumn{2}{c}{Valid (\%) $\uparrow$}  \\
\hline
 & GeoBFN & GeoRCG (EDM) & GeoBFN & GeoRCG (EDM) \\
\hline
50   & 75.11       & \textbf{81.44(0.10)}        & 91.66       & \textbf{95.70(0.70)}        \\
100  & 78.89       & \textbf{83.02(0.06)}        & 93.05       & \textbf{96.30(0.70)}        \\
500  & 81.39       & \textbf{84.03(0.37)}        & 93.47       & \textbf{97.57(0.90)}       \\
1000 & 85.6        & 84.3(0.12)         & 92.08       & \textbf{98.5(0.12)}         \\
\hline
\end{tabular}
}
\end{table}

\subsection{Fewer-Step Generation}

With geometric representation condition, it is reasonable to expect that fewer discretization steps of the reverse diffusion SDE~\citep{ScoreSDE} would still yield competitive results. Therefore, we reduce the number of diffusion steps and evaluate the model’s performance. The results are presented in~\Cref{tab:faster_generation} and~\Cref{tab:faster_generation_Drug}. We provide the fewer-step performance of GeoRCG (Semla) on GEOM-DRUG dataset in Appendix~\ref{sec:additional_exps}.

As demonstrated, with the geometric representation condition, GeoRCG consistently outperforms other approaches across almost all step numbers. Notably, in~\Cref{tab:faster_generation}, with approximately \textit{100} steps, the performance of our method \textit{nearly converges} to the optimal performance observed with 1000 steps, which already \textit{surpasses all other methods} across all step numbers. This demonstrates the strong potential of GeoRCG to reduce the number of iterations required by sequential generative methods.




\section{Conclusions and Limitations}

\textbf{Conclusions.} In this work, we present GeoRCG, a simple yet effective framework to improve the generation quality of arbitrary molecule generators by incorporating geometric representation conditions. We use EDM~\citep{hoogeboom2022equivariant} \revised{and SemlaFlow~\citep{irwin2024efficient}} as base generators and demonstrate the effectiveness of our framework through extensive molecular generation experiments. In conditional generation tasks, GeoRCG achieves a remarkable 50\% performance boost compared to recent SOTA models. Additionally, the representation guidance enables sampling with 10x fewer diffusion steps while maintaining near-optimal performance. Beyond these empirical improvements, we provide theoretical characterizations of representation-conditioned generative models, which address a key gap in the existing empirical literature~\citep{li2023self}.

\begin{table}[h!]
\centering
\caption{Sampling time (in \textit{seconds}) for 5k samples using SemlaFlow and GeoRCG (Semla) across different numbers of sampling steps, measured on a single NVIDIA RTX 4090.}
\resizebox{\linewidth}{!}{
\begin{tabular}{llccc}
\hline
\textbf{\# Steps} & \textbf{Method} & \textbf{Rep. Time} $\downarrow$ & \textbf{Mol. Time} $\downarrow$ & \textbf{Mol. Time w/o CFG} $\downarrow$ \\
\hline
\multirow{2}{*}{100} & SemlaFlow & -  & \textbf{610} & \textbf{610}\\
& GeoRCG (Semla) & 97 & 1481 & 770\\ \hline
\multirow{2}{*}{50}  & SemlaFlow & -  & \textbf{310} & \textbf{310}\\
& GeoRCG (Semla) & 97 & 690 & 380\\ \hline
\multirow{2}{*}{20}  & SemlaFlow & -  & \textbf{152} & \textbf{152}\\
& GeoRCG (Semla) & 97 & 315 & 189\\
\hline
\end{tabular}
}
\vspace{-2mm}
\label{tab:comparison}
\end{table}

\textbf{Limitations.} We discuss two limitations of GeoRCG. First,  as a representation-guided generative method, its generation quality may depend heavily on the quality of representations. For instance, in the GEOM-DRUG dataset, an insufficiently pre-trained encoder may produce less meaningful representations. As a result, the benefits of low-temperature sampling and classifier-free guidance in enhancing generation quality and controllability may be less pronounced. Future work could investigate more effective pre-training strategies beyond standard denoising or enhanced representation regularization techniques to mitigate this issue. Second, although the additional conditioning module introduces small overhead, the use of classifier-free guidance requires doubling the batch size, resulting in roughly twice the resource consumption (memory and computation), see~\Cref{tab:comparison}. Nonetheless, for many cases such as GeoRCG (EDM) in QM9, performance gains are substantial even without employing classifier-free guidance. With ongoing advancements in hardwares and infrastructures, we expect the overhead introduced by parallelism to be further minimized.

\section*{Acknowledgement}
This work is supported by the National Key R\&D Program of China (2022ZD0160300) and National Natural Science Foundation of China (62276003).

\section*{Impact Statement}

This paper presents work whose goal is to advance the field of molecule generation. There are many potential societal consequences of molecule generation improvement, such as accelerating drug discovery, and developing new material. None of them we feel need to be specifically highlighted here for potential risk.



\bibliography{example_paper}
\bibliographystyle{icml2025}

\newpage
\appendix
\onecolumn

\section{Algorithms}
\label{sec:alg}

\paragraph{High-level Algorithm for Parallel Training and Sequential Sampling} We provide the high-level training and sampling algorithm for GeoRCG in~\Cref{alg:train_and_sample}.

\begin{algorithm}[h]
   \caption{Parallel Training and Sequential Sampling for GeoRCG}
   \label{alg:train_and_sample}
\begin{algorithmic}
   \STATE {\bfseries Input:} Molecule dataset $\mathcal{D}^{\text{mol}}_{\text{train}} \subset \bigcup_{N=1}^{+\infty} \left( \mathbb{R}^{N\times 3} \times \mathbb{R}^{N\times d} \right)$, pre-trained geometric encoder $E$, initial representation generator $p_{\varphi_0}(r)$, molecule generator $p_{\theta_0}(\mathcal{M} | r)$.
   \STATE {\bfseries Output:} Trained representation generator $p_{\varphi}(r)$, molecule generator $p_{\theta}(\mathcal{M} | r)$, and molecule samples from $p_{\varphi,\theta}(\mathcal{M})$.

   \STATE {\bfseries Parallel Training:}
   \STATE Pre-process to obtain:
   \STATE \hspace{1em} - The representation dataset $\mathcal{D}_{\text{train}}^{\text{rep}} = \{(E(\mathcal{M}), N(\mc M )) | \mathcal{M} \in \mathcal{D}^{\text{mol}}_{\text{train}}\}$
   \STATE \hspace{1em} - The mol-rep dataset $\mathcal{D}_{\text{train}}^{\text{mol-rep}} = \{(E(\mathcal{M}), \mathcal{M}) | \mathcal{M} \in \mathcal{D}^{\text{mol}}_{\text{train}}\}$
   \STATE Train the representation generator $p_{\varphi_0}(r)$ with $\mathcal{D}^{\text{rep}}_{\text{train}}$ using loss $\mathcal{L}_{\text{rep}}$ in~\Cref{eq:loss_rep}.
   \STATE Train the molecule generator $p_{\theta_0}(\mathcal{M} | r)$ with $\mathcal{D}^{\text{mol-rep}}_{\text{train}}$ using loss $\mathcal{L}_{\text{mol}}$ in~\Cref{eq:loss_mol}, while applying the training techniques outlined below, including representation perturbation and representation loss.

   \STATE {\bfseries Sequential Sampling:}
   \STATE Sample a representation $r_{*} \sim p_{\varphi}(r)$ with low-temperature sampling technique outlined below.
   \STATE Sample a molecule $\mathcal{M}_* \sim p_{\theta}(\mathcal{M} | r_{*})$ conditionally, with classifier-free guidance technique outlined below.

   \STATE {\bfseries Return:} Trained representation generator $p_{\varphi}(r)$, molecule generator $p_{\theta}(\mathcal{M} | r)$, and generated molecule sample.
\end{algorithmic}
\end{algorithm}

\paragraph{Training: Representation Perturbation} Unlike typical conditional training scenarios, GeoRCG faces a unique challenge: the representations that condition the molecule generator during training may not always coincide with those generated by the representation generator during the sampling stage. This issue is \textit{particularly pronounced} in molecular generation than image case~\citep{li2023self}, where pre-trained encoders are typically not trained on that large datasets with advanced regularization techniques like MoCo v3~\citep{chen2021empirical}. Consequently, the molecule generator is susceptible to \textit{overfitting} to the training representations, as evidenced by our preliminary experiments on QM9 molecule generation shown in~\Cref{tab:noise_attn_ablation_mainbody}.

\begin{table}[!h]
\centering
\caption{Quality of molecules generated by GeoRCG trained on the QM9 dataset without using the representation perturbation technique, comparing different representation sources. ``Training Dataset" refers to representations sampled from \( \mathcal{D}_{\text{train}}^{\text{rep}} \), while ``Rep. Sampler" refers to representations generated by the trained representation generator \( p_{\varphi}(r) \).}\label{tab:noise_attn_ablation_mainbody}
\resizebox{0.5\linewidth}{!}{
\begin{tabular}{c|cc}
\hline
\diagbox{Rep source}{Metrics} & Mol Sta (\%) ($\uparrow$)     & Valid (\%)  ($\uparrow$)      \\
\hline
Training Dataset     & 93.20 (0.50)  & 97.07 (0.32) \\
Rep. Sampler              & 86.93 (0.50)  & 89.12 (0.21)  \\
\hline
\end{tabular}
}
\end{table}

We find that a simple technique—perturbing the geometric representation during training the molecule generator with some Gaussian noise $\sigma_{\text{rep}} \epsilon$, where $\epsilon \sim \mathcal{N}(0, I)$ and $\sigma_{\text{rep}}$ is a relatively small variance—is particularly effective for solving this problem. Formally, after sampling a data point $(E(\mc M), \mc M)$ from $\molrepdataset$, we use \( (\mathcal{M}, E(\mathcal{M}) + \sigma_{\text{rep}} \epsilon) \) for training. Ablation study in~\Cref{sec:additional_exps} show this simple method can effectively prevent overfitting and ensure that performance on novel representations matches those from the training dataset. 

In practice, we set $\sigma_{\text{rep}}$ to 0.3 for QM9 dataset and $\sigma_{\text{rep}}$ to [0.3, 0.5] for GEOM-DRUG dataset. 

\paragraph{Training: Representation Loss} Training molecular generative models typically involves predicting a clean molecule from a noisy molecule input (in noise parameterization or vector field parameterization, an equivalent formulation exists for constructing clean molecule predictions). The loss is computed by minimizing the distance (e.g., MSE for coordinates) between the predicted clean molecule and the true clean molecule. To further strengthen supervision of the representation, we introduce an additional representation loss during training. This loss is defined as the MSE between the predicted clean molecule’s representation and the actual clean molecule’s representation. 

In practice, for GeoRCG (EDM), we do not apply this technique, whereas for GeoRCG (Semla), we incorporate it with a relatively small coefficient.

\paragraph{Sampling: Low-Temperature Sampling} We adopt the low-temperature sampling algorithm introduced by Chroma~\citep{ingraham2023illuminating} to the representation generator. However, we apply it to an MLP-based diffusion model rather than the equivariant diffusion model that processes geometric objects as Chroma.

The objective of low-temperature sampling is to perturb the learned representation distribution $p_{\varphi}(r)$ by rescaling it with an inverse temperature factor, $\frac{1}{\mc T}$, where $\mc T$ is a tunable temperature parameter during sampling, and finally enables sampling from $\mc Z_\mc Tp_{\varphi}^{\frac{1}{\mc T}}$, where $\mc Z_\mc T$ is a normalization constant. The method proposed in Chroma~\citep{ingraham2023illuminating} scales the score $\epsilon_t$ estimated at each diffusion time step using a time-dependent factor $\lambda_t$. The approach is derived from and has theoretical guarantees for simplified toy distributions, and its performance on complex distributions, though lacking strict guarantees, has shown consistent results when combined with annealed Langevin sampling~\citep{ScoreSDE}. Here we briefly introduce it for self-containess, and recommend the readers to~\citet{ingraham2023illuminating} for detailed derivation and illustration.

Consider the vanilla reverse SDE used in DDPM sampling (VP formulation)~\citep{ScoreSDE}:
\begin{equation}
    dr = -\frac{1}{2} \beta_t r -  \beta_t \nabla_{r} \log q_t(r) dt + \sqrt{\beta_t}  d\mathbf{\bar{w}},
\end{equation}
where $\mathbf{\bar{w}}$ is a reverse-time Wiener process, $q_t(r)$ denotes the ground-truth representation distribution at time $t$, and $\beta_t$ represents the time-dependent diffusion schedule. To incorporate low-temperature sampling, we utilize the following Hybrid Langevin Reverse-time SDE:
\begin{equation}
    dr = -\frac{1}{2} \beta_t r - \left( \lambda_t + \frac{\lambda_0 \psi}{2} \right) \beta_t \nabla_{r} \log q_t(r) dt + \sqrt{\beta_t (1 + \psi)}  d\mathbf{\bar{w}},
\end{equation}
where $\lambda_t$ is a time-dependent temperature parameter defined as $\frac{\lambda_0}{\alpha_t^2 + (1-\alpha_t^2)\lambda_0}$, with $\lambda_0 = \frac{1}{\mc T}$. $\alpha_t$ satisfies $\frac{1}{2}\beta_t = \frac{d \log \alpha_t}{dt}$. The parameter $\psi$ controls the rate of Langevin equilibration per unit time, and as shown in~\citet{ingraham2023illuminating}, it helps align more effectively with the reweighting objective in complex distributions. In our implementation, we employ the explicit annealed Langevin process (the corrector step from~\citep{ScoreSDE}) to achieve similar results.

In practice, for the unconditional QM9 and GEOM-DRUG generation results of GeoRCG (EDM) shown in~\Cref{tab:unconditional_main}, we set \(\mc T = 1.0\) for QM9 and \(\mc T = 0.5\) for GEOM-DRUG. For conditional generation in~\Cref{tab:conditional_main}, we set $T = 1.0$. The effect of varying \(\mc T\) on QM9 is detailed in~\Cref{tab:ablation_encoder}. For GeoRCG (SemlaFlow) results in~\Cref{tab:unconditional_semlaflow}, we use \(\mc T = 1.0\).

\paragraph{Sampling: Classifier-Free Guidance} We employ the classifier-free guidance algorithm, as introduced in~\citep{ho2022classifier}, for our molecule generator. Specifically, we introduce a trainable “fake” representation, denoted as $l$, which serves as the unconditional signal. During the training phase, $l$ is initialized as learnable parameters, and with a probability of $p_{\text{fake}}$, the true representation $r$ is replaced by $l$. This ensures that the model is capable of generating molecules unconditionally, i.e., $p_\theta(\mc M | l)$ approximates $q(\mc M)$. During sampling, the final score estimate produced by the molecule generator is adjusted using the formula $(1+w)f_\theta(\mc M_t, t, r) - wf_\theta(\mc M_t, t, l)$, allowing flexible control over the strength of the representation guidance.

In practice, for unconditional generation in~\Cref{tab:unconditional_main}, we set $w = 1.0$ for QM9 and $w = 0.0$ for GEOM-DRUG. The impact of varying $w$ on QM9 is shown in~\Cref{tab:ablation_encoder}. For conditional generation in~\Cref{tab:conditional_main}, we set $w = 2.0$. For GeoRCG (SemlaFlow), we use $w = -0.9$ (note that $-1.0 < w < 0.0$ indicates subtle representation guidance that exerts a small but still meaningful influence on the model’s behavior). Further tuning of these hyperparameters may yield improved performance.

\section{Experiment Details}
\label{sec:exp_details}

\paragraph{Metrics and Baseline Descriptions} We adopt the evaluation metrics, guidelines and baselines commonly used in prior 3D molecular generative models to ensure a fair comparison~\citep{hoogeboom2022equivariant}. 
\begin{itemize}[left=0.5cm]
    \item In the unconditional setting, we assess the generated molecules using several key metrics:
        \begin{itemize}
            \item \textbf{Atom Stability}: The proportion of atoms with correct valency.
            \item \textbf{Molecule Stability}: The proportion of molecules where all atoms within the molecule are stable.
            \item \textbf{Validity}: The proportion of molecules that can be converted into valid SMILES using RDKit.
            \item \textbf{Validity \& Uniqueness}: The proportion of unique molecules among the valid molecules.

        \item \textbf{Energy}: The energy  $U(x)$ of a conformation $x$. Lower energy values typically correspond to more stable and physically plausible conformations that are closer to what would be observed in nature. Following~\citep{irwin2024efficient}, the energy is calculated using the MMFF94 force field within RDKit, a commonly used molecular modeling framework. 
        \item \textbf{Strain}: A measure of how distorted a generated conformation $x$ is compared to its relaxed (optimized) state $\tilde{x}$. The relaxed conformation $\tilde{x}$ is obtained by applying energy minimization using the MMFF94 force field, where the molecular structure is iteratively adjusted to reduce its energy until reaching a local minimum. Mathematically, the strain energy is defined as $U(x) - U(\tilde{x})$ , where $U(x)$ is the energy of the generated conformation and $U(\tilde{x})$ is the energy of the relaxed conformation. Lower strain energy values imply that the generated conformations are closer to being physically accurate and require minimal correction during optimization.
    \end{itemize}


Following the approach of~\citet{hoogeboom2022equivariant, vignac2021top}, we do not report \textbf{Novelty} scores in the main text, since QM9 represents an exhaustive enumeration of molecules satisfying a predefined set of constraints, therefore, ``novel'' molecule would often violate at least one of these constraints, which indicates that a model fails to fully capture the properties of the dataset. For reference, we observe that the novelty of GeoRCG (EDM) on QM9 is approximately 42$\%$ when $w=0.0, T=1.0$, compared to about 65$\%$ for EDM.

    
When comparing with 2D \& 3D models in~\Cref{tab:unconditional_3d}, we evaluate two 3D metrics introduced by MiDi~\citep{vignac2023midi}, which directly assess the geometry learning ability:
    \begin{itemize}
        \item \textbf{BondLengthW1}: The weighted 1-Wasserstein distance between the bond-length distributions of the generated molecules and the training dataset, with weights corresponding to different bond types. Formally, it is defined as: \begin{equation}
\text{BondLengthsW}1 = \sum_{y \in \text{bond types}} q^Y(y) \mathcal{W}1(\hat{D}_{\text{dist}}(y), D_{\text{dist}}(y)),
\end{equation}
where $q^Y(y)$ is the proportion of bonds of type $y$ in the training set, $\hat{D}_{\text{dist}}(y)$ is the generated distribution of bond lengths for bond type $y$, and $D_{\text{dist}}(y)$ is the corresponding distribution from the test set.
        \item \textbf{BondAngleW1}: The weighted 1-Wasserstein distance between the atom-centered angle distributions of the generated molecules and the training dataset, with weights based on atom types. Formally, it is defined as 
\begin{equation}
\text{BondAnglesW}1 = \sum_{x \in \text{atom types}} q^X(x) \mathcal{W}1(\hat{D}_{\text{angles}}(x), D_{\text{angles}}(x)),
\end{equation}
where $q^X(x)$ denotes the proportion of atoms of type $x$ in the training set, restricted to atoms with two or more neighbors, and $D_{\text{angles}}(x)$ represents the distribution of geometric angles of the form $\angle(r_k - r_i, r_j - r_i)$, where $i$ is an atom of type $x$, and $k$ and $j$ are neighbors of $i$.
    \end{itemize}

\item In the conditional generation setting, as described in~\citep{hoogeboom2022equivariant}, we evaluate our approach on the QM9 dataset across six properties: polarizability $\alpha$, orbital energies $\varepsilon_{\text{HOMO}}$, $\varepsilon_{\text{LUMO}}$, and their gap $\Delta \varepsilon$, dipole moment $\mu$, and heat capacity $C_v$. The generative model is trained conditionally on the second half of the QM9 dataset, and an EGNN~\citep{satorras2021n} classifier, trained on the first half, is employed to evaluate the \textit{MAE} property error of the generated samples. 

Three baselines are adopted in~\Cref{tab:conditional_main}:

\begin{itemize}
    \item \textbf{QM9 (lower bound)}: The mean error of a classifier trained on the first half of the QM9 dataset and evaluated on the second half. This baseline represents the inherent bias/error of the classifier, setting a lower bound for model performance and reflecting the best possible performance a model can achieve.

    \item \textbf{Random}: The classifier's performance when evaluated on the second half of QM9 with randomly shuffled molecule property labels. This baseline provides an upper bound, representing the worst achievable performance.

    \item \textbf{N\_atoms}: The performance of a classifier trained exclusively on the number of atoms $N$ and evaluated using only $N$ as input. This baseline captures the intrinsic relationship between molecular properties and the number of atoms, which a generative model must surpass to demonstrate effectiveness.
\end{itemize}

\end{itemize}

\paragraph{Model Architectures, Hyperparameters and Training Details} 
\begin{itemize}
    \item \textbf{Representation Generator.} We use the same architecture for the representation generator as the MLP-based diffusion model proposed in~\citet{li2023self}. We use 18 blocks of residual MLP layers with 1536 hidden dimensions, 1000 diffusion steps, and a linear noise schedule for $\beta_t$. The representation generator is trained for 2000 epochs with a batch size of 128 for both the QM9 and GEOM-DRUG datasets. Training on QM9 takes approximately 2.5 days on a single Nvidia 4090, while training on GEOM-DRUG takes around 4 days on a single Nvidia A800. Training time can indeed be further reduced, as the model shows minimal progress after approximately half of the reported time.
    \item \textbf{Molecule Generator.} We adopt EDM~\citep{hoogeboom2022equivariant} as the base molecule generator, using the same EGNN~\citep{satorras2021n} architecture, with the exception of the conditioning module. Specifically, we introduce a simple gated feedforward layer to incorporate the representation condition, inserting it between each EGNN block to enhance regularization and improve model expressiveness. 
    
    For the EGNN hyperparameters, we use 9 layers with 256 hidden dimensions for QM9 and 4 layers with 256 hidden dimensions for GEOM-DRUG. The number of diffusion steps is set to 1000 (except for cases in~\Cref{tab:faster_generation} that generate molecules with fewer steps), and we employ the polynomial scheduler for $\alpha_t^{(\mc M)}$. Notably, all model hyperparameters are identical to those in EDM for fair comparison. 
    
    During training, we use a batch size of 128 and 3000 epochs on QM9, and a batch size of 64 and 20 epochs on GEOM-DRUG. Training takes approximately 6 days on QM9 using a single Nvidia 4090, and around 10 days on GEOM-DRUG using two Nvidia A800 GPUs. 

    For GeoRCG (SemlaFlow), we select the better-performing architecture between a gated feedforward layer and an AdaLN-Zero-like module~\citep{peebles2023scalable} for conditioning on representations and place it between every block of Semla. During training, we use the batch cost 2048 and trains for 300 epochs.

\end{itemize}

\paragraph{Evaluation of LDM-3DG~\citep{you2023latent}} We evaluate the performance of LDM-3DG~\citep{you2023latent} in~\Cref{tab:unconditional_3d} and~\Cref{tab:conditional_main}, an Auto-Encoder-based method that also leverages the compactness of the representation space to achieve good performance.

\begin{itemize}
    \item For the unconditional results in~\Cref{tab:unconditional_3d}, we utilize the 3D conformations unconditionally generated by LDM-3DG~\citep{you2023latent} and compute the bond information using the look-up table method from EDM~\citep{hoogeboom2022equivariant}. Notably, although LDM-3DG predicts both the 2D molecular graph and the 3D conformation, \textit{we do not use the bond information it predicts} for the following reasons:

        \begin{enumerate}
            \item For the calculation of 3D geometry statistics, we observe significant inconsistencies between the generated 2D graphs and 3D geometries (e.g., valid molecules with bond lengths exceeding $100 \, \text{m}$), leading to unreliable statistics (e.g., BondLengthW1 exploding to 3900).
            
            \item For stability and validity metrics, which are fundamentally 2D and computed based on molecular graphs (atoms and bond types), using the generated 2D graph would ignore the contribution of the 3D module, preventing an evaluation of its 3D learning performance.
            
            \item Most critically, their 2D module is explicitly designed to \textit{filter out} invalid (sub-)molecules during generation using the RDKit method. This means that if invalid molecules or sub-molecules are generated, they are regenerated. This explicit filtering deviates from our standard evaluation criteria and is unsuitable for a fair comparison.
        \end{enumerate}
    \item For the conditional results in~\Cref{tab:conditional_main}, we first note a potential issue with LDM-3DG~\citep{you2023latent}: The model cannot explicitly specify the node number $N$ during molecule generation, as it uses an auto-regressive 2D generator that automatically stops adding atoms/motifs when deemed sufficient. However, the evaluation in~\Cref{tab:conditional_main} requires specifying both $N$ and property $c$, following the ground-truth distribution $q(N, c)$ from the training dataset. To ensure fair evaluation, conditions feeding to LDM-3DG must also satisfy this distribution. As the authors claim the model can implicitly learn $q(N)$ and thus $q(N | c)$, we first sample 10,000 values from $q(c)$ and feed them to LDM-3DG, expecting it to infer $N$ from $c$ implicitly as argued, and thus matching the $q(N, c)$ conditions.

\end{itemize}

\section{Additional Experiment Results}
\label{sec:additional_exps}


\paragraph{Comparison of GeoRCG (EDM) with 2D\&3D Methods} We compare GeoRCG (EDM) with recent 2D\&3D methods such as MiDi~\citep{vignac2023midi} and LDM-3DG~\citep{you2023latent}. As discussed in~\Cref{sec:exp_setup}, such comparison is not fair, since these models learn and generate both 2D bond structures and 3D geometries, which is beneficial for metrics like validity and stability. Considering that they rely on external chemistry toolkits like RDKit or OpenBabel~\citep{o2011open} for bond determination at the input stage and continue to leverage this bond information throughout training and generation, we report GeoRCG (EDM) using the same external tools for accurate bond computation in the generated 3D conformations, rather than relying on simple look-up tables, to narrow the comparison gap (though not completely addressed since GeoRCG (EDM) still do not explicitly learn to generate bonds).

Furthermore, as GeoRCG (EDM) essentially captures 3D geometric distributions, we place more emphasis on 3D metrics that \textit{directly evaluate 3D learning capabilities}, including \textit{BondLengthW1} and \textit{BondAngleW1} proposed by~\citet{vignac2023midi} and detailed in~\Cref{sec:exp_details}. 

The results in~\Cref{tab:unconditional_3d} demonstrate that GeoRCG not only significantly outperforms MiDi and LDM-3DG on 3D metrics, highlighting the advantages of using a pure 3D model for learning 3D structures, but also further enhances EDM’s performance, which has already shown considerable promise in 3D learning.

\begin{table*}[htbp]
\centering
\caption{3D geometry statistics and generated molecule quality on QM9 across different methods. Models marked with $^*$ indicate results obtained from our own experiments; see~\Cref{sec:exp_details} for the evaluation guidelines. The stability metrics for EDM are higher than in~\Cref{tab:unconditional_main} due to using the MiDi codebase for evaluation, which permits more valency for atoms.}
\label{tab:unconditional_3d}
\resizebox{\textwidth}{!}{
\begin{tabular}{l|cc|cccc}
\hline
\diagbox{Methods}{Metrics} & Angles ($^{\circ}$) $\downarrow$ & Bond Length (e-2 \r{A}) $\downarrow$ & Mol Sta (\%) $\uparrow$ & Atom Sta (\%) $\uparrow$ & Validity (\%) $\uparrow$ & Uniqueness (\%) $\uparrow$ \\
\hline
Data & $\sim$0.1 & $\sim$0 & 98.7 & 99.8 & 98.9 & 99.9 \\
\hline
MiDi (uniform) & 0.67(0.02) & 1.6(0.7) & 96.1(0.2) & 99.7(0.0) & 96.6(0.2) & \ul{97.6(0.1)} \\
MiDi (adaptive) & 0.62(0.02) & 0.3(0.1) & 97.5(0.1) & \ul{99.8(0.0)} & \ul{97.9(0.1)} & \ul{97.6(0.1)} \\
LDM-3DG$^{*}$ & 3.56 & 0.2 & 94.03 & 99.38 & 94.89 & 97.03 \\
\hline
EDM & 0.44 & 0.1 & 90.7 & 99.2 & 91.7 & \textbf{98.5} \\
EDM + OBabel & 0.44 & 0.1 & \ul{97.9} & \ul{99.8} & \textbf{99.0} & \textbf{98.5} \\
\hline
GeoRCG (EDM) & \ul{0.21(0.04)}\improve{52.27} & \textbf{0.04(0.0)}\improve{60} &95.82(0.16)\improve{5.6}  &99.59(0.02)\improve{0.39} & 96.54(0.27)\improve{5.28} & 95.74(0.18)\decrease{2.8} \\
GeoRCG (EDM) + OBabel & \textbf{0.20(0.04)}\improve{54.55} & \ul{0.07(0.06)}\improve{30} & \textbf{98.21(0.09)}\improve{0.32} &\textbf{99.88(0.00)}\improve{0.08} & \textbf{99.0(0.04)}\improve{0.0} & 95.74(0.16)\decrease{2.8} \\
\hline
\end{tabular}
}
\end{table*}

\paragraph{Balancing Controllability} We present a more comprehensive figure that includes molecule stability, atom stability, validity, and validity\&uniqueness in~\Cref{fig:balance_controlled_full}.

\begin{figure}[H] 
\caption{Balance controllable (unconditional) generation on QM9 dataset of GeoRCG (EDM). Increasing $w$ and decreasing $\mc T$  enhances stability, with the cost of a reduction in uniqueness.}
\label{fig:balance_controlled_full}
    \centering
    \begin{subfigure}{0.24\columnwidth}
        \centering
        \includegraphics[width=\columnwidth]{fig/mol_stable.pdf}
    \end{subfigure}
    \begin{subfigure}{0.24\columnwidth}
        \centering
        \includegraphics[width=\columnwidth]{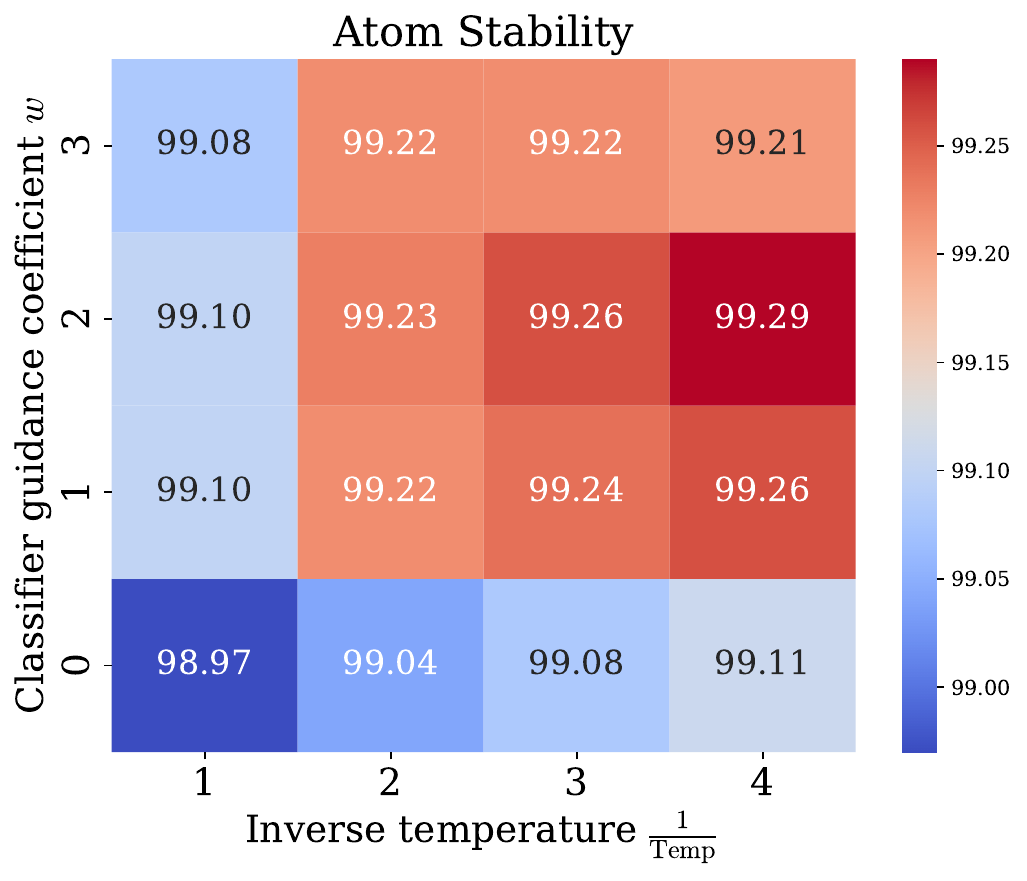}
    \end{subfigure}
    \begin{subfigure}{0.24\columnwidth}
        \centering
        \includegraphics[width=\columnwidth]{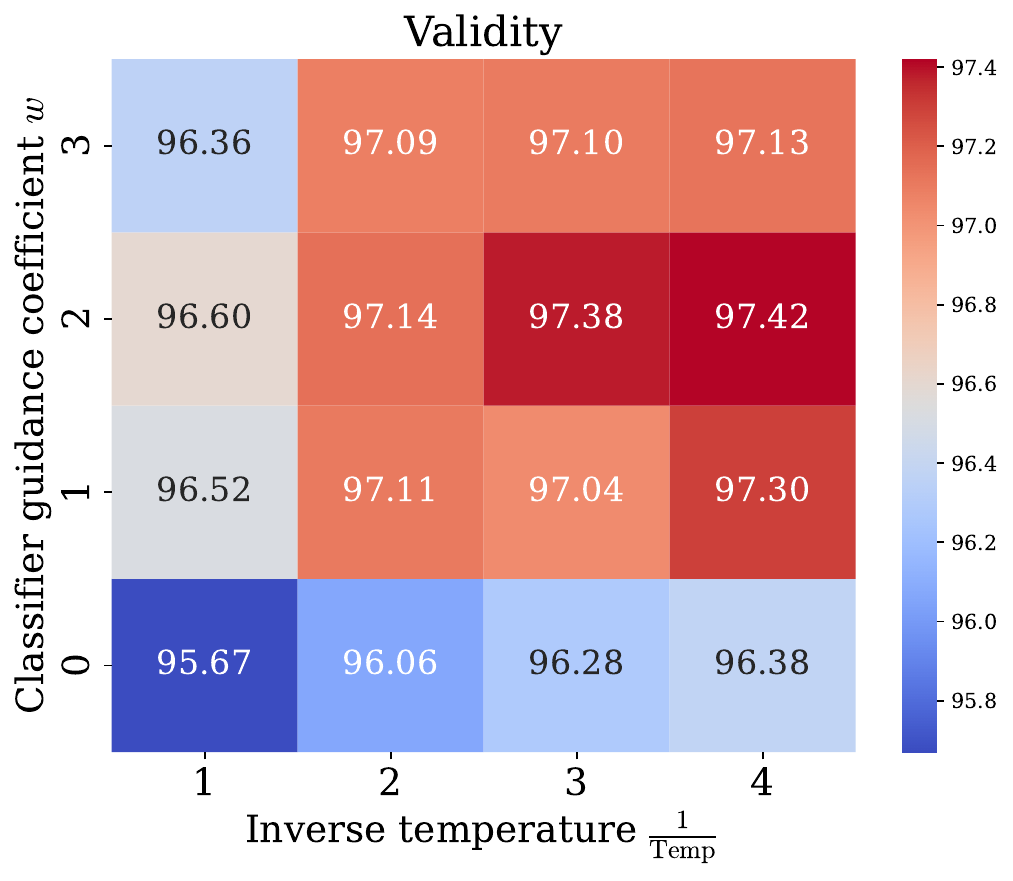}
    \end{subfigure}
    \begin{subfigure}{0.24\columnwidth}
        \centering
        \includegraphics[width=\columnwidth]{fig/vu.pdf}
    \end{subfigure}
\end{figure}

\paragraph{Fewer-step Sampling of GeoRCG (Semla)}
We present the performance of GeoRCG (Semla) across varying numbers of sampling steps in~\Cref{tab:faster_generation_Drug_semla}. The results demonstrate consistent improvements over SemlaFlow, highlighting the effectiveness of GeoRCG on this advanced method with reduced sampling steps.

\begin{table}[h!]
\centering
\caption{Comparison between SemlaFlow and GeoRCG (Semla) across varying numbers of sampling steps. Results of SemlaFlow is obtained by our experiments.}
\resizebox{0.8\linewidth}{!}{
\begin{tabular}{llccccl}
\hline
\textbf{\# Steps} & \textbf{Method} & \textbf{Energy} $\downarrow$ & \textbf{Strain} $\downarrow$ & \textbf{Atom-Stab.} $\uparrow$ & \textbf{Mol.-Stab.} $\uparrow$ & \textbf{Validity} \\
\hline
\multicolumn{2}{c}{} & kcal$\cdot$mol$^{-1}$ & kcal$\cdot$mol$^{-1}$ & \% & \% & \\
\hline
\multirow{2}{*}{100} 
& SemlaFlow & 95.72(1.24) & 56.42(1.07) & \textbf{99.8(0.00)} & 97.4(0.07) & 94.4(0.17) \\
& GeoRCG (Semla) & \textbf{88.6(1.03)} & \textbf{47.64(1.10)} & \textbf{99.8(0.00)} & \textbf{97.6(0.00)} & 95.3(0.13) \\
\hline
\multirow{2}{*}{50} 
& SemlaFlow & 100.60(0.55) & 60.31(0.13) & \textbf{99.8(0.00)} & 96.9(0.11) & 94.6(0.17) \\
& GeoRCG (Semla) & \textbf{91.60(1.03)} & \textbf{50.50(0.65)} & \textbf{99.8(0.00)} & \textbf{97.2(0.20)} & \textbf{95.3(0.22)} \\
\hline
\multirow{2}{*}{20} 
& SemlaFlow & 117.03(1.37) & 76.99(1.18) & 99.7(0.01) & 95.4(0.17) & 93.2(0.30) \\
& GeoRCG (Semla) & \textbf{99.83(0.91)} & \textbf{62.88(0.56)} & \textbf{99.7(0.00)} & \textbf{95.5(0.13)} & \textbf{94.2(0.01)} \\
\hline
\end{tabular}
}
\label{tab:faster_generation_Drug_semla}
\end{table}

\paragraph{Ablation Study: Representation Encoders} Geometric representations play a pivotal role in GeoRCG. To evaluate the importance of representation quality, we conduct an ablation study comparing the quality of molecule samples generated by GeoRCG trained under different geometric encoder configurations.

We first assess \textbf{the benefits provided by the pre-training stage}. Specifically, we utilize the pre-trained encoder Frad~\citep{feng2023fractional}, trained on the PCQM4Mv2 dataset~\citep{nakata2017pubchemqc} with a hybrid coordinates denoising task~\citep{feng2023fractional}. This approach has been proven to equivalently learn force fields~\citep{feng2023fractional, zaidi2022pre}, and is therefore expected to produce informative representations that capture high-level molecular information. We train GeoRCG (EDM) using representations from a well-pretrained Frad and a Frad with randomly initialized weights. The molecule generation quality on QM9, as shown in~\Cref{tab:ablation_encoder}, clearly \textbf{underscores the critical role of pre-training on large datasets with advanced techniques in improving representation quality, ultimately enhancing GeoRCG's performance.}

\begin{table}[htbp]
\centering
\caption{Quality of molecules generated by GeoRCG (EDM) with different encoders trained on the QM9 dataset. ``Random" indicates that the weights were initialized randomly without any pre-training.}
\label{tab:ablation_encoder}
\resizebox{0.7\linewidth}{!}{
\begin{tabular}{l|cccc}
\hline
\diagbox{Encoder}{Metrics} & Atom Sta (\%)       & Mol Sta (\%)        & Valid (\%)          & Valid \& Unique (\%) \\
\hline
Random Enc                 & 98.55(0.01)          & 78.66(0.07)          & 94.68(0.09)          & 55.99(0.83)                   \\
Pretrained Enc             & \textbf{99.10(0.02)} & \textbf{92.15(0.23)} & \textbf{96.48(0.08)} & \textbf{92.45(0.21)}  \\
\hline
\end{tabular}
}
\end{table}

Next, we investigate \textbf{the impact of different pre-trained encoders}, which could vary in model structure and proxy tasks used for pre-training. Specifically, we compare Unimol~\citep{zhou2023uni}, which employs a message-passing neural network framework incorporating distance features (i.e., DisGNN in~\citep{li2023self,li2024completeness}) and primarily uses naive coordinates denoising, with Frad~\citep{feng2023fractional}, which adopts TorchMD~\citep{tholke2022torchmd} as the backbone and utilizes carefully designed hybrid-denoising tasks. Both Unimol~\citep{zhou2023uni} and Frad~\citep{feng2023fractional} are pre-trained on the GEOM-DRUG dataset until convergence. We visualize the t-SNE of the representations generated for GEOM-DRUG. As shown in~\Cref{fig:frad_unimol_rep}, the t-SNE of the Unimol representations exhibits a clearer clustering pattern based on node numbers compared to the Frad representations, which may suggest better representation learning. To further investigate, we utilize both encoders to train GeoRCG and subsequently evaluate the quality of molecule generation. The Frad-based GeoRCG achieves a Validity of $96.9(0.44)$ and Atom Stability of $84.4(0.27)$, while the Unimol-based GeoRCG achieves a Validity of $98.5(0.12)$ and Atom Stability of $84.3(0.12)$: Although the Frad-based GeoRCG produces slightly higher atom stability, its high variance and significantly lower validity suggest inferior performance. These findings, along with our main results, offer insights into the types of representations more effective for guiding molecule generation, suggesting that \textbf{sensitivity to molecule size may be a critical factor}.

\begin{figure}[!h]
    \centering
    \begin{subfigure}{0.5\linewidth}
        \centering
        \includegraphics[width=\linewidth]{fig/DRUG_data_rep_clustering_unimol.png}
    \end{subfigure}%
    \begin{subfigure}{0.5\linewidth}
        \centering
        \includegraphics[width=\linewidth]{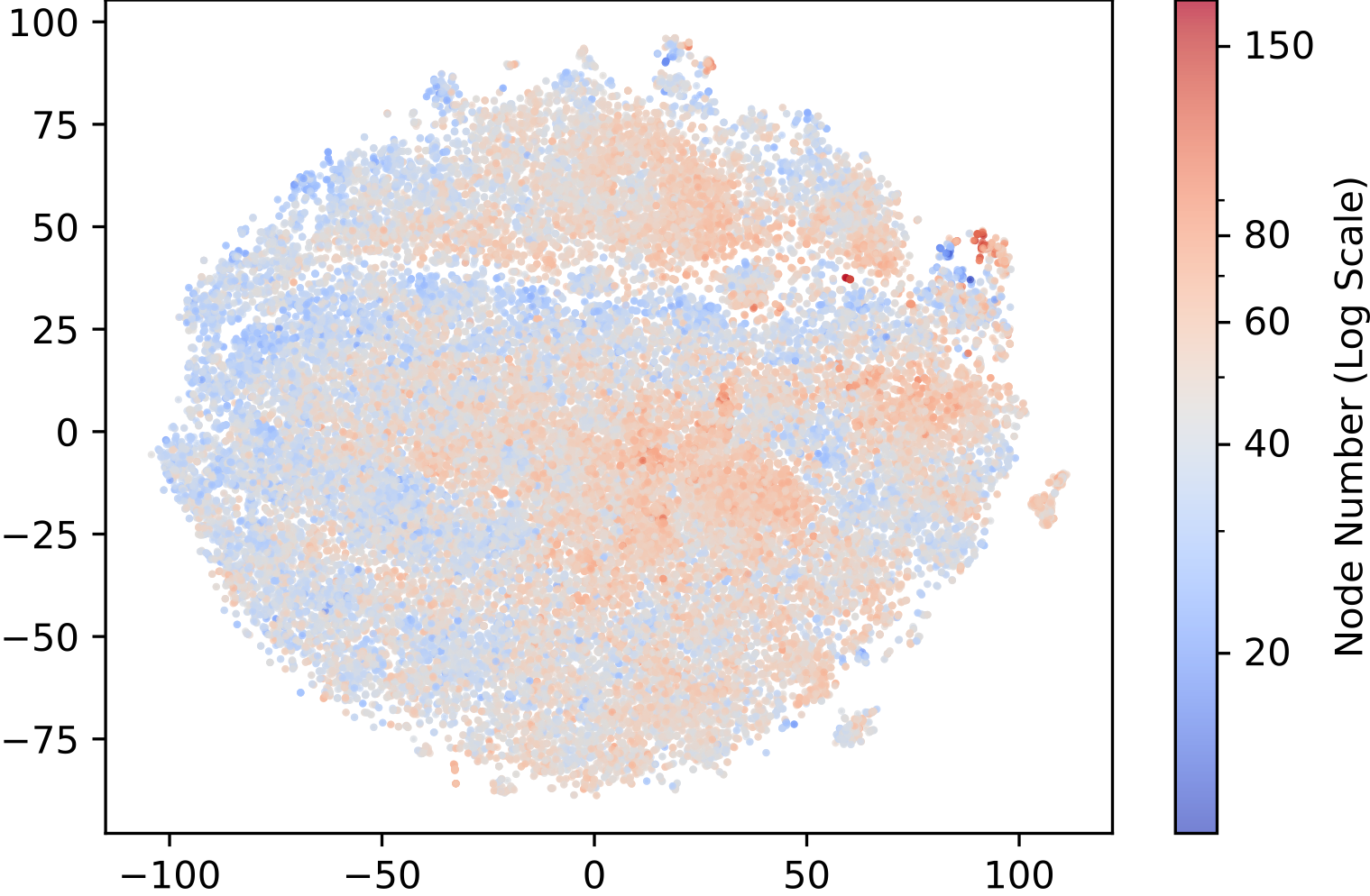}
    \end{subfigure}
\caption{t-SNE visualization of representations produced by the pre-trained encoders for the GEOM-DRUG dataset, colored by node number. The left plot corresponds to Unimol~\citep{zhou2023uni}, and the right plot corresponds to Frad~\citep{feng2023fractional}.}
\label{fig:frad_unimol_rep}
\end{figure}

\begin{table}[htbp]
\centering
\caption{Quality of molecules generated by GeoRCG (EDM) trained on the QM9 dataset, with and without representation perturbation and representation condition dropout.}
\label{tab:noise_attn_ablation}
\resizebox{0.8\textwidth}{!}{
\begin{tabular}{c|ccccc}
\hline
\diagbox{Hyper-parameters}{Metrics} & Atom Sta (\%)        & Mol Sta (\%)         & Valid (\%)           & Valid \& Unique (\%) \\
\hline
rep noise \ding{55} \quad cond. dropout \ding{55}                      & 98.53(0.08) & 86.93(0.5) & 93.69(0.09) & 89.12(0.21) \\
rep noise \ding{55} \quad cond. dropout \checkmark                      & 98.62(0.08)          & 87.9(0.35)           & 94.64(0.18)          & 90.15(0.02)          \\
rep noise \checkmark \quad cond. dropout  \ding{55}                       & 99.05(0.01)          & 91.69(0.08)          & 96.48(0.11)          & 92.38(0.12)          \\
\hline
rep noise \checkmark   \quad cond. dropout  \checkmark                   & \textbf{99.10(0.02)} & \textbf{92.15(0.23)} & \textbf{96.48(0.08)} & \textbf{92.45(0.21)} \\
\hline
\end{tabular}
}
\end{table}

\paragraph{Ablation Study: Representation Perturbation} As discussed in~\Cref{sec:alg}, we investigate the effectiveness of the straightforward representation perturbation technique by introducing random noise to perturb the representations during training. Additionally, we apply extra dropout in the conditioning module of our molecule generator to mitigate overfitting on the representation conditions. Ablation experiments presented in~\Cref{tab:noise_attn_ablation} demonstrate the efficacy of these simple yet impactful methods.

\begin{table}[htbp]
\centering
\caption{Supplementary evaluation of conditionally generated molecules from GeoRCG (EDM). The right side reports metrics for \textit{unconditionally} generated molecules from other methods for reference. Note that conditional models (left) were trained on half of the QM9 dataset, while unconditional models (right) were trained on the full dataset, which may account for slight decreases in stability and validity metrics.}
\label{tab:conditional_suppl}
\resizebox{\linewidth}{!}{
\begin{tabular}{l|cccccc|ccc}
\hline
                     & $\alpha$&  $\Delta \varepsilon$&$\varepsilon_{\text{HOMO}}$& $\varepsilon_{\text{LUMO}}$ &$\mu$& $C_v$ & EDM  & GeoLDM    & EquiFM    \\
\hline
Atom Sta (\%)        & 98.93(0.04) &  98.95(0.03) &98.93(0.02) & 98.99(0.01) &98.90(0.02) & 98.98(0.02)  & 98.7 & 98.9(0.1) & 98.9(0.1) \\
Mol Sta (\%)         & 90.64(0.36) &  90.46(0.24) &90.38(0.19) & 90.94(0.23) &90.02(0.22) & 90.87(0.12)  & 82   & 89.4(0.5) & 88.3(0.3) \\
Valid (\%)           & 95.40(0.04) &  95.44(0.11) &95.46(0.19) & 95.74(0.16) &95.26(0.04) & 95.62(0.08)  & 91.9 & 93.8(0.4) & 94.7(0.4) \\
Valid \& Unique (\%) & 90.47(0.44) &  90.09(0.13) &90.06(0.11) & 90.32(0.24) &89.93(0.24) & 90.20(0.16)  & 90.7 & 92.7(0.5) & 93.5(0.3) \\
Valid \& Unique \& Novelty  (\%)    & 50.03(0.26)&  50.81(0.15) &50.90(0.33) & 50.59(0.08) &51.70(0.79) & 51.10(0.29)  & 65.7 & 58.1      & 57.4 \\
\hline
\end{tabular}
}
\end{table}

\paragraph{Quality of Conditionally Generated Molecules} Detailed molecular metrics for conditionally generated molecules are provided in~\Cref{tab:conditional_suppl}. For comparison, we also include the stability metrics of molecules conditionally generated by EDM, which highlight a notable improvement in stability with GeoRCG. Specifically, EDM's stability scores are: $\alpha$ (80.4\%), $\Delta \varepsilon$ (81.73\%), $\varepsilon_{\text{HOMO}}$ (82.81\%), $\varepsilon_{\text{LUMO}}$ (83.6\%), $\mu$ (83.3\%), and $C_v$ (81.03\%).

\section{Theoretical Analysis}

In this section, we provide rigorous theoretical analysis on representation-conditioned diffusion models. Our theory is not limited to molecule generation, and is the first theoretical breakthrough for the RCG framework~\citep{li2023self}.

Our analysis is organized as follows. In \Cref{subsec:theoryunconditional}, we analyze the generation bound of representation-conditioned diffusion models in \textit{unconditional generation} tasks by showing: (i) the representation can be well generated by the first-stage diffusion model with mild assumptions (\Cref{subsubsec:theory_representation}); (ii) the second-stage representation-conditioned diffusion model exhibits no higher generalization error than traditional one-stage diffusion model, and can arguably achieve lower error leveraging the informative representations (\Cref{subsubsec:theory_unconditional_secondstage}). Then in \Cref{subsec:theory_conditional}, we analyze \textit{conditional generation} tasks as follows: (iii) under mild assumptions of representations and targets, we provide novel bound for score estimation error (\Cref{subsubsec:cond_score_error}); (iv) generated representations have provable reward improvement towards the target, with the suboptimality composed of offline regression error and diffusion distribution shift (\Cref{subsubsec:reward_improvement}), thus would improve the second stage of conditional generation (\Cref{subsubsec:theory_conditional_secondstage}).  

\paragraph{Notations.} In this section, we use SDE and score matching formulations of diffusion models to present our theoretical results, given their equivalence with the DDPM family~\citep{ScoreSDE}. We consider the random variable $x \in \mbb R^{N \times (d + 3)}$, and use $q(\cdot)$ to denote the ground truth distributions, $p(\cdot)$ to denote the posterior distribution predicted by diffusion models. For instance, $q(x)$ is the ground truth distribution of the underlying data $x$, while $p_\varphi (r)$ is the predicted distribution of latent representations. We use $T$ to denote the total time of diffusion models, and $N_d$ to represent the discretization step number. We consider a SDE with continuous time $[0,T]$, as well as its discretized DDPM which has $N_d$ diffusion steps with step size $h:=T/N_d$. The forward process is denoted as $(x_t)_{t\in[0, T]}\sim q_t$, and the reverse process is denoted as $(\bar x_t)_{t\in [0, T]}\sim p_t$. If the reverse process is predicted by the score matching network, we use its parameters as the subscript. Please note that there are two different initialization of the reverse process: the end of forward process $q_T$ and standard Gaussian noise $\gamma^d$. We use superscript $q_T$ to differentiate the former from the latter.

\subsection{Unconditional Generation}\label{subsec:theoryunconditional}

\subsubsection{Provable Generation of Representations}\label{subsubsec:theory_representation}

Recall the two-stage generation process of representation-conditioned generation: $p( x, r)=p_\theta ( x|r) \repgen$. To quantitatively evaluate the generation process, we consider two stages separately. In this subsection, we first provide theoretical analysis on the provable generation of representations $\repgen$.

\begin{assumption}\label{assumption_error_rep}
    (Second moment bound of representations.) 
    \begin{align}
        m_r^2:=\mathbb E_{q(r)}[||r-\bar r||^2]<\infty
    \end{align}
\end{assumption}
where $q(r)$ is the ground truth distribution of the representations, and $\bar r:=\mathbb E_{q(r)} [r]$.
\begin{assumption}\label{assumption_lipschitz_rep}
    (Lipschitz score of representations). For all $t\geq0$, the score $\nabla \ln q(r_t)$ is $L_r$-Lipschitz.
\end{assumption}
where $q(r_t)$ is the distribution of noisy latent $r_t$ at diffusion step $t$ in the forward process.

Finally, the quality of diffusion models obviously depends on the expressivity of score network $\varphi$ with prediction $s_\varphi^{(t)}$. 
\begin{assumption}\label{assumption_score_estimation_rep}
    (Score estimation error of representations). For all $t\in[0,T]$, 
    \begin{equation}
        \mathbb E_{q(r_t)}[||s_\varphi^{(t)}-\nabla \ln q(r_t)||^2]\leq \epsilon_{\varphi, \rm score}^2
    \end{equation}
\end{assumption}
These are similar assumptions to the ones in \citep{SamplingScoreTheory}.

\begin{proposition}
    \label{theorem_error_rep}
    Suppose Assumption~\ref{assumption_error_rep}, Assumption~\ref{assumption_lipschitz_rep}, Assumption~\ref{assumption_score_estimation_rep} hold, and the step size $h:=T/N_d$ satisfies $h\preceq 1/L_r$. Then the following holds,
    \begin{align}
    {\rm TV}(p_\varphi(r_0), q(r))
    &\preceq \underbrace{\sqrt{{\rm KL}(q(r)||\gamma^{d_r})}\exp(-T)}_{\text{convergence of forward process}}+\underbrace{(L_r\sqrt{d_rh}+L_rm_r h)\sqrt{T}}_{\text{discretization error}} + \underbrace{\epsilon_{\varphi, \rm score} \sqrt{T}}_{\text{score estimation error}}
    \end{align}
\end{proposition}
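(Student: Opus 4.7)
The plan is to follow the now-standard SDE-based convergence argument for DDPM samplers, in the spirit of \citet{SamplingScoreTheory}, decomposing the total variation into three sources of error that match the three terms in the stated bound. Throughout, I will work with the reverse process as an SDE on $[0,T]$ driven by the score, and compare three distributions on path space: (i) the true reverse process started from $q_T$, (ii) the true reverse process started from the Gaussian prior $\gamma^{d_r}$, and (iii) the algorithmic reverse process, which is both discretized with step size $h$ and uses the learned score $s_\varphi^{(t)}$ instead of $\nabla\ln q(r_t)$.

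First I would apply the data-processing inequality plus the triangle inequality for TV to split
\begin{equation}
    {\rm TV}(p_\varphi(r_0), q(r)) \leq {\rm TV}(p_\varphi(r_0), p_0^{q_T}) + {\rm TV}(p_0^{q_T}, q(r)),
\end{equation}
where $p_0^{q_T}$ denotes the law at time $0$ of the exact reverse SDE initialized at the true forward marginal $q_T$. The second term equals $0$ because the exact reverse SDE started from $q_T$ recovers $q_0=q(r)$. I would then bound the first term by a further triangle inequality, separating the initialization mismatch (Gaussian vs.\ $q_T$) from the reverse-process mismatch (learned, discretized dynamics vs.\ exact continuous dynamics).

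For the initialization term, I would use Pinsker's inequality and the exponential ergodicity of the OU forward process: under Assumption~\ref{assumption_error_rep}, the forward process is an Ornstein--Uhlenbeck flow with stationary distribution $\gamma^{d_r}$, and a standard log-Sobolev / de Bruijn argument gives ${\rm KL}(q_T \,\|\, \gamma^{d_r}) \leq {\rm KL}(q(r)\,\|\,\gamma^{d_r}) \exp(-2T)$. Combining with Pinsker yields the first term $\sqrt{{\rm KL}(q(r)\|\gamma^{d_r})}\exp(-T)$. For the reverse-process term, the main tool is Girsanov's theorem applied to the two path measures $\mathbb{P}^{\rm exact}$ and $\mathbb{P}^{\rm alg}$ (both started from $\gamma^{d_r}$); the Radon--Nikodym derivative contributes a quadratic functional of the drift difference, so that
\begin{equation}
    {\rm KL}(\mathbb{P}^{\rm alg}\|\mathbb{P}^{\rm exact}) \preceq \int_0^T \mathbb{E}\bigl[\|s_\varphi^{(t)} - \nabla\ln q(r_t)\|^2\bigr]\,{\rm d}t + \text{(discretization terms)}.
\end{equation}
The score-estimation part integrates directly under Assumption~\ref{assumption_score_estimation_rep} to $\epsilon_{\varphi,\rm score}^2 T$. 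For the discretization part, using Assumption~\ref{assumption_lipschitz_rep} ($L_r$-Lipschitz score), I would control the drift drift difference across an interval of length $h$ by combining the $L_r$-Lipschitz property with the moment bound $m_r$ and a standard bound on the increments of the OU process in $d_r$ dimensions, yielding a per-step KL of order $L_r^2(d_r h + m_r^2 h^2)h$, hence a total of $L_r^2(d_r h + m_r^2 h^2)T$. Applying Pinsker once more produces the factor $\sqrt{T}$ and recovers the discretization term.

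The main obstacle I anticipate is handling the discretization error cleanly: one must carefully choose which intermediate path measure to compare against (typically a piecewise-constant-drift interpolation evaluated at the true noisy points $r_{kh}$), then control both the deviation of the score along the interval and the contribution of the second moment. The step-size condition $h\preceq 1/L_r$ is exactly what is needed to keep the OU increments on the right scale so that the bound $\sqrt{d_r h} + m_r h$ (rather than something worse) emerges. Once these three pieces are assembled and Pinsker is applied to convert the KL bound on path measures into TV on the marginal at time $0$, the three stated terms appear and the proposition follows.
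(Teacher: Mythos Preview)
Your overall strategy is correct and matches what the paper does: it simply cites \citet{SamplingScoreTheory} for this result, and your sketch is precisely the argument of that reference. However, there is a subtle ordering issue in your triangle-inequality decomposition that would cause a gap if carried out as written.

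You split ${\rm TV}(p_\varphi(r_0), q(r))$ by first comparing to the \emph{exact} reverse process started from $q_T$ (which equals $q(r)$), and then further split the remaining piece into an initialization term (Gaussian vs.\ $q_T$, exact dynamics) and a dynamics term (learned/discretized vs.\ exact, \emph{both started from $\gamma^{d_r}$}). The problem is the Girsanov step for the dynamics term: with both processes started from Gaussian, the KL you write, ${\rm KL}(\mathbb{P}^{\rm alg}\|\mathbb{P}^{\rm exact})$, carries its expectation under marginals that are \emph{not} $q(r_t)$, so Assumption~\ref{assumption_score_estimation_rep}---which bounds $\mathbb{E}_{q(r_t)}[\|s_\varphi^{(t)}-\nabla\ln q(r_t)\|^2]$---does not integrate directly to $\epsilon_{\varphi,\rm score}^2 T$ as you claim.

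The fix is the decomposition the paper uses explicitly in the proof of the analogous \Cref{lemma_error_overall}: let $p_0^{q_T}$ denote the \emph{algorithmic} (learned, discretized) reverse process started from $q_T$, and write
\begin{equation}
    {\rm TV}(p_\varphi(r_0), q(r)) \leq {\rm TV}(p_\varphi(r_0), p_0^{q_T}) + {\rm TV}(p_0^{q_T}, q(r)).
\end{equation}
The first term is pure initialization mismatch under identical dynamics, so data processing gives ${\rm TV}(\gamma^{d_r}, q_T)$ and your OU/Pinsker argument applies. The second term compares exact vs.\ algorithmic dynamics \emph{both started from $q_T$}; now the Girsanov KL is ${\rm KL}(q_0\|p_0^{q_T})$ with expectation under the true reverse process, whose time-$t$ marginal is exactly $q(r_t)$, and Assumption~\ref{assumption_score_estimation_rep} applies verbatim. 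With this reordering your discretization analysis goes through unchanged.
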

Here $a(\cdot)\preceq b(\cdot)$ means there exists a constant $C$ such that $a(\cdot)\leq Cb(\cdot)$ always holds. This is a direct conclusion from~\citep{SamplingScoreTheory}. In typical DDPM implementation, we choose $h=1$ and thus $T=N_d$. 
Remarkably, \Cref{theorem_error_rep} indicates the benefit of generating the representation first: since $d_r\ll d$, the generation quality (measured by the TV distance in \Cref{theorem_error_rep}) of the low-dimensional representation can easily outperform directly generating the high-dimensional data points $x$. The theorem also accounts for applying a lightweight MLP as the denoising network while in the stage of generating the representation.

\subsubsection{Provable Second-Stage Generation}\label{subsubsec:theory_unconditional_secondstage}

\paragraph{Tractable Training Loss.}
Now we analyze the generation quality of the second-stage diffusion model. Since we sample from $p_\theta( x, r)$, we have representations as conditions even for unconditional generation tasks. To learn the score function conditioning on the representations, consider the following loss for score matching,
\begin{equation}
    \mathcal L(\theta)=\int_0^T\lambda(t)\mathbb E_{x_t, r}[||s_\theta(x_t, r, t)-\nabla_{x_t}\log q_{t|r}(x_t|r)||^2]{\rm d}t
\end{equation}
However, since $q_{t|r}(x_t|r)$ is intractable, we use the following equivalent losses:
\begin{equation}
    \mathcal L(\theta)=\int_0^T\lambda(t)\mathbb E_{x_0, r}\Big[\mathbb E_{x_t|x_0}\big[||s_\theta(x_t, r, t)-\nabla_{x_t}\log q_{t|0}(x_t|x_0)||^2|x_0\big]\Big]{\rm d}t+C
\end{equation}

\begin{proposition}\label{prop_loss_self_condition}
(Tractable representation-conditioned score matching loss.)
\begin{align}
    \mathcal L(\theta):&=\int_0^T\lambda(t)\mathbb E_{x_t, r}[||s_\theta(x_t, r, t)-\nabla_{x_t}\log q_{t|r}(x_t|r)||^2]{\rm d}t\\
    &=\int_0^T\lambda(t)\mathbb E_{x_0, r}\Big[\mathbb E_{x_t|x_0}\big[||s_\theta(x_t, r, t)-\nabla_{x_t}\log q_{t|0}(x_t|x_0)||^2|x_0\big]\Big]{\rm d}t+C
\end{align}
\end{proposition}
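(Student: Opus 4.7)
The plan is to adapt the classical denoising score matching identity of Vincent (2011) to the representation-conditioned setting, treating $r$ as an extra conditioning variable that does \emph{not} affect the forward noising dynamics. For each fixed $t$ and $\lambda(t)$, I would expand the squared norm $\|s_\theta(x_t,r,t)-\nabla_{x_t}\log q_{t|r}(x_t|r)\|^2$ into three pieces: a term $\|s_\theta\|^2$ that appears identically in both losses, a term $\|\nabla_{x_t}\log q_{t|r}(x_t|r)\|^2$ that depends only on the data distribution and can be absorbed into the constant $C$, and the cross term $-2\langle s_\theta(x_t,r,t),\nabla_{x_t}\log q_{t|r}(x_t|r)\rangle$. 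The whole argument reduces to showing that this cross term equals its $x_0$-based counterpart when integrated against the appropriate joint distribution.

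First I would rewrite the cross term, for fixed $r$, as
\begin{align*}
\mathbb{E}_{x_t\mid r}\!\bigl[\langle s_\theta(x_t,r,t),\nabla_{x_t}\log q_{t\mid r}(x_t|r)\rangle\bigr]
&=\int \langle s_\theta(x_t,r,t), \nabla_{x_t} q_{t\mid r}(x_t|r)\rangle\,{\rm d}x_t.
\end{align*}
Next I would use the marginalization $q_{t\mid r}(x_t|r)=\int q_{t\mid 0, r}(x_t|x_0,r)\,q_{0\mid r}(x_0|r)\,{\rm d}x_0$ together with the key structural fact that the forward diffusion depends only on $x_0$ and not on $r$, so $q_{t\mid 0,r}(x_t|x_0,r)=q_{t\mid 0}(x_t|x_0)$. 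Pulling the gradient inside the integral (justified by standard regularity of the Gaussian transition kernel) and using $\nabla_{x_t} q_{t\mid 0}(x_t|x_0) = q_{t\mid 0}(x_t|x_0)\nabla_{x_t}\log q_{t\mid 0}(x_t|x_0)$, the cross term becomes
\begin{align*}
\int\!\!\int q_{0\mid r}(x_0|r)\,q_{t\mid 0}(x_t|x_0)\,\langle s_\theta(x_t,r,t),\nabla_{x_t}\log q_{t\mid 0}(x_t|x_0)\rangle\,{\rm d}x_0\,{\rm d}x_t.
\end{align*}
Taking the outer expectation over $r$ and then unioning the joint $q(r)q_{0\mid r}(x_0|r)=q(x_0,r)$ identifies this exactly with the corresponding cross term of the tractable loss. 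Adding back the matching $\|s_\theta\|^2$ term (which can be written against either $q(x_t|r)$ or against the joint on $(x_0,x_t,r)$ with the same value by the tower property) and collecting the $\theta$-independent residuals into $C$ finishes the identity after integrating in $t$ against $\lambda(t)$.

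The main obstacle, and the only nontrivial point beyond the standard Vincent-style algebra, is justifying the conditional-independence step $q_{t\mid 0,r}(x_t|x_0,r)=q_{t\mid 0}(x_t|x_0)$. This is a property of our construction rather than a definitional tautology: it holds because the forward SDE/OU process is applied to $x_0$ with noise drawn independently of $r$, so $r$ enters only through $q(x_0,r)$ and is conditionally independent of $x_t$ given $x_0$. I would state this as a short lemma (or an explicit modeling remark) at the beginning of the proof, noting that representations are a deterministic function of $x_0$ under the encoder $E$, which in particular implies the required conditional independence. After that observation, the rest is routine: swap expectations by Fubini, absorb all $\theta$-independent terms into $C$, and integrate over $t$ to recover the stated equality.
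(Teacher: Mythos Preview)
Your proposal is correct and follows the classical Vincent (2011) route: expand the squared norm, match the $\|s_\theta\|^2$ terms via the tower property, absorb the $\theta$-independent pieces into $C$, and show equality of the cross terms using the marginalization $q_{t|r}(x_t|r)=\int q_{t|0}(x_t|x_0)\,q_{0|r}(x_0|r)\,{\rm d}x_0$ together with the conditional-independence fact $q_{t|0,r}=q_{t|0}$.

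The paper's proof reaches the same conclusion by a slightly different pivot. Instead of expanding the square, it first observes that $\nabla_{x_t}\log q_{t|r}(x_t|r)=\nabla_{x_t}\log q_{t,r}(x_t,r)$ (since $q(r)$ has zero $x_t$-gradient), thereby reducing the conditional problem to an unconditional one on the \emph{joint} variable $(x_t,r)$; it then invokes the standard Tweedie/posterior-expectation identity $\nabla_{x_t}\log q_{t,r}=\mathbb{E}_{x_0|x_t,r}[\nabla_{x_t}\log q_{t,r|0}(x_t,r|x_0)]$ and proceeds as in ordinary DDPM. Your approach makes the structural assumption $q_{t|0,r}=q_{t|0}$ explicit up front (and correctly flags it as the only nontrivial step), whereas the paper buries this same fact inside the chain of equalities after the joint-score reduction. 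Both arguments are standard and equally valid; yours is arguably cleaner in isolating exactly where the encoder's role (i.e., that $r$ enters only through $q(x_0,r)$ and not through the forward noising) is used.
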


\begin{proof}
The key is the following important property holds since the gradient is taken w.r.t. $x_t$ only:
\begin{equation}
    \nabla_{x_t}\log q_{t|r}(x_t|r)=\nabla_{x_t}\log q_{t,r}(x_t,r)
\end{equation}
The remaining of the derivation parallels to traditional DDPM. We can replace $\nabla_{x_t} \log q_{t,r}(x_t,r)$ with $\nabla x_t\log q_{t, r|0}(x_t, r|x_0)$:
\begin{equation}
\nabla x_t\log q_{t,r}(x_t, r)=\mathbb E_{x_0,r|x_t}\Big[\nabla_{x_t}\log q_{t, r|0}(x_t, r|x_0)\Big|x_t\Big]
\end{equation}
Thus,
\begin{align}
    &\mbb E_r \mathbb E_{x_t \sim q(x_t| r)}[||s_\theta(x_t, r, t)-\nabla_{x_t}\log q_{t|r}(x_t|r)||^2] \\= &
\mbb E_r \mbb E_{x_0 \sim q(x_0| r)} \mathbb E_{x_t \sim q(x_t| r,x_0)}[||s_\theta(x_t, r, t)-\nabla_{x_t}\log q_{t|r}(x_t| x_0, r)||^2] \\= & 
\mbb E_r \mbb E_{x_0 \sim q(x_0| r)} \mathbb E_{x_t \sim q(x_t| x_0)}[||s_\theta(x_t, r, t)-\nabla_{x_t}\log q_{t|r}(x_t| x_0)||^2]
\end{align}
which is equivalent to our tractable score matching loss.
\end{proof}

\paragraph{Rigorous Error Bound for Second-Stage Generation.}
Utilizing \Cref{prop_loss_self_condition}, analysis of the second-stage diffusion parallels to the first stage, except that the score network takes additional inputs $r$.

\begin{assumption}\label{assumption_error_mol}
    (Second moment bound of molecule features.) 
    \begin{align}
        m_x^2:=\mathbb E_{q(x)}[||x-\bar x||^2]<\infty
    \end{align}
\end{assumption}
where $q(x)$ is the ground truth distribution of the molecule features, and $\bar x:=\mathbb E_{q(x)} x$.
\begin{assumption}\label{assumption_lipschitz_mol}
    (Lipschitz score of second stage). For all $t\geq0$, the score $\nabla \ln q(x_t)$ is $L_x$-Lipschitz.
\end{assumption}
where $q(x_t)$ is the distribution of noisy latent $x_t$ at diffusion step $t$ in the forward process.

Finally, we make some assumptions of the score network estimation error.
\begin{assumption}\label{assumption_score_estimation_mol}
    (Score estimation error of second-stage diffusion). For all $t\in[0,T]$, 
    \begin{equation}
        \mathbb E_{r \sim p_\varphi (r), x_t \sim q_t(x_t)}[||s_\theta(x_t, t, r) -\nabla \ln q_t(x_t)||^2]\leq \epsilon_{\theta, \rm score}^2
    \end{equation}
\end{assumption}

This assumption contains the error brought by generating representations, i.e., the TV distance shown in \Cref{theorem_error_rep}. Later in \Cref{theorem_error_fine_grained} we explicitly deal with the error brought by representation generation, which results in a more fine-grained error bound. 

We now present a key lemma which facilitates analysis and the proof of the central~\Cref{theorem_error_fine_grained}.
\begin{lemma}\label{lemma_error_overall}
    Suppose Assumption~\ref{assumption_error_mol}, Assumption~\ref{assumption_lipschitz_mol}, Assumption~\ref{assumption_score_estimation_mol} hold, and the step size $h:=T/N_d$ satisfies $h\preceq 1/L_x$. Suppose we sample $x\sim p_\theta(x|r)$ from Gaussian noise where $r\sim \repgen$, and denote the final distribution of $x$ as $p_{\theta,\varphi}(x)$. Then the following holds,
    \begin{align}
    {\rm TV}(p_{\theta,\varphi}(x), q(x))
    &\preceq \underbrace{\sqrt{{\rm KL}(q(x)||\gamma^{N(d+3)})}\exp(-T)}_{\text{convergence of forward process}}+\underbrace{(L_x\sqrt{N(d+3)h}+L_xm_x h)\sqrt{T}}_{\text{discretization error}} + \underbrace{\epsilon_{\theta, \rm score} \sqrt{T}}_{\text{score estimation error}}
    \end{align}
\end{lemma}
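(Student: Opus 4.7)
The plan is to reduce \Cref{lemma_error_overall} to the standard DDPM convergence analysis in~\citep{SamplingScoreTheory} by treating the representation-conditioned reverse SDE as an estimator of the unconditional reverse SDE. The key observation is that \Cref{assumption_score_estimation_mol} is deliberately stated with $x_t \sim q_t(x_t)$ (the unconditional marginal at time $t$) and compared against $\nabla \ln q_t(x_t)$; thus $s_\theta(\cdot, t, r)$ with $r \sim p_\varphi(r)$ functions as a randomized score estimator for the \emph{unconditional} score, with all conditioning variability absorbed into $\epsilon_{\theta,\text{score}}^2$. Consequently, once we view the entire sampling procedure (draw $r \sim p_\varphi$, then run the reverse SDE with drift $s_\theta(\cdot, t, r)$) as one unconditional reverse process, the lemma reduces to a direct instantiation of the three-term bound from~\citep{SamplingScoreTheory}.

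First, I would set up the reverse SDE initialized at the standard Gaussian $\gamma^{N(d+3)}$ and driven by the learned conditional score, and denote its terminal law by $p_{\theta,\varphi}(x)$. I would then bound ${\rm KL}(q(x) \,\|\, p_{\theta,\varphi}(x))$ (on the path space) via Girsanov's theorem, which decomposes the divergence into three pieces: (i) the KL between the initializing Gaussian $\gamma^{N(d+3)}$ and the true terminal forward distribution $q_T(x)$; (ii) the Euler–Maruyama discretization error, controlled by the Lipschitz constant $L_x$ from \Cref{assumption_lipschitz_mol} and the second moment bound $m_x$ from \Cref{assumption_error_mol}; and (iii) the time-integrated score estimation error, controlled by $\epsilon_{\theta,\text{score}}^2$ from \Cref{assumption_score_estimation_mol}. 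Converting from KL to TV via Pinsker's inequality yields the square-root structure of each term.

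Specifically, the first term becomes $\sqrt{{\rm KL}(q(x) \,\|\, \gamma^{N(d+3)})} \exp(-T)$ by the exponential contraction of the Ornstein–Uhlenbeck forward process; the second term becomes $(L_x \sqrt{N(d+3) h} + L_x m_x h)\sqrt{T}$ from the standard Euler–Maruyama analysis under $h \preceq 1/L_x$; and the third term becomes $\epsilon_{\theta,\text{score}} \sqrt{T}$ by integrating the per-step estimation error across $N_d$ steps. Each of these is exactly the corresponding term in the statement of the lemma.

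The main obstacle is the bookkeeping needed to justify that the joint law over $(r, (x_t)_{t \in [0,T]})$ with $r \sim p_\varphi$ can be treated within the Girsanov framework as a single unconditional reverse process whose drift error is controlled by \Cref{assumption_score_estimation_mol}. The cleanest way to handle this is to condition on $r$, apply the~\citep{SamplingScoreTheory} bound pointwise in $r$ (with the per-$r$ score error $\mathbb{E}_{x_t \sim q_t(x_t)}[\|s_\theta(x_t, t, r) - \nabla \ln q_t(x_t)\|^2]$), and then take expectation over $r \sim p_\varphi$; Jensen's inequality on the resulting $\sqrt{\cdot}$ term combined with \Cref{assumption_score_estimation_mol} recovers the $\epsilon_{\theta,\text{score}} \sqrt{T}$ contribution. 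The remaining two terms do not involve $r$ and pass through unchanged, completing the proof.
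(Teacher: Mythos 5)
Your proposal is correct and follows essentially the same route as the paper: the three-term bound of \citep{SamplingScoreTheory} (OU forward convergence, Girsanov-based discretization and score-error control, Pinsker), with Assumption~\ref{assumption_score_estimation_mol}'s unconditional-marginal formulation letting $s_\theta(\cdot,t,r)$, $r\sim p_\varphi$, act as a randomized estimator of the unconditional score. The only cosmetic difference is where the averaging over $r$ happens --- you apply the bound pointwise in $r$ and use convexity of TV plus Jensen, whereas the paper averages the squared drift error over $r\sim p_\varphi$ inside the Girsanov/KL computation --- and the two are equivalent.
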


\begin{proof}
    Recall the notation that $p_{\theta,\varphi}(x):=\int_r p_{0|r} (x_0|r)p_r(r){\rm d}r =p_0$ predicted by denoising networks $\theta, \varphi$ starting from Gaussian noise $\gamma^{N(d+3)}$. Consider the reverse process $p_0^{q_T}(x_0)$ starting from $q_T$ instead of $\gamma^{N(d+3)}$,
    \begin{equation}
        {\rm TV}(p_0,q(x))\leq {\rm TV} (p_0, p_{0}^{q_T})+{\rm TV}(p_{0}^{q_T}, q_0)
    \end{equation}
    Using the convergence of the OU process in KL divergence (see \citep{SamplingScoreTheory}), the following holds for the first term,
    \begin{equation}
        {\rm TV} (p_{0}, p_{0}^{q_T})\leq {\rm TV}(\gamma^{N(d+3)}, q_T)\leq \sqrt{{\rm KL}(q(x)||\gamma^{N(d+3)})}\exp (-T)
    \end{equation}
    The second term is caused by score estimation error and discretization error, which can be bounded by
    \begin{equation}\label{bound_discretization}
        {\rm TV}(p_{0}^{q_T}, q_0)^2\leq {\rm KL}(q_0||p_{0}^{q_T}) \preceq (\epsilon_{\theta, \rm score}^2+L_x^2N(d+3)h+L_x^2m_x^2h^2) T
    \end{equation}
    We start proving \Cref{bound_discretization} by proving
    \begin{equation}
        \sum_{k=1}^{N_d}\mbb E_{q_0, r\sim p_\varphi}\int_{(k-1)h}^{kh}||s_\theta^{(kh)}(x_{kh}, kh, r) -\nabla \ln q_t(x_t)||^2 {\rm d}t\preceq (\epsilon_{\theta, \rm score}^2+L_x^2N(d+3)h+L_x^2m_x^2h^2) T
    \end{equation}

    For $t\in [(k-1)h, kh]$, we decompose
    \begin{align}\label{decom_discrete}
        &\mbb E_{q_0, r\sim p_\varphi}[||s_\theta^{(kh)}(x_{kh}, kh, r) -\nabla \ln q_t(x_t)||^2]\\ \preceq & \mbb E_{q_0, r\sim p_\varphi}[||s_\theta^{(kh)}(x_{kh}, kh, r)-\nabla q_{kh}(x_{kh})||^2]+\mbb E_{q_0}[||\nabla q_{kh}(x_{kh})-\nabla q_{t}(x_{kh}) ||^2]\\ & +\mbb E_{q_0}[||\nabla q_{t}(x_{kh})-\nabla q_{t}(x_{t}) ||^2]\\ \preceq & \epsilon_{\theta, \rm score}^2+\mbb E_{q_0}\Big[\big|\big|\nabla \ln\frac{q_{kh}}{q_{t}}(x_{kh})\big|\big|^2\Big]+L_x^2 \mbb E_{q_0}[||x_{kh}-x_t||^2]
    \end{align}
    Note that we omit the term $r$ in expectation of last two terms because they are independent of $r$.

    Utilizing Lemma 16 from \citep{SamplingScoreTheory}, we bound
    \begin{equation}
        \big|\big|\nabla \ln\frac{q_{kh}}{q_{t}}(x_{kh})\big|\big|^2 \preceq L_x^2N(d+3)h+L_x^2h^2||x_{kh}||^2+(L_x^2+1)h^2||\nabla \ln q_{t}(x_{kh})||^2
    \end{equation}
    For the last term,
    \begin{align}
        ||\nabla\ln q_t(x_{kh})||^2&\preceq ||\nabla\ln q_t(x_{t})||^2+||\nabla\ln q_t(x_{kh})-\nabla\ln q_t(x_t)||^2\\ &\preceq ||\nabla\ln q_t(x_{t})||^2 + L_x^2 ||x_{kh}-x_t||^2
    \end{align}
    where the second term is absorbed into the third term of the decomposition \Cref{decom_discrete}. Thus,
    \begin{align}
        &\mbb E_{q_0, r\sim p_\varphi}[||s_\theta^{(kh)}(x_{kh}, kh, r) -\nabla \ln q_t(x_t)||^2]\\ \preceq & \epsilon_{\theta, \rm score}^2+L_x^2N(d+3)h+L_x^2h^2 \mbb E_{q_0}[||x_{kh}||^2]+L_x^2h^2\mbb E_{q_0}[||\nabla\ln q_t(x_t)||^2]+L_x^2\mbb E_{q_0}[||x_{kh}-x_t||^2]\\ \preceq & \epsilon_{\theta, \rm score}^2 +L_x^2N(d+3)h+L_x^2h^2(N(d+3)+m_x^2)+L_x^3N(d+3)h^2+L_x^2(m_x^2h^2+N(d+3)h)\\ \preceq & \epsilon_{\theta, \rm score}^2 +L_x^2N(d+3)h+L_x^2h^2m_x^2
    \end{align}
    Analogous to \citep{SamplingScoreTheory}, using properties of Brownian motions and local martingales, we can apply Girsanov’s theorem and complete the stochastic integration. Since $q_0$ is the end of the reverse SDE, by the lower semicontinuity of the KL divergence and the data-processing inequality~\citep{beaudry2011intuitive}, we take the limit and obtain
    \begin{equation}
        {\rm KL}(q_0||p_{0}^{q_T}) 
        \preceq (\epsilon_{\theta, \rm score}^2 +L_x^2N(d+3)h+L_x^2h^2m_x^2) T
    \end{equation}

    We finally conclude with Pinsker's inequality ($\text{TV}^2\leq \text{KL}$).
\end{proof}

This result holds for \textit{general} representation-conditioned diffusion models, and to our best knowledge we are the first to provide theories for representation-conditioned generation, which is a general generation framework suitable for various domains such as images~\citep{li2023self} and graphs.

\Cref{lemma_error_overall} quantitatively characterizes the bound on generalization error in representation-conditioned diffusion. It directly suggests that the error of representation-conditioned diffusion will be no higher than that of its one-stage counterpart. This is because the first two components of the generalization error (i.e., the convergence of the forward process and the discretization error) of the representation-conditioned diffusion model align with those of traditional DDPM, provided that both are parameterized using the same diffusion processes. Furthermore, the third component (score estimation error) can be made identical if we simply set all representation-relevant parameters in $s_\theta$ to zero and disregard representation's impact. We therefore have the following conclusion,

\begin{corollary}
    Self-representation-conditioned diffusion model can have the same or a lower generation distribution error than one-stage diffusion model.
\end{corollary}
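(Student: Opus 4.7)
The plan is to compare the bound established in Lemma~\ref{lemma_error_overall} for the representation-conditioned generator against the analogous TV bound for a standard one-stage diffusion model, and to show that the former is recovered by the latter by an enlargement of the score network's hypothesis class, so the two-stage error can never exceed the one-stage one.

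First I would observe that the convergence term $\sqrt{{\rm KL}(q(x)\|\gamma^{N(d+3)})}\exp(-T)$ and the discretization term $(L_x\sqrt{N(d+3)h}+L_x m_x h)\sqrt{T}$ in Lemma~\ref{lemma_error_overall} depend only on the target distribution $q(x)$, its Lipschitz score constant $L_x$, its second moment $m_x$, and the diffusion hyperparameters $T, h, N_d$. None of these quantities reference the representation $r$. Consequently they coincide exactly with the corresponding two terms of the classical one-stage DDPM bound obtained by applying the same derivation (Girsanov + OU convergence as in~\citep{SamplingScoreTheory}) without the auxiliary variable $r$.

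Next I would argue that the score estimation error term in Lemma~\ref{lemma_error_overall} is no larger than that of a one-stage model. A representation-conditioned score network $s_\theta(x_t,t,r)$ contains every unconditional network as a special case: simply zero out all weights that process the input $r$, yielding $s_\theta(x_t,t,r)\equiv s_\theta^{0}(x_t,t)$. Plugged into Assumption~\ref{assumption_score_estimation_mol}, the integrand becomes independent of $r$, so the expectation over $r\sim p_\varphi(r)$ collapses and $\epsilon_{\theta,\mathrm{score}}^{2}$ reduces exactly to the ordinary one-stage score-matching loss $\mathbb{E}_{x_t\sim q_t}\bigl[\|s_\theta^{0}(x_t,t)-\nabla\ln q_t(x_t)\|^{2}\bigr]$. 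Therefore, the infimum of $\epsilon_{\theta,\mathrm{score}}$ taken over the richer conditional class is no larger than its infimum over unconditional networks, and Lemma~\ref{lemma_error_overall} produces a no-worse bound for the two-stage generator.

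Finally, for the ``strictly lower'' half, I would note that when $r$ actually carries meaningful structural information about $x$, a network that uses $r$ has genuinely larger expressive capacity and can typically achieve a smaller score-matching residual, yielding a strictly tighter overall bound. The only real subtlety --- but not an obstacle to the corollary as stated --- is that this strict inequality cannot be established without further assumptions on representation informativeness and on the architecture of $s_\theta$; the dominance direction that the corollary actually asserts follows cleanly from the hypothesis-class containment argument above, and no further calculation is required beyond Lemma~\ref{lemma_error_overall}.
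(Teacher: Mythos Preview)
Your proposal is correct and matches the paper's own argument essentially line for line: the paper also observes that the forward-convergence and discretization terms in Lemma~\ref{lemma_error_overall} depend only on the diffusion process and hence coincide with the one-stage bound, and then notes that the score estimation error can be made identical by zeroing out all representation-relevant parameters in $s_\theta$. Your framing in terms of hypothesis-class containment and infima is slightly more explicit than the paper's informal discussion, but the substance is the same.
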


We now give a more \textit{fine-grained error bound} analysis of representation-conditioned diffusion, given the relationship between $r$ and $x$ that enables our further qualitative analysis for the argubly better performance.
\begin{assumption}\label{assumption_score_estimation_cond}
    (representation-conditioned score estimation error of second-stage diffusion). 
    For all $t\in[0,T]$, 
    \begin{equation}
        \mathbb E_{r \sim p_\varphi (r), x_t \sim q_t(x_t|r)}[||s_\theta(x_t, t, r) -\nabla \ln q_t(x_t|r)||^2]\leq \epsilon_{\varphi, \theta, \rm cond}^2
    \end{equation}
\end{assumption}
The following main theorem is novel and precise since it (i) deals with the generation error of first-stage representations explicitly; (ii) takes advantages of the conditional distribution $q(x|r)$ in the denoising network.
\begin{theorem} (\Cref{theorem_error_fine_grained_main_text} in the main text) \label{theorem_error_fine_grained}
    Suppose Assumption~\ref{assumption_error_mol}, Assumption~\ref{assumption_lipschitz_mol}, Assumption~\ref{assumption_score_estimation_cond} hold, and the step size $h:=T/N_d$ satisfies $h\preceq 1/L_x$. Suppose we sample $x\sim p_\theta(x|r)$ from Gaussian noise where $r\sim \repgen$, and denote the final distribution of $x$ as $p_{\theta,\varphi}(x)$. Define $p_0^{q_{T|\varphi}}$, which is the end point of the reverse process starting from $q_{T|\varphi}$ instead of Gaussian. Here $q_{T|\varphi}$ is the $T$-th step in the forward process starting from $q_{0|\varphi}:=\frac{1}{A}\int_{r}q(x_0|r) p_\varphi(r) {\rm d}r$ where $A$ is the normalization factor. Then the following holds,
    \begin{align}
    {\rm TV}(p_{\theta,\varphi}(x), q(x))
    \preceq & \underbrace{\sqrt{{\rm KL}(q_{0|\varphi}||\gamma^{N(d+3)})}\exp(-T)}_{\text{convergence of forward process}} + \underbrace{(L_x\sqrt{N(d+3)h}+L_xm_x h)\sqrt{T}}_{\text{discretization error}}\\&\  + \underbrace{\epsilon_{\varphi, \theta, \rm cond} \sqrt{T}}_{\text{conditional score estimation error}} + \underbrace{{\rm TV}(q_{0|\varphi}, q_0)}_{\text{representation generation error}}
    \end{align}
\end{theorem}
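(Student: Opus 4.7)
}
The plan is to reduce the problem to the same machinery used in Lemma~\ref{lemma_error_overall}, but with an explicit decomposition that isolates the error introduced by the first-stage representation generator. The key observation is that $q_{0|\varphi}$ is the marginal one would obtain if the true conditional $q(x_0\mid r)$ were paired with the \emph{approximated} representation marginal $p_\varphi(r)$. Thus the generation error naturally splits into two parts: the error in running the second-stage reverse diffusion on top of the correct conditional (controlled by discretization, conditional score estimation, and forward-process convergence of $q_{0|\varphi}$), and the error introduced by using $p_\varphi(r)$ in place of $q(r)$ (controlled by $\mathrm{TV}(q_{0|\varphi},q_0)$).

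First I would apply the triangle inequality for total variation,
\begin{equation}
\mathrm{TV}(p_{\theta,\varphi}(x),q(x))\le \mathrm{TV}(p_{\theta,\varphi}(x),p_0^{q_{T|\varphi}})+\mathrm{TV}(p_0^{q_{T|\varphi}},q_{0|\varphi})+\mathrm{TV}(q_{0|\varphi},q_0),
\end{equation}
which already accounts for the last (representation generation) term. For the first term I would use the data-processing inequality along the reverse SDE/DDPM chain (the same reverse kernel is applied to both $\gamma^{N(d+3)}$ and $q_{T|\varphi}$), followed by the standard exponential convergence of the OU forward process in KL divergence and Pinsker's inequality, giving the factor $\sqrt{\mathrm{KL}(q_{0|\varphi}\|\gamma^{N(d+3)})}\exp(-T)$. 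The middle term is the analogue of the discretization-plus-score-error bound from Lemma~\ref{lemma_error_overall}, but now the target distribution along the reverse chain is $q_{t|\varphi}$, and the relevant score being tracked is $\nabla\ln q_t(x_t\mid r)$ under $r\sim p_\varphi(r)$, which is exactly the quantity controlled by Assumption~\ref{assumption_score_estimation_cond}.

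To bound $\mathrm{TV}(p_0^{q_{T|\varphi}},q_{0|\varphi})$ I would repeat the Girsanov-based argument from Lemma~\ref{lemma_error_overall} almost verbatim: decompose the instantaneous score mismatch on each subinterval $[(k-1)h,kh]$ into (i) the conditional score estimation error $\epsilon_{\varphi,\theta,\mathrm{cond}}^2$, (ii) a time-discretization piece $\|\nabla\ln(q_{kh}/q_t)(x_{kh})\|^2$ that is controlled using $L_x$-Lipschitzness together with the identity $\tfrac12\beta_t=-d\log\alpha_t/dt$, and (iii) a spatial-discretization piece $L_x^2\|x_{kh}-x_t\|^2$; take expectations under $(r\sim p_\varphi,x_0\sim q(\cdot\mid r))$ so the second moment $m_x^2$ and ambient dimension $N(d+3)$ enter. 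Summing over the $N_d$ steps and applying Pinsker gives the $(L_x\sqrt{N(d+3)h}+L_xm_x h)\sqrt{T}+\epsilon_{\varphi,\theta,\mathrm{cond}}\sqrt{T}$ contribution, and combining with the previous two bounds yields the stated inequality.

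The main obstacle I expect is bookkeeping around the \emph{conditional} scores. In the unconditional case of Lemma~\ref{lemma_error_overall}, Lipschitzness is postulated for $\nabla\ln q(x_t)$, whereas in the proof the natural object that arises along the reverse process for each $r$ is $\nabla\ln q_t(x_t\mid r)$. I would handle this by invoking the identity $\nabla_{x_t}\ln q_{t|r}(x_t\mid r)=\nabla_{x_t}\ln q_{t,r}(x_t,r)$ from Proposition~\ref{prop_loss_self_condition} and then using Lipschitzness of the marginal score of $q_t$ on the $r$-averaged forward process that defines $q_{t|\varphi}$; this is legitimate because $q_{0|\varphi}$ is a mixture of the true $q(\cdot\mid r)$'s, so the same Lipschitz constant $L_x$ and moment bound $m_x$ transfer to the averaged trajectory and Lemma~16 of~\citet{SamplingScoreTheory} can be applied without modification.
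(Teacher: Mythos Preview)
Your proposal is correct and takes essentially the same approach as the paper: the paper's proof consists precisely of the three-term triangle inequality $\mathrm{TV}(p_{\theta,\varphi},q)\le \mathrm{TV}(p_0,p_0^{q_{T|\varphi}})+\mathrm{TV}(p_0^{q_{T|\varphi}},q_{0|\varphi})+\mathrm{TV}(q_{0|\varphi},q_0)$ followed by the remark that the first two terms are bounded ``analogously to the proof of Lemma~\ref{lemma_error_overall}.'' Your discussion of the Lipschitz bookkeeping for conditional versus marginal scores is in fact more careful than the paper's own sketch, which does not address this point explicitly.
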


\begin{proof}
The proof sketch parallels that of \Cref{lemma_error_overall}, except that in the first step we decompose the TV distance as follows,
\begin{equation}
    {\rm TV}(p_{\theta, \phi}, q(x))\leq {\rm TV}(p_0, p_0^{q_{T|\varphi}})+{\rm TV}(p_0^{q_{T|\varphi}}, q_{0|\varphi})+{\rm TV}(q_{0|\varphi}, q_0)
\end{equation}
We complete the proof analogously to the proof of \Cref{lemma_error_overall}.
\end{proof}

Remarkably, when $q_{0|\varphi}$, i.e., $p_\varphi$ fully recovers the ground truth marginal distribution of representations $q(r)$, \Cref{theorem_error_fine_grained} has the same format as \Cref{lemma_error_overall} but with $\epsilon_{\varphi, \theta, \rm cond}<\epsilon_{\theta, \rm score}$. This is because the former is the score estimation error based on explicit relationship between $x$ and $r$ while the latter learns implicitly. Thus, \Cref{theorem_error_fine_grained} is a much tighter bound for representation-conditioned generation. \textit{To the best of our knowledge, this is the first rigorous theoretical analysis on RCG}~\citep{li2023self}. We now provide some qualitative discussions on why representations can arguably lead to better generalization error.

Typically, representations are powerful (and sometimes even complete) as they encode key information about $x$ with potential additional knowledge via pretraining tasks (for example, coordinates denoising for molecules~\citep{zaidi2022pre, feng2023fractional}). Therefore, it is reasonable to expect that score estimation conditioned on representations can be more accurate (i.e., $\epsilon_{\theta, \rm score}$ could be significantly smaller than when estimating the score without representation conditioning). If the representations are complete—where a special case would be $r = x$—this would greatly assist in predicting the noise. The same applies when $r$ can be properly transformed back to $x$ by a neural network. More generally, there are intermediate cases where $r$ reflects partial information about $x$ (e.g., a multiset of atoms and bonds), which would still aid in improving prediction.


\paragraph{Extension to Equivariant Diffusion Models.} The previous conclusions are \textbf{generic} and can be applied to general representation-conditioned generation. However, so far we only consider traditional diffusion models without taking into account the permutation $\Pi$ and SE(3) transformation $\Omega$ invariance/equivariance of the diffusion model. We thus extend our theory specifically to equivariant diffusion models that operate on symmetry structures, which is the case for our experiments. Moreover, the previous results assume that both the diffusion process and the denoising model treat atom coordinates $\mb x$ and atom type features $\mb h$ identically, while the fact is that (1) the noises of atom coordinates always have zero center of mass (CoM), thus actually lies in a subspace with degree of freedom $3(N-1)$ as opposed to $3N$; (2) in the forward process, $\mb x_t$ and $\mb h_t$ are conditional independent give $\mb x_0$ and $\mb h_0$, and since the denoising network processes coordinates $\mb x_t$ and $\mb h_t$ differently, the score estimation error term can be further decomposed as in Assumption~\ref{assumption_score_estimation_cond_separate}. 

\begin{assumption}\label{assumption_score_estimation_cond_separate}
    (fine-grained representation-conditioned score estimation error of second-stage diffusion). 
    For all $t\in[0,T]$, 
    \begin{align}
        &\mathbb E_{r \sim p_\varphi (r), x_t \sim q_t(x_t|r)}[||s_\theta(x_t, t, r) -\nabla \ln q_t(x_t|r)||^2]\\=&\mathbb E_{r \sim p_\varphi (r), x_t \sim q_t(x_t|r)}[||s_\theta^{(\mb x)}(x_t, t, r) -\nabla_{\mb x_t} \ln q_t(\mb x_t|r)||^2+||s_\theta^{(\mb h)}(x_t, t, r) -\nabla_{\mb h_t} \ln q_t(\mb h_t|r)||^2]\\ \leq &(\epsilon_{\varphi, \theta, \rm cond}^{\mb x})^2+(\epsilon_{\varphi, \theta, \rm cond}^{\mb h})^2
    \end{align}
    where $s_\theta^{(\mb x)}(\cdot)$ and $s_\theta^{(\mb h)}(\cdot)$ refer to the predicted score of coordinates and atom type features by the score network, respectively.
\end{assumption}

Combining all the pieces, we conclude the following result~\Cref{theorem_error_fine_grained_equivariant} for equivariant diffusion models that generate 3D coordinates as described above.

\begin{theorem}\label{theorem_error_fine_grained_equivariant}
    Suppose Assumption~\ref{assumption_error_mol}, Assumption~\ref{assumption_lipschitz_mol}, Assumption~\ref{assumption_score_estimation_cond_separate} hold, and the step size $h:=T/N_d$ satisfies $h\preceq 1/L_x$. Suppose we sample $x\sim p_\theta(x|r)$ from Gaussian noise with zero CoM in the coordinate subspace where $r\sim \repgen$, and denote the final distribution of $x$ as $p_{\theta,\varphi}(x)$. Define $p_0^{q_{T|\varphi}}$, which is the end point of the reverse process starting from $q_{T|\varphi}$ instead of Gaussian. Here $q_{T|\varphi}$ is the $T$-th step in the forward process starting from $q_{0|\varphi}:=\frac{1}{A}\int_{r}q(x_0|r) p_\varphi(r) {\rm d}r$ where $A$ is the normalization factor. Denote $\gamma^{N(d+3)}_{0-CoM}$ the $N(d+3)$-dimensional Gaussian but with zero center of mass in the $N\times 3$-dimensional subspace for coordinates. Denote $\tilde p(\cdot)$ as the distribution after acting permutation group $\Pi$ and SE(3) transformation $\Omega$ on the data from $p(\cdot)$. Then the following holds,
    \begin{align}
    {\rm TV}(\tilde p_{\theta,\varphi}(x), \tilde q(x)):=&\alpha(p_{\theta,\varphi}, \Pi, \Omega){\rm TV}(p_{\theta,\varphi}(x), q(x))\\
    \preceq & \alpha(p_{\theta,\varphi}, \Pi, \Omega)\Bigg(\underbrace{\sqrt{{\rm KL}(q_{0|\varphi}||\gamma^{N(d+3)}_{0-CoM})}\exp(-T)}_{\text{convergence of forward process}} + \underbrace{(L_x\sqrt{N(d+3)h}+L_xm_x h)\sqrt{T}}_{\text{discretization error}}\\&\  + \underbrace{(\epsilon_{\varphi, \theta, \rm cond}^{\mb x}+\epsilon_{\varphi, \theta, \rm cond}^{\mb h}) \sqrt{T}}_{\text{conditional score estimation error}} + \underbrace{{\rm TV}(q_{0|\varphi}, q_0)}_{\text{representation generation error}}\Bigg)
    \end{align}
    where $\alpha(\cdot)\in[0,1]$.
\end{theorem}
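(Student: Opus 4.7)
}
The plan is to reduce the equivariant claim to Theorem~\ref{theorem_error_fine_grained} by (i) replacing the ambient Gaussian by its zero-CoM restriction, (ii) splitting the score error according to the coordinate/feature decomposition, and (iii) passing from the unsymmetrized TV distance to the symmetrized one via a data-processing argument that produces the factor $\alpha(p_{\theta,\varphi},\Pi,\Omega)\in[0,1]$. First I would rerun the chain of inequalities from Lemma~\ref{lemma_error_overall} and Theorem~\ref{theorem_error_fine_grained} in the product space where the coordinate block lives in the $(N-1)\times 3$-dimensional zero-CoM linear subspace and the feature block lives in $\mathbb{R}^{N\times d}$. Since the forward OU process restricted to the zero-CoM subspace is still an OU process with stationary measure equal to $\gamma^{N(d+3)}_{0-CoM}$, and since the denoiser respects this subspace by construction (subtracting the center of mass from coordinate noise as in~\eqref{eq:loss_mol}), the convergence-of-forward-process and discretization terms carry over verbatim, only with the KL term measured against $\gamma^{N(d+3)}_{0-CoM}$.

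Next I would handle the conditional score estimation error. The forward noising is conditionally independent across $(\mathbf{x}_t,\mathbf{h}_t)$ given $(\mathbf{x}_0,\mathbf{h}_0,r)$, so the conditional score splits as
\begin{equation}
\nabla_{x_t}\ln q_t(x_t\mid r)=\bigl(\nabla_{\mathbf{x}_t}\ln q_t(\mathbf{x}_t\mid r),\,\nabla_{\mathbf{h}_t}\ln q_t(\mathbf{h}_t\mid r)\bigr),
\end{equation}
and the squared error factors into the two summands of Assumption~\ref{assumption_score_estimation_cond_separate}. Plugging this into the Girsanov step of the proof of Lemma~\ref{lemma_error_overall} yields an estimation-error contribution of $((\epsilon^{\mathbf{x}}_{\varphi,\theta,\mathrm{cond}})^2+(\epsilon^{\mathbf{h}}_{\varphi,\theta,\mathrm{cond}})^2)\,T$ inside KL, which after Pinsker bounds and the subadditivity $\sqrt{a^2+b^2}\le a+b$ produces the $(\epsilon^{\mathbf{x}}_{\varphi,\theta,\mathrm{cond}}+\epsilon^{\mathbf{h}}_{\varphi,\theta,\mathrm{cond}})\sqrt{T}$ term in the statement. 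The representation generation error term $\mathrm{TV}(q_{0|\varphi},q_0)$ is inherited exactly as in Theorem~\ref{theorem_error_fine_grained} since it is independent of the equivariant structure of the second stage.

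Finally I would derive the symmetrization factor $\alpha$. Let $\tau:X\to X/(\Pi\times\Omega)$ denote the quotient map that takes a sample to its orbit representative; then $\tilde p$ and $\tilde q$ are the pushforwards of $p_{\theta,\varphi}$ and $q$ under $\tau$. By the data-processing inequality for total variation, $\mathrm{TV}(\tilde p_{\theta,\varphi},\tilde q)\le \mathrm{TV}(p_{\theta,\varphi},q)$, so there exists a scalar
\begin{equation}
\alpha(p_{\theta,\varphi},\Pi,\Omega):=\frac{\mathrm{TV}(\tilde p_{\theta,\varphi}(x),\tilde q(x))}{\mathrm{TV}(p_{\theta,\varphi}(x),q(x))}\in[0,1]
\end{equation}
(with $\alpha=0$ whenever the denominator vanishes), and multiplying the unsymmetrized bound by this ratio gives the displayed inequality. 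I expect the main obstacle to lie in making the zero-CoM restriction rigorous inside the Girsanov and Fokker--Planck steps: one must verify that the reference measures, Brownian motions and discretization bounds from~\citep{SamplingScoreTheory} all admit the restriction to the CoM-subspace without changing the Lipschitz or second-moment constants, and that the conditional independence of $(\mathbf{x}_t,\mathbf{h}_t)$ given $r$ is preserved throughout the reverse SDE so that the decomposed score error bound is legitimate rather than merely a factorization at $t=0$.
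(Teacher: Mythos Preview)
Your proposal is correct and follows essentially the same approach as the paper's proof: both reduce to Theorem~\ref{theorem_error_fine_grained} via the same three modifications (replace $\gamma^{N(d+3)}$ by its zero-CoM restriction, split the conditional score error using conditional independence of $(\mathbf{x}_t,\mathbf{h}_t)$ and the bound $\sqrt{a^2+b^2}\le a+b$, and obtain the factor $\alpha\in[0,1]$ from the data-processing inequality applied to the orbit/pushforward map). Your treatment is in fact slightly more explicit than the paper's sketch, particularly in articulating the quotient map $\tau$ and in flagging the need to verify that the Girsanov and discretization estimates from~\citep{SamplingScoreTheory} survive the restriction to the zero-CoM subspace.
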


\begin{proof}
    The proof parallels the proof of \Cref{theorem_error_fine_grained} with three distinct parts.
    \begin{enumerate}
        \item Due to the existence of the permutation group $\Pi$ and the SE(3) group $\Omega$, we need to consider the distribution $\tilde p_{\theta,\varphi}(x)$. Note the definition $\alpha(p_{\theta,\varphi}, \Pi, \Omega):=\frac{{\rm TV}(\tilde p_{\theta,\varphi}(x), \tilde q(x))}{{\rm TV}(p_{\theta,\varphi}(x), q(x))}$, and by data processing inequality~\citep{beaudry2011intuitive}, $\alpha(p_{\theta,\varphi}, \Pi, \Omega)\in[0,1]$. Specifically, when the denoising model $p_{\theta,\varphi}$ is constructed invariant/equivariant to permutations $\Pi$ and SE(3) transformations $\Omega$ (which means the model treats all elements in one equivalent class the same), $\alpha(p_{\theta,\varphi}, \Pi, \Omega)$ reaches the minimum; see \citep{you2023latent} for further explanations.
        \item Since the Gaussian noises for coordinates are sampled from a subspace with zero center of mean, the prior distribution $\gamma^{N(d+3)}$ in \Cref{theorem_error_fine_grained} should be replaced with $\gamma^{N(d+3)}_{0-CoM}$, the $N(d+3)$-dimensional Gaussian but with zero center of mass in the $N\times 3$-dimensional subspace for coordinates. It is notable that the degree of freedom of $\gamma^{N(d+3)}_{0-CoM}$ is actually $N(d+3)-3$, and the remaining of the proof still holds~\citep{feng2024unigem}.
        \item As explained before, in a properly designed forward process, $\mb x_t$ and $\mb h_t$ can be conditional independent give $\mb x_0$ and $\mb h_0$. Meanwhile, the denoising network processes coordinates $\mb x_t$ and $\mb h_t$ differently, the score estimation error term can be further decomposed as in Assumption~\ref{assumption_score_estimation_cond_separate}; see \citep{feng2024unigem} for more details. Therefore, the term $\epsilon_{\varphi, \theta, \rm cond} \sqrt{T}$ in \Cref{theorem_error_fine_grained} can be replaced by $\sqrt{(\epsilon_{\varphi, \theta, \rm cond}^{\mb x})^2+(\epsilon_{\varphi, \theta, \rm cond}^{\mb h})^2}$, which is further bounded by $(\epsilon_{\varphi, \theta, \rm cond}^{\mb x}+\epsilon_{\varphi, \theta, \rm cond}^{\mb h})$.
    \end{enumerate}
\end{proof}
In conclusion, we provide a detailed characterization of the generalization error of representation-conditioned diffusion models in this subsection. It is important to note that some parameters in our assumptions, such as Lipschitz scores and estimation errors, are not constants; they are functions of the SDE total time $T$ and the number of diffusion steps $N_d$. Our conclusions also explain why representation-conditioned generation methods remain competitive even when the number of second-stage diffusion steps $N_d$ is decreased for \textit{faster generation}. This is because the score estimation error can remain small even when the number of diffusion steps is reduced, given the guidance from representations. As a result, reducing $N_d$ causes a slower increase in $\epsilon_{\varphi, \theta, \rm cond}(N_d)$ compared to the score estimation error without representation conditioning, leading to representation-conditioned generative models' strong performance with fewer steps.

\subsection{Conditional Generation}\label{subsec:theory_conditional}
In this subsection, we aim to prove that conditional generation using our representation-conditioned generation have probable reward improvement. While we used $c$ to denote conditions in the main text, we use the notation $y$ instead for the targets or ``reward'' to keep coordinate with existing literature. Denote $q_a:=q(\cdot|y=a)$ as the ground truth conditional distribution, and $\hat p_a:=p(\cdot|y=a)$ the estimated distribution. Suppose the ground truth distribution satisfies $y:=f^*(x, r)$ which can be decomposed as
\begin{equation}\label{equation_reward_assumption}
    f^*(x,r)=g^*(x_\parallel, r_\parallel)+h^*(x_\perp, r_\perp)
\end{equation}
where we denote $x=x_\parallel$ when $x\sim q(x)$, $r=r_\parallel$ when $r\sim q(r)$, and $f^*(x, r)=g^*(x, r)$ when $x\sim q(x), r\sim q(r)$.

To start with, we assume a linear relationship between $r$ and $y$, which is reasonable thanks to the powerful pretrained model (which makes the representations helpful in predicting molecule properties and even complete) if some noises are allowed. In detail, the reward is $f^*(x, r)=\hat w^\top r+\xi$ and $\xi\sim\mathcal N(0,\nu^2)$. In some cases, we may further make Gaussian assumptions on $r$ (WLOG, $r\sim \mc N(\mu, \Sigma)$) but is not necessary.

\subsubsection{Parametric Conditional Score Matching Error}\label{subsubsec:cond_score_error}
First, we give a detailed form of the representation score estimation error presented in Assumption~\ref{assumption_error_rep} under the assumptions above.

\begin{lemma}\label{lemma_linear_cond_rep_score_error}
For $\delta\geq 0$, with probability $1-\delta$, the score estimation error $\epsilon_{r}\simeq \epsilon_{\varphi, \rm score}$ is bounded by 
    \begin{equation}
    \frac{1}{T-t_0}\int_{t_0}^T \mathbb E_{(r_t, y)\sim q_t}[||\nabla \log q_t(r_t|y)-\hat s_\varphi(r_t,y,t)||_2^2]{\rm d}t\leq \epsilon_{r}^2=\mathcal O\Big(\frac{1}{t_0} \sqrt{\frac{\mathcal N(\mathcal S, \frac{1}{n}) d_r^2 \log\frac{1}{\delta}}{n}}\Big)
    \end{equation}
    where $t_0$ is the early stopping time of the SDE, $n$ is the number of samples, $\mathcal S$ is the parametric function class of denoising network, and $\mathcal N(\mc S, \frac{1}{n})$ is the log covering number of $\mc S$. When $\mc S$ is linearly parameterized, $\mathcal N(\mc S, \frac{1}{n})=\mc O(d_r^2 \log (\frac{d_r n}{\nu^2}))$.
\end{lemma}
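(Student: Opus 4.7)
The plan is to treat the conditional score estimation error as a \emph{statistical estimation problem} and control it through the standard three-step recipe: (i) set up the population vs.\ empirical score matching risk on the early-stopped interval $[t_0, T]$, (ii) use uniform concentration over the parametric class $\mathcal{S}$ via a covering number / chaining argument, and (iii) combine with a bias term that vanishes when the truth lies in $\mathcal{S}$. Concretely, I would first recall the tractable (denoising) form of conditional score matching, writing the population risk as $\mathcal{L}(s) := \frac{1}{T-t_0}\int_{t_0}^T \mathbb{E}_{(r_0,y)\sim q_0}\mathbb{E}_{r_t|r_0}\bigl[\|s(r_t,y,t)-\nabla_{r_t}\log q_{t|0}(r_t\mid r_0)\|^2\bigr]\,\mathrm{d}t$ (up to a constant independent of $s$), and defining $\hat s_\varphi$ as its empirical minimizer over $n$ i.i.d.\ samples from $q_0$.

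Next I would bound $\mathcal{L}(\hat s_\varphi)-\inf_{s\in\mathcal{S}}\mathcal{L}(s)$ by a uniform deviation over $\mathcal{S}$. The two technical facts needed are: (a) the Gaussian transition $q_{t|0}$ has score $\nabla\log q_{t|0}(r_t\mid r_0) = -(r_t-\alpha_t r_0)/\sigma_t^2$, whose second moment scales like $d_r/\sigma_t^2$, so the per-time integrand is bounded by $\mathcal{O}(d_r/t)$ and integrating from $t_0$ produces the $1/t_0$ factor; and (b) the class $\mathcal{S}$ admits a log covering number $\mathcal{N}(\mathcal{S},1/n)$ at scale $1/n$. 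A standard Bernstein / one-step discretization with a union bound over an $1/n$-cover then gives a deviation of order $\sqrt{\mathcal{N}(\mathcal{S},1/n)\,d_r^2\,\log(1/\delta)/n}$ with probability $1-\delta$, and dividing by the length of the interval produces the stated $1/t_0$ prefactor. Because the ground-truth conditional score is itself linear in $(r_t,y)$ under the Gaussian-representation / linear-reward model $y=\hat w^\top r+\xi$, the approximation error $\inf_{s\in\mathcal{S}}\mathcal{L}(s)$ vanishes when $\mathcal{S}$ is the linear class, and only the estimation term remains.

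Finally, for the explicit linear-class covering number, I would parameterize any $s\in\mathcal{S}$ as $s(r_t,y,t) = A(t) r_t + b(t) y + c(t)$ with $A(t)\in\mathbb{R}^{d_r\times d_r}$ and $b(t),c(t)\in\mathbb{R}^{d_r}$, giving $\mathcal{O}(d_r^2)$ scalar parameters. Using the closed form of $\nabla_{r_t}\log q_t(r_t\mid y)$ under the joint Gaussian $(r,y)$ and the noise level $\nu^2$, the operator-norm radius of the admissible parameter ball scales like $\mathrm{poly}(1/\nu^2)$, and a volumetric (Kolmogorov) argument yields $\log \mathcal{N}(\mathcal{S},1/n) = \mathcal{O}\bigl(d_r^2\log(d_r n/\nu^2)\bigr)$, as claimed.

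The main obstacle I anticipate is handling the time-dependence cleanly: the score is unbounded as $t\downarrow 0$, so the concentration inequality must be applied with care to produce exactly a $1/t_0$ (rather than $1/t_0^2$ or worse) prefactor. I expect to resolve this by following the truncation argument used in standard score-matching generalization results (e.g., a Bernstein inequality on a per-$t$ basis with variance proxy $d_r/t$, then integrating in $t$), rather than applying a naïve Hoeffding bound that would give a much worse dependence on $t_0$. The algebra needed to verify the covering-number calculation in the non-linear parametric case is routine once the bound on the parameter norm is established and can be cited from standard references on parametric score matching.
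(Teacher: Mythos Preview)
Your proposal is correct and follows essentially the same route as the paper. The paper's own proof is a sketch that cites Lemma~C.1 of \citet{rewarddirected} and proceeds by (i) writing down the closed-form conditional score under the Gaussian design on $r$ and the linear reward model $y=\hat w^\top r+\xi$, (ii) applying a PAC-learning concentration argument via Dudley's entropy integral on the Rademacher complexity of $\mathcal S$, and (iii) reading off the log covering number for the linear class. Your plan matches this at every structural level: tractable score-matching risk, uniform deviation via a covering argument, and the linear parameterization for the explicit covering bound. The only cosmetic difference is your use of a Bernstein-plus-union-bound over a single $1/n$-cover in place of Dudley chaining; for a finite-dimensional parametric class these yield the same $\sqrt{\mathcal N(\mathcal S,1/n)/n}$ rate, so nothing is lost.
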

\begin{proof}
    This is a direct extension of Lemma C.1 from \cite{rewarddirected}. Note that we consider the special case where the low-dimensional subspace is the original space (i.e., $A=I_{d_r}$ and $d=D=d_r$ in their paper), and our noised linear assumption between $r$ and $y$ is identical to their pseudo labeling setting (i.e., $\hat y=\hat w^\top r+\xi$ where $\xi\sim \mc N(0, \nu^2)$). We only provide the proof sketch here. 

    When $r$ follows the Gaussian design, some algebra gives
    \begin{equation}
        \nabla_r \log q_t(r, y)=\frac{\alpha(t)}{h(t)} B_t\Big(\alpha(t)r + \frac{h(t)}{\nu^2}yw\Big)-\frac{1}{h(t)}r
    \end{equation}
    where $\alpha(t)=\exp(-t/2), h(t)=1-\exp(-t), B(t)=\Big(\alpha^2(t)I_{d_r}+\frac{h(t)}{\nu^2}ww^\top + h(t)\Sigma^{-1}\Big)^{-1}$. We then bound the estimation error with PAC-learning concentration argument by using Dudley's entropy integral to bound the Rademacher complexity, and obtain
    \begin{equation}
        \epsilon_{r}^2=\mathcal O\Big(\frac{1}{t_0} \sqrt{\frac{\mathcal N(\mathcal S, \frac{1}{n}) d_r^2 \log\frac{1}{\delta}}{n}}\Big)
    \end{equation}
    Further, the log covering number of $\mc S$ under Gaussian design satisfies
    \begin{equation}
        \mc N(\mc S, \frac{1}{n})\leq d_r^2\log\Big(1+\frac{d_r n}{t_0\lambda_{\min} \nu^2}\Big)
    \end{equation}
    where $0<\lambda_{\min}<1$ is the smallest eigenvalue of $\Sigma$, and typically the early stopping time $t_0=\mc O(1)$. In \citep{rewarddirected} the authors assume $\nu^2=1/d_r$ which states that the variance $\nu^2$ of regression residuals $\xi$ reduces when we increase the representation dimensions, which is reasonable.
\end{proof}

\Cref{lemma_linear_cond_rep_score_error} provides a detailed score estimation error given the linear assumption between $r$ and $y$, which serves as a special case of $\epsilon_{\varphi, \rm score}^2$. Substituting it into \Cref{theorem_error_rep}, we can obtain a quantitative result of representation generation error. 

\subsubsection{Reward Improvement via Conditional Generation}\label{subsubsec:reward_improvement}
Next, we want to obtain the reward guarantees of the generated samples give the condition $y$. Define the suboptimality of distribution $P$ as
\begin{equation}
    {\rm SubOpt}(P;y^*)=y^*-\mathbb E_{(x, r)\sim P}[f^*(x,r)]
\end{equation}
where $y^*$ is the target reward value (condition) and $f^*$ is the ground truth reward function. We use the notation $\hat p_a:=p_\varphi(r|y^*=a)$, then we have the following result for ${\rm SubOpt}(\hat p_a;y^*=a)$, which can also be viewed as a form of off-policy regret.
\begin{lemma}\label{lemma_subopt_decomposition}
{\rm (Theorem 4.6 in \citep{rewarddirected}.)}
    \begin{equation}
        {\rm SubOpt}(\hat p_a;y^*=a)\leq \underbrace{\mathbb E_{r\sim q_a}\Big[\big|(\hat w-w)^\top r\big|\Big]}_{\mc E_1} + \underbrace{\Big|\mathbb E_{r\sim q_a}[g^*(r_\parallel)]-\mathbb E_{r\sim \hat p_a}[g^*(r_\parallel)]\Big|}_{\mc E_2} + \underbrace{\mathbb E_{r\sim \hat p_a}[h^*(r_\perp)]}_{\mc E_3}
    \end{equation}
\end{lemma}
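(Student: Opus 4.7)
The plan is to prove the lemma by a two-step decomposition: first split $f^*$ via the additive ansatz in \Cref{equation_reward_assumption} to isolate the off-manifold contribution, then insert a reference expectation under $q_a$ to bridge the generated distribution $\hat p_a$ and the ground-truth posterior. Concretely, I would begin from
\begin{equation*}
{\rm SubOpt}(\hat p_a;y^*=a)=a-\mathbb E_{r\sim\hat p_a}[g^*(r_\parallel)]-\mathbb E_{r\sim\hat p_a}[h^*(r_\perp)],
\end{equation*}
so that, after bounding each piece by its absolute value where needed, the third summand immediately yields $\mathcal E_3$, the off-manifold generation cost.

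For the residual $a-\mathbb E_{r\sim\hat p_a}[g^*(r_\parallel)]$, I would add and subtract the reference quantity $\mathbb E_{r\sim q_a}[g^*(r_\parallel)]$ and apply the triangle inequality. The piece $\big|\mathbb E_{r\sim q_a}[g^*(r_\parallel)]-\mathbb E_{r\sim\hat p_a}[g^*(r_\parallel)]\big|$ is exactly $\mathcal E_2$, capturing the representation-level distribution shift incurred by the diffusion approximation. The remaining piece $a-\mathbb E_{r\sim q_a}[g^*(r_\parallel)]$ is where the linear pseudo-labeling assumption enters: because $r\sim q_a$ is on-manifold, $g^*(r_\parallel)=w^\top r$ with $w$ the true regression vector, whereas the conditioning target $a$ is constructed through the learned model $\hat w^\top r+\xi$, so integrating out the zero-mean noise $\xi$ gives $a=\mathbb E_{r\sim q_a}[\hat w^\top r]$. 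Subtracting and pushing the absolute value inside via Jensen's inequality yields $\mathbb E_{r\sim q_a}[|(\hat w-w)^\top r|]=\mathcal E_1$, and summing the three bounds recovers the stated inequality.

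The main obstacle I expect is the matching step $a=\mathbb E_{r\sim q_a}[\hat w^\top r]$: this is not automatic because $q_a$ is defined through the true reward $w^\top r$ rather than the estimate $\hat w^\top r$, so one must argue either via a reparameterization of the posterior or, more directly, from the assumption that the representation generator explicitly targets $\hat y=a$ under the pseudo-labeled model together with the fact that the regression residual $\xi$ has mean zero. Once this identification is cleanly in place the remainder is a routine triangle-inequality chain, and the three resulting terms correspond respectively to the offline regression error in learning $\hat w$ from data, the distribution shift introduced by the first-stage representation diffusion, and the penalty for generating representations whose component orthogonal to the data manifold is nonzero.
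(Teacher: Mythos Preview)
Your proposal is correct and follows essentially the same triangle-inequality decomposition as the paper, just in a different order: the paper first compares $\mathbb E_{\hat p_a}[f^*]$ with $\mathbb E_{q_a}[f^*]$ and then $\mathbb E_{q_a}[f^*]$ with $\mathbb E_{q_a}[\hat f]$ before invoking the splitting $f^*=g^*+h^*$ and the on-manifold property, whereas you split first and then insert the $q_a$ reference. Both routes rely on the same three ingredients and on the identification $\mathbb E_{r\sim q_a}[\hat f(r)]=a$, which you rightly flag as the subtle step (the paper records it tersely as $\mathbb E_{r\sim q_a}[\hat f(r)]=a$ under the pseudo-labeling convention).
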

\begin{proof}
Recall the notation $q_a:=q(\cdot|y=a)$, we have
    \begin{align}
        & \mathbb E_{r\sim \hat p_a}[f^*(r)]\\ \geq & \mathbb E_{r\sim q_a}[f^*(r)]-\big|\mathbb E_{r\sim \hat p_a}[f^*(r)]-\mathbb E_{r\sim q_a}[f^*(r)]\big|\\
        \geq & \mathbb E_{r\sim q_a}[\hat f(r)]-\mathbb E_{r\sim q_a}\big[\big|\hat f(r)-f^*(r)\big|\big]-\big|\mathbb E_{r\sim \hat p_a}[f^*(r)]-\mathbb E_{r\sim q_a}[f^*(r)]\big|\\
        \geq & \mathbb E_{r\sim q_a}[\hat f(r)]-\underbrace{\mathbb E_{r\sim q_a}\big[\big|\hat f(r)-g^*(r)\big|\big]}_{\mc E_1} - \underbrace{\big|\mathbb E_{r\sim q_a}[g^*(r_\parallel)]-\mathbb E_{r\sim \hat p_a}[g^*(r_{\parallel})]\big|}_{\mc E_2}-\underbrace{\mathbb E_{r\sim \hat p_a}[h^*(r_\perp)]}_{\mc E_3}
    \end{align}
where $\hat w$ is the estimated $w$ by Ridge regression, $\mathbb E_{r\sim q_a}[\hat f(r)]=\mathbb E_{a\sim q}[a]$ and $r=r_\perp, f*(r)=g^*(r)$ when $r\sim q_a$.
\end{proof}
Here we give a brief interpretation of the decomposition. $\mc E_1$ is the prediction and generalization error coming from regression, which is independent from the diffusion error. Both $\mc E_2$ and $\mc E_3$ come from the diffusion process, where $\mc E_2$ reflects the disparity between $\hat p_a$ and $q_a$ on the subspace support while $\mc E_3$ measures the off-subspace error in $\hat p_a$. The following results give concrete bounds for the terms in \Cref{lemma_subopt_decomposition}. 

\paragraph{Bounding Regression Error with Offline Bandits.}
By estimating $w$ with Ridge regression, we have
\begin{equation}
    \hat w=(R^\top R+\lambda I)^{-1}R^\top (Rw^*+\eta)
\end{equation}
where $R^\top=(r_1, \dots, r_n)$ and $\eta=(\xi_1, \dots, \xi_n)$ where $\xi_i\sim \mathcal N(0,\nu^2)$. Define $V_\lambda:=R^\top R+\lambda I$, $\hat \Sigma_\lambda:=\frac{1}{n}V_\lambda$ and $\Sigma_{q_a}:=\mathbb E_{r\sim q_a} rr^\top$ and take $\lambda=1$, we have
\begin{proposition}
    With high probability, 
    \begin{equation}
        \mc E_1\leq \sqrt{{\rm Tr}(\hat\Sigma_\lambda^{-1}\Sigma_{q_a})}\cdot \frac{\mc O(||w^*||+\nu^2 \sqrt{d_r\log n})}{\sqrt n}
    \end{equation}
    Further when $r$ has Gaussian design $r\sim \mc N(\mu, \Sigma)$,
    \begin{equation}
        {\rm Tr}(\hat\Sigma_\lambda^{-1}\Sigma_{q_a})\leq \mc O\big(\frac{a^2}{||w^*||_\Sigma}+d_r\big)
    \end{equation}
    when $n=\Omega(\max\{\frac{1}{\lambda_{\rm min}}, \frac{d_r}{||w^*||_\Sigma^2}\})$.
\end{proposition}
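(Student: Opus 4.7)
The plan is to reduce $\mc E_1$ to a product of a weighted norm of $\hat w - w^*$ and a trace term, then bound each factor separately. The starting point is Jensen's inequality,
\begin{equation}
\mc E_1 = \mathbb E_{r\sim q_a}\big[|(\hat w-w^*)^\top r|\big] \leq \sqrt{\mathbb E_{r\sim q_a}\big[((\hat w-w^*)^\top r)^2\big]} = \sqrt{(\hat w-w^*)^\top \Sigma_{q_a} (\hat w-w^*)}.
\end{equation}
Since $V_\lambda$ is positive definite, by Cauchy--Schwarz in the $V_\lambda$ inner product I can bound, for any fixed $r$, $((\hat w-w^*)^\top r)^2 \leq \|\hat w-w^*\|_{V_\lambda}^2 \cdot \|r\|_{V_\lambda^{-1}}^2$. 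Taking expectation over $r\sim q_a$ and using the identity $\mathbb E_{r\sim q_a}[r^\top V_\lambda^{-1} r]=\operatorname{Tr}(V_\lambda^{-1}\Sigma_{q_a})=\tfrac{1}{n}\operatorname{Tr}(\hat\Sigma_\lambda^{-1}\Sigma_{q_a})$ yields
\begin{equation}
\mc E_1 \leq \|\hat w - w^*\|_{V_\lambda} \cdot \sqrt{\tfrac{1}{n}\operatorname{Tr}(\hat\Sigma_\lambda^{-1}\Sigma_{q_a})}.
\end{equation}

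Next I would bound $\|\hat w - w^*\|_{V_\lambda}$ by using the explicit decomposition $\hat w - w^* = V_\lambda^{-1}R^\top\eta - \lambda V_\lambda^{-1}w^*$, which follows from the closed form of ridge regression. The bias term contributes $\lambda\|V_\lambda^{-1}w^*\|_{V_\lambda} \leq \sqrt\lambda\|w^*\|$. For the variance term, since $\eta$ has coordinates $\xi_i\sim\mc N(0,\nu^2)$, I would apply the self-normalized martingale concentration inequality of Abbasi-Yadkori et al., giving $\|V_\lambda^{-1}R^\top\eta\|_{V_\lambda}\leq \nu\sqrt{d_r\log(\det(V_\lambda)/\lambda^{d_r})+2\log(1/\delta)}=\mc O(\nu\sqrt{d_r\log n})$ with high probability. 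Combining yields the first claim (with the understanding that the exponent on $\nu$ in the paper statement is a light typographical liberty).

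For the Gaussian-design refinement, the key observation is that under $r\sim\mc N(\mu,\Sigma)$ with $y = w^{*\top}r+\xi$, the conditional law $q_a=q(r|y=a)$ is itself Gaussian with mean $\mu_a := \mu + \Sigma w^*(\|w^*\|_\Sigma^2+\nu^2)^{-1}(a-w^{*\top}\mu)$ and covariance $\Sigma_{r|a}:=\Sigma - \Sigma w^* w^{*\top}\Sigma/(\|w^*\|_\Sigma^2+\nu^2)$, so that $\Sigma_{q_a}=\Sigma_{r|a}+\mu_a\mu_a^\top$. I would then argue that, when $n=\Omega(\max\{1/\lambda_{\min},\,d_r/\|w^*\|_\Sigma^2\})$, standard Wishart / matrix-Bernstein concentration shows $\hat\Sigma_\lambda$ is within a constant factor of $\Sigma$ (up to the negligible $\lambda/n$ shift), so that $\operatorname{Tr}(\hat\Sigma_\lambda^{-1}\Sigma_{q_a})\lesssim \operatorname{Tr}(\Sigma^{-1}\Sigma_{q_a})$. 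The covariance part contributes $\operatorname{Tr}(\Sigma^{-1}\Sigma_{r|a})= d_r-\|w^*\|_\Sigma^2/(\|w^*\|_\Sigma^2+\nu^2)\leq d_r$, and the rank-one part contributes $\mu_a^\top\Sigma^{-1}\mu_a = \mc O(a^2/\|w^*\|_\Sigma^2)$ (absorbing $\mu$-dependent lower-order terms), reproducing the claimed $\mc O(a^2/\|w^*\|_\Sigma+d_r)$ bound (matching the paper's scaling modulo the same typographical point).

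The main obstacle I anticipate is the concentration step for $\hat\Sigma_\lambda$: propagating the lower tail control on $\hat\Sigma_\lambda$ to a uniform spectral bound of $\hat\Sigma_\lambda^{-1}\Sigma$ requires carefully tracking the smallest eigenvalue $\lambda_{\min}$ of $\Sigma$, which is precisely where the sample-complexity condition $n=\Omega(1/\lambda_{\min})$ enters. The ridge shift $\lambda I/n$ is what allows the argument to go through cleanly even when $\lambda_{\min}$ is small. The other steps (Jensen, Cauchy--Schwarz, and self-normalized concentration) are essentially bookkeeping.
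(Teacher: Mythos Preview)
Your proposal is correct and follows essentially the same route as the paper: the Cauchy--Schwarz split in the $V_\lambda$ inner product, the ridge decomposition $\hat w-w^*=V_\lambda^{-1}R^\top\eta-\lambda V_\lambda^{-1}w^*$, and the self-normalized bound from \citet{stochastic_bandit} for the noise term are exactly what the paper does. The only organizational difference in the Gaussian-design part is that the paper writes the conditional law in terms of $\hat w$ (because the cited framework uses pseudo-labels $\hat y=\hat w^\top r+\xi$) and then shows $\|\hat w\|_\Sigma\geq\tfrac12\|w^*\|_\Sigma$ under the stated sample-size condition, whereas you condition directly on $w^*$ and push the comparison through a concentration of $\hat\Sigma_\lambda$ around $\Sigma$; both arrive at the same $\mc O(a^2/\|w^*\|_\Sigma+d_r)$ bound.
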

\begin{proof}
    First we have
    \begin{equation}
        \mc E_1=\mathbb E_{\hat p_a}|r^\top (w^*-\hat w)|\leq \mathbb E_{\hat p_a}||r||_{V_\lambda^{-1}}\cdot ||w^*-\hat w||_{V_\lambda}
    \end{equation}
    where 
    \begin{equation}
        \mathbb E_{\hat p_a}||r||_{V_\lambda^{-1}}\leq \sqrt{\mathbb E_{\hat p_a}r^\top V_\lambda^{-1} r}=\sqrt{{\rm Tr}(V_\lambda^{-1}\mathbb E_{\hat p_a}rr^\top)}\simeq \sqrt{{\rm Tr}(V_\lambda^{-1}\Sigma_{q_a})}
    \end{equation}
    Hence we only need to prove 
    \begin{equation}
        ||w^*-\hat w||_{V_\lambda}\leq \mc O(||w^*||+\nu^2 \sqrt{d_r\log n})
    \end{equation}
    Using the closed form expression, we have
    \begin{equation}
        \hat w-w^*=V_\lambda^{-1}R^\top \eta -\lambda V_\lambda^{-1}w^*
    \end{equation}
    Thus,
    \begin{equation}
        ||w^*-\hat w||_{V_\lambda}\leq ||R^\top \eta||_{V_\lambda^{-1}}+\lambda ||w^*||_{V_\lambda^{-1}}
    \end{equation}
    where $\lambda ||w^*||_{V_\lambda^{-1}}\leq \sqrt{\lambda}||w^*||$, and according to \citep{stochastic_bandit},
    \begin{equation}
        ||R^\top \eta||_{V_\lambda^{-1}}=||R^\top \eta||_{(R^\top R+\lambda I)^{-1}}\leq \mc O(\nu^2 \sqrt{d_r \log n})
    \end{equation}
    with high probability. We hence conclude the first part of proof.

    Further, when $r$ has Gaussian design $r\sim \mc N(\mu, \Sigma)$, according to Lemma C.6 of \citep{rewarddirected}, we can prove the results.
    The key here is the conditional distribution follows the Gaussian below,
    \begin{equation}
        P_r\big(r|\hat f(r)=a\big)=\mc N\Big(\mu+\Sigma \hat w(\hat w^\top \Sigma \hat w+\nu^2)(a-\hat w^\top \mu), \Sigma-\Sigma \hat w(\hat w^\top \Sigma \hat w+\nu^2)^{-1}\hat w^\top \Sigma\Big)
    \end{equation}
    Thus
    \begin{equation}
        {\rm trace}(\hat\Sigma_\lambda^{-1}\Sigma_{q_a})={\rm trace}\bigg(\frac{\Sigma^{1/2}\hat w\hat w^\top \Sigma\hat\Sigma_\lambda^{-1}\Sigma^{1/2}a^2}{(||\hat w||_\Sigma^2+\nu^2)^2}\bigg)\leq (1+\frac{1}{\sqrt{\lambda_{\rm min} n}}) \cdot \mc O(\frac{a^2}{||\hat w||_\Sigma^2}+d_r) 
    \end{equation}
    Notice that $||\hat w||_\Sigma\geq ||w^*||_\Sigma -||\hat w-w^*||_\Sigma$. We have 
    \begin{equation}
        ||\hat w-w^*||_{\Sigma}=\mc O\bigg(\frac{||w^*||+\nu^2\sqrt{d_r\log (n)}}{\sqrt n}\bigg)
    \end{equation}
    where we can prove $||\hat w||_\Sigma \geq \frac{1}{2}||w^*||_\Sigma$ when $n=\Omega(\frac{d_r}{||w^*||^2_\Sigma})$.
\end{proof}
Remarkably, this is a more precise bound improving the results (Lemma C.5 and C.6) in \citep{rewarddirected}, where we make less assumptions on the relationship between $y$ and $r$, explicitly taking $||w||$ and $\nu^2$ into account.

\paragraph{Bounding Distribution Shift in Diffusion.}
We define the distribution shift between two arbitrary distributions $p_1$ and $p_2$ restricted under function class $\mc L$ as
\begin{equation}
    \mc T(p_1, p_2;\mc L):=\sup_{l\in \mc L}\frac{\mathbb E_{x\sim p_1}[l(x)]}{\mathbb E_{x\sim p_2}[l(x)]}
\end{equation}
We have the follow lemma.
\begin{lemma}
    Under the assumption that $r$ follows Gaussian design, then
    \begin{equation}
        {\rm TV}(\hat p_a, q_a)=\mc O\bigg(\sqrt{\frac{\mc T(q(r,y=a), q_{ry};\mc S)}{\lambda_{\rm min}}} \cdot \epsilon_r\bigg)
    \end{equation}
    where $\epsilon_r$ is defined in \Cref{lemma_linear_cond_rep_score_error}. We can bound $\mc E_2$ with:
    \begin{equation}
        \mc E_2=\mc O\Big(({\rm TV}(\hat p_a, q_a)+t_0)\sqrt{M(a)}\Big)
    \end{equation}
    where $M(a)=\mc O(\frac{a^2}{||w^*||_\Sigma}+d)$. By plugging in $\epsilon_r^2=\tilde{\mc O}(\frac{d_r^2}{t_0\sqrt{n}})$, when $t_0=(d_r^4/n)^{1/6}$, $\mc E_2$ admits the best trade off with bound
    \begin{equation}
        \mc E_2=\tilde{\mc O}\bigg(\sqrt{\frac{\mc T(q(r,y=a), q_{ry};\mc S)}{\lambda_{\rm min}}} \cdot (d_r^4/n)^{1/6} a\bigg)
    \end{equation}
\end{lemma}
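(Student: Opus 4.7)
The plan is to decompose the argument into three stages: (i) translate score-matching error into an unconditional KL/TV bound via Girsanov, (ii) convert this into a bound on the conditional TV distance using the distribution shift factor $\mc T$, and (iii) plug this into the decomposition of $\mc E_2$ and optimize over the early stopping time $t_0$.

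First, by Girsanov's theorem applied to the reverse SDE between $t_0$ and $T$, the KL divergence ${\rm KL}(q_{t_0, y}\,\|\,\hat p_{t_0, y})$ is controlled by the time-integrated joint score estimation error, which is precisely the quantity $\epsilon_r^2$ from \Cref{lemma_linear_cond_rep_score_error} (up to the early stopping at $t_0$). This gives an unconditional bound ${\rm TV}(\hat p, q) = \mc O(\epsilon_r)$ on the joint distributions via Pinsker. Next, to pass to the conditional distributions $\hat p_a, q_a$, I would use a change-of-measure argument: writing $q_a(r) = q(r,y=a)/q(y=a)$ and analogously for $\hat p_a$, the ratio between the conditional and joint measure is controlled by $\mc T(q(r,y=a), q_{ry};\mc S)$, which is exactly the distribution shift factor defined in the excerpt. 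The Gaussian design assumption enters here through the marginal density $q(y=a)$, which under the linear noised model $y = \hat w^\top r + \xi$ with $r\sim\mc N(\mu,\Sigma)$ is bounded below in terms of $\lambda_{\min}$ of $\Sigma$. Combining these gives the stated bound
\[
{\rm TV}(\hat p_a, q_a) = \mc O\!\left(\sqrt{\tfrac{\mc T(q(r,y=a), q_{ry};\mc S)}{\lambda_{\min}}} \cdot \epsilon_r\right).
\]

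For the second part, I would bound $\mc E_2 = |\mbb E_{q_a}[g^*(r_\parallel)] - \mbb E_{\hat p_a}[g^*(r_\parallel)]|$ by the standard inequality $|\mbb E_{p_1} f - \mbb E_{p_2} f| \le 2\|f\|_\infty \cdot {\rm TV}(p_1, p_2)$ after truncating to a high-probability region. Under the Gaussian design, the conditional distribution of $r$ given $y=a$ is itself Gaussian with mean scaling linearly in $a$, so $g^*(r_\parallel)$ on the support of $q_a$ has magnitude $\mc O(\sqrt{M(a)})$ with $M(a) = \mc O(a^2/\|w^*\|_\Sigma + d)$. The additional $t_0 \sqrt{M(a)}$ term captures the gap introduced by stopping the reverse SDE at $t_0 > 0$ rather than at $0$: the diffusion at time $t_0$ adds variance of order $t_0$, which inflates any test-function expectation by a factor proportional to $t_0\sqrt{M(a)}$ after a standard short-time perturbation estimate.

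Finally, plugging in $\epsilon_r^2 = \tilde{\mc O}(d_r^2/(t_0\sqrt{n}))$ gives $\mc E_2 = \tilde{\mc O}\!\left(\sqrt{\tfrac{\mc T}{\lambda_{\min}}}\cdot \tfrac{d_r}{\sqrt{t_0}\, n^{1/4}}\sqrt{M(a)} + t_0\sqrt{M(a)}\right)$; balancing the two terms in $t_0$ yields the optimal choice $t_0 = (d_r^4/n)^{1/6}$ and the stated rate $\tilde{\mc O}((d_r^4/n)^{1/6} a)$ after absorbing $\sqrt{M(a)} = \mc O(a)$ in the dominant regime. I expect the main obstacle to be the second step: cleanly converting the joint score-matching bound into a bound on $\mc T \cdot \epsilon_r$ requires a careful change-of-measure that avoids degeneracy where $q(y=a)$ is small, and this is where the Gaussian design assumption and the $\lambda_{\min}$ factor must be invoked with care. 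The short-time perturbation estimate giving the $t_0\sqrt{M(a)}$ term is also subtle because it requires controlling $g^*$ under the slightly smoothed measure $q_{a,t_0}$, which I would handle using the Gaussian tail of the forward noise.
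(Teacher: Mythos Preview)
The paper's own proof is a bare citation: it says the result follows directly from Lemma~C.4 and Lemma~C.7 of \citet{rewarddirected}, noting only that no low-dimensional subspace of $r$ is assumed. There is no self-contained argument in the paper to compare against; your proposal is effectively an attempt to reconstruct those external lemmas.

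Your three-stage outline is the right skeleton, and stage (iii) is fine. The gap is in stage (ii). You first apply Girsanov to bound the \emph{joint} ${\rm TV}(\hat p, q)$ and then try to pass to the conditional ${\rm TV}(\hat p_a, q_a)$ via a change of measure controlled by $\mc T$. This ordering does not work: total variation does not transfer from a joint law to a conditional slice by a multiplicative factor, and the appearance of the function class $\mc S$ in the definition of $\mc T$ is a strong signal that the shift factor is meant to act on \emph{expectations of functions in $\mc S$}, not on TV distances. The argument in \citet{rewarddirected} inverts the order: one first uses $\mc T(q(r,y{=}a), q_{ry};\mc S)$ to upgrade the averaged score error $\epsilon_r^2$ (an expectation under $q_{ry}$) to a \emph{conditional} score error under $q(\cdot\,|\,y{=}a)$, and only then applies Girsanov to the conditional reverse SDE to obtain the KL, hence TV, bound on $\hat p_a$ versus $q_a$. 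With that ordering the role of $\mc S$ is natural (the squared score residual is the function whose expectation is being shifted). Relatedly, your explanation of the $1/\lambda_{\min}$ factor via a lower bound on the marginal density $q(y{=}a)$ is off: under the Gaussian design $y\sim\mc N(w^\top\mu,\,w^\top\Sigma w+\nu^2)$, whose density involves $\lambda_{\max}$ rather than $\lambda_{\min}$. The $\lambda_{\min}^{-1}$ enters instead through $\Sigma^{-1}$ in the closed-form conditional score (the matrix $B_t$ in the paper contains $h(t)\Sigma^{-1}$), which governs the score magnitude and Lipschitz constants feeding into the Girsanov estimate.
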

\begin{proof}
    The proof directly follows from Lemma C.4 and Lemma C.7 in \citep{rewarddirected}. However, the conclusion is slightly different as we do not assume a low dimensional subspace of $r$.
\end{proof}
One can also verify that when $r$ follows Gaussian design, $T(q(r,y=a), q_{ry};\mc S)=\mc O(a^2 \vee d_r)$.

\subsubsection{Second Stage of Conditional Generation}\label{subsubsec:theory_conditional_secondstage}
Now that we have proved that: (i) the first-stage diffusion model can estimate the score function with provable error bound (\Cref{subsubsec:cond_score_error}); and (ii) the generated representations have provable reward improvement (\Cref{subsubsec:reward_improvement}). We continue to show that the ultimate generated samples also have distribution shift towards the desired target in the following contexts. Particularly, we want to answer the question: why utilizing the conditionally generated representations is enough for the second stage generation?

First, when we use the generated representations as the only condition of the second stage diffusion model, the generation process is identical to the second stage of unconditional generation. Therefore, the results in \Cref{subsubsec:theory_unconditional_secondstage} can be directly applied to analyze the second stage generation of conditional generation, which states that the generation conditioning on representations has small TV distance error compared with ground truth conditional distribution. Thus, when we have high-quality first-stage generation, the corresponding second stage generation would introduce almost no additional error, which implies provable reward improvements towards the desired target. In addition, the well-pretrained encode ensures that the correspondence between representations and data points is good, which makes it possible to rigorously construct the data points given the representations (a special case would be $r$ is the complete representation of $x$).

We then partially answer this question from the information theoretic perspectives. We use $H(\cdot )$ to denote the information entropy, and $I(\cdot ; \cdot)$ to denote the mutual information between two variables.
\begin{itemize}
    \item $I(x;r)\geq I(x;y)$. On the one hand, $r$ contains enough information to recover the targets $y$ thanks to the results in \Cref{subsubsec:reward_improvement}, thus we do not explicitly need $y$ for the second stage. One the other hand, benefit from the pretraining task, the representations obviously contains more information in addition to $y$. This assumption is valid if $w^*$ in previous analysis is sparse (there are components in $r$ independent of $y$), i.e., $H(r)>H(y)$. Therefore, generating $x$ conditioning on $r$ is much easier than generating conditioning on $y$ (traditional one stage generation), as the score estimation error of the former one would obviously be much smaller than the latter. 

    \item $I(x,r;y)\geq I(x;y)$. Recall \Cref{equation_reward_assumption} which states the target property $y$ depends on both $x$ and $r$. Hence, $r$ may contain additional information of $y$ obtained from pretrained tasks that is hard to (or cannot) be directly extracted from $x$ - the complex pretrained model assists in extracting relevant information in our two-stage generation, while one-stage generation solely relies on the single denoising model to do so. Therefore, by leveraging representations with provable error bounds, we can better shift the distribution towards the target.
\end{itemize}

In summary, $r$ is an ideal middle state connecting $x$ and $y$ - it is easy to recover $r$ from $y$ (\Cref{subsubsec:cond_score_error}) and to recover $x$ from $y$ (\Cref{subsubsec:theory_unconditional_secondstage}), and vice versa. In comparison, it is somewhat more difficult to directly recover $x$ from $y$ or predict $y$ from $x$.
Consequently, $r$ may be a better indicator of $y$ compared with $x$ due to the aforementioned reasons.

Indeed, one-stage diffusion models generate $x$ directly from conditions $y$ need to optimize a highly complex score $\nabla_x \log p(x|y)$ where $x$ and $y$ are highly non-linearly mapped. As Theorem E.4 in \citep{rewarddirected} points out, the nonparametric SubOpt of $x$ generated by deep neural networks is larger than our results in \Cref{subsubsec:reward_improvement}, which further validates the advantage of first generating $r$ that can be well mapped from $y$.

\section{Visualization}

\subsection{Representation Visualization}

To illustrate how well $p_{\varphi}(r)$ fits $q(r)$, we sample from both $q(r)$ (i.e., the representations produced by the pre-trained encoder for the QM9 and GEOM-DRUG datasets) and the trained representation generator $p_{\varphi}(r)$. We then visualize them in~\Cref{fig:qm9_sampler_rep_clustering}, with colors indicating whether the samples are from $q(r)$ or $p_{\varphi}(r)$. We compute the Silhouette Score of the clustering results, scaled by $\mb{10^2}$ for clarity. A score close to zero suggests that the two clusters are difficult to distinguish, indicating a good fit between $p_{\varphi}(r)$ and $q(r)$. Similarly, we provide the visualization of conditionally generated representations in~\Cref{fig:qm9_sampler_conditional_rep_clustering}

\begin{figure}[!h]
    \centering
    \begin{subfigure}[b]{0.4\linewidth}
        \centering
        \includegraphics[width=\linewidth]{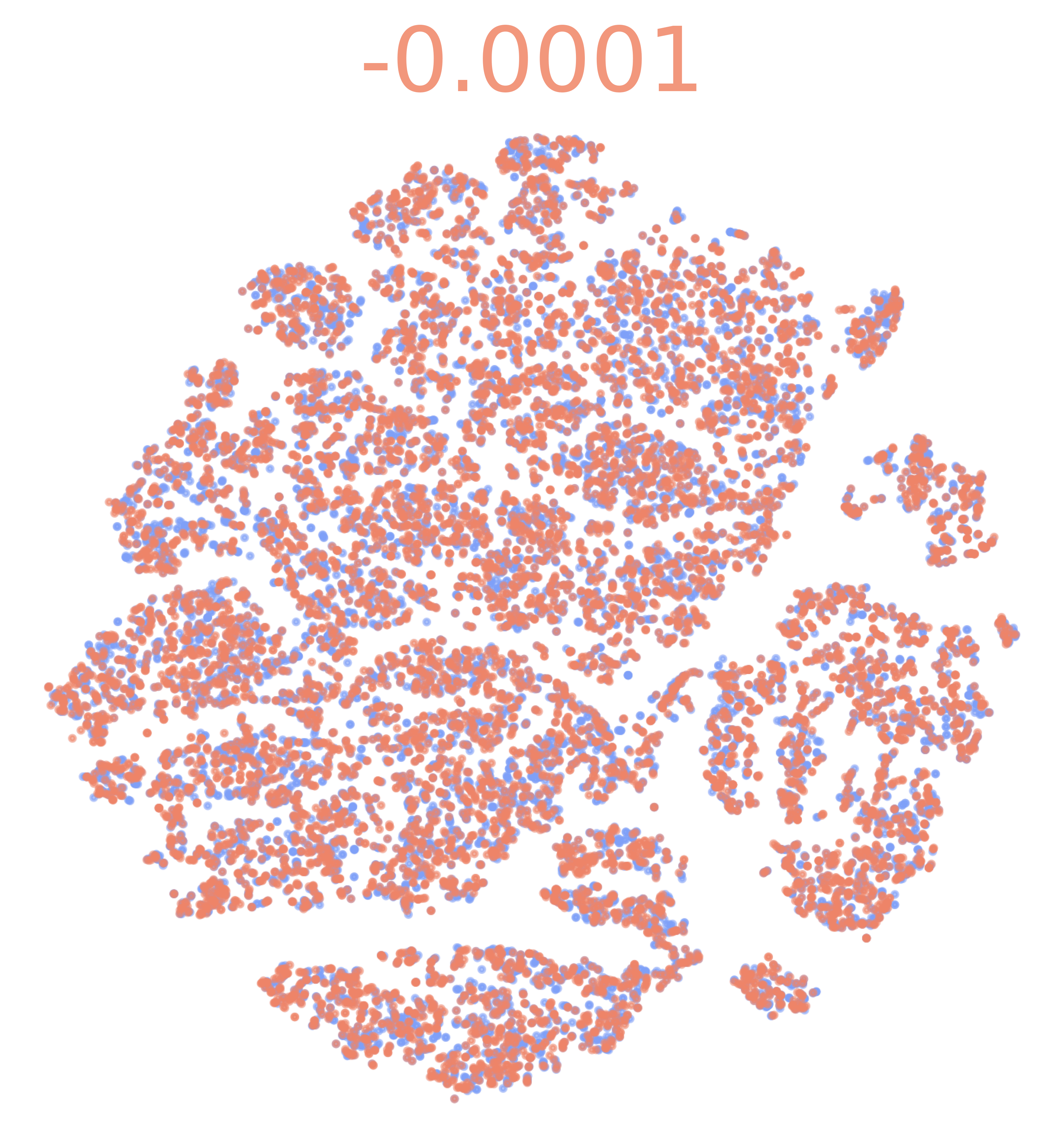}
        \label{fig:qm9_sampler_rep_clustering}
    \end{subfigure}
    \hspace{1cm} 
    \begin{subfigure}[b]{0.4\linewidth}
        \centering
        \raisebox{4mm}{\includegraphics[width=\linewidth]{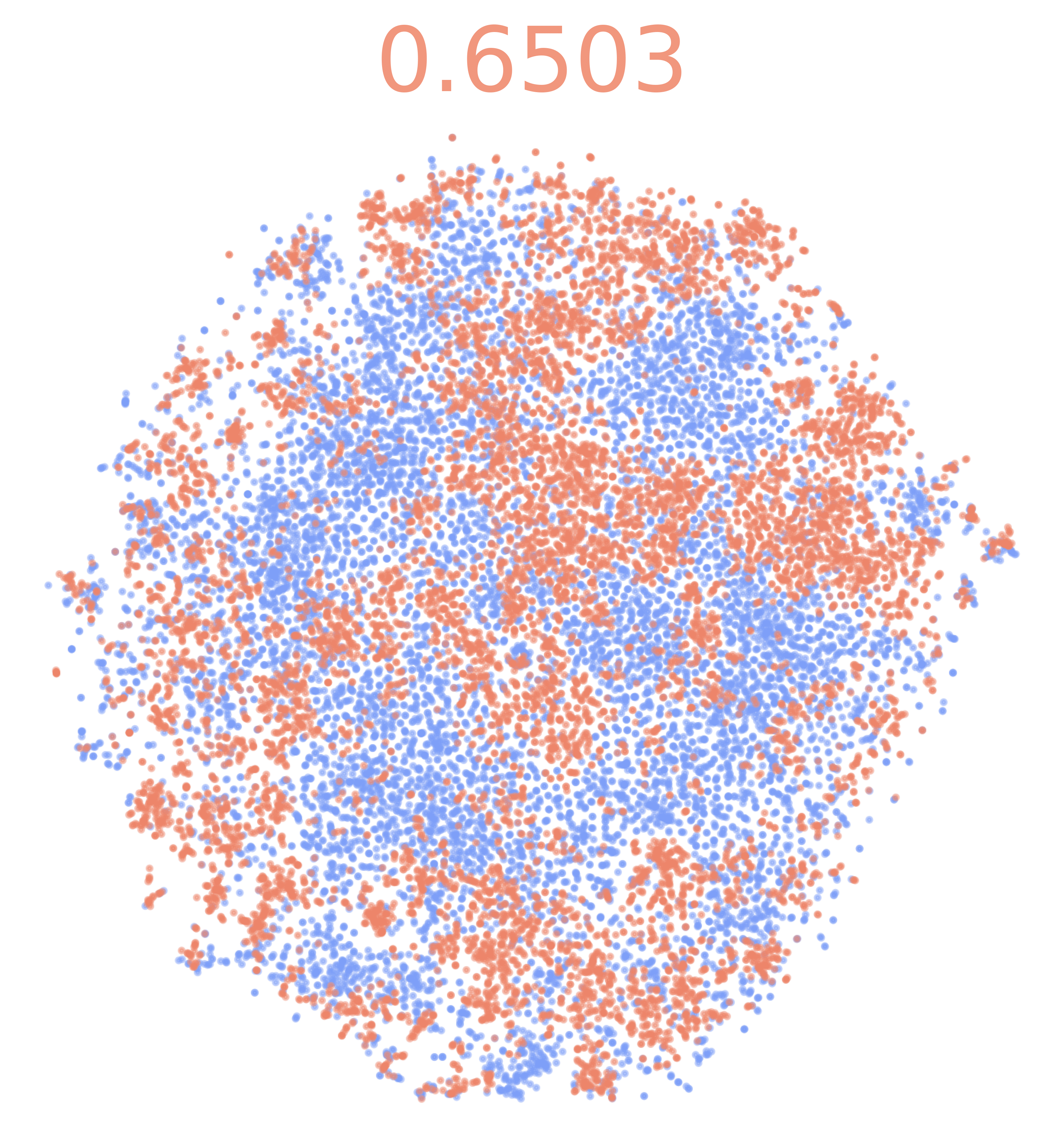}} 
    \end{subfigure}
    
    \caption{t-SNE visualizations of representations unconditionally generated by the representation generator ($\mc T = 1.0$) vs. those produced by the pre-trained encoder on the QM9 and GEOM-DRUG datasets. The Silhouette Score is scaled by \( \mb {10^2} \) for clarity.}
\end{figure}

\begin{figure}[!h]
    \centering
    \begin{tabular}{ccc}
        \includegraphics[width=0.32\linewidth]{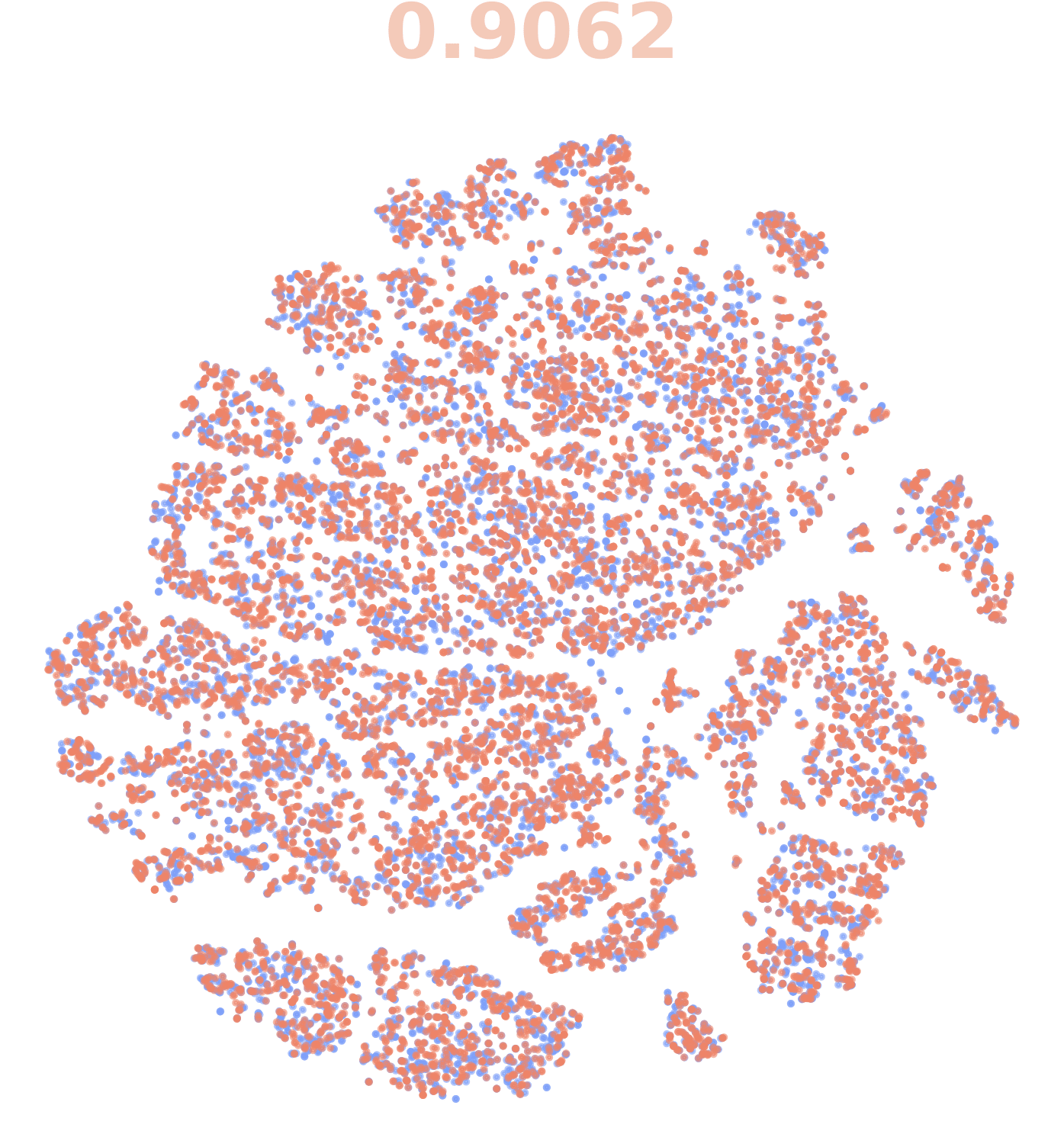} &
        \includegraphics[width=0.32\linewidth]{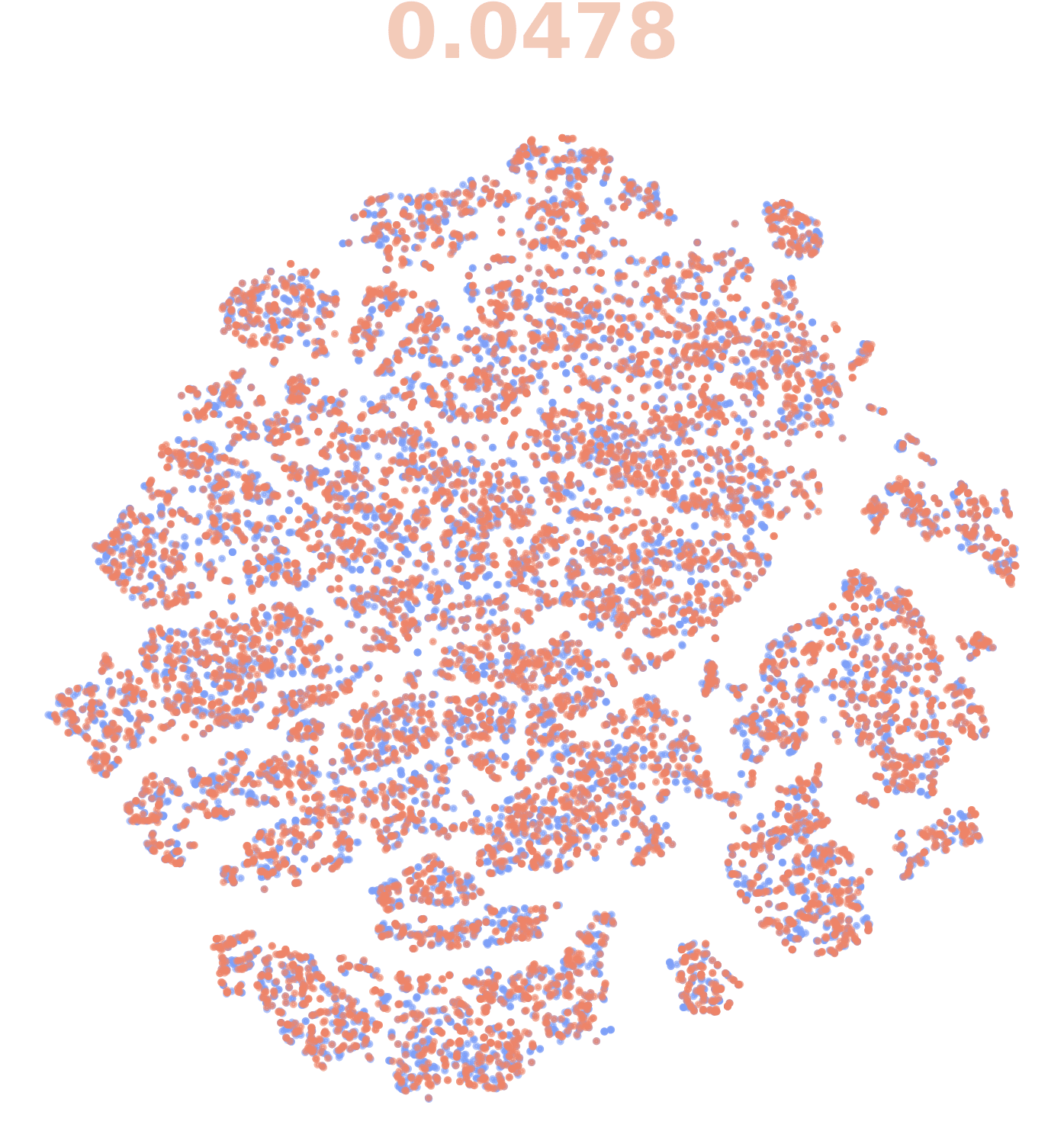} &
        \includegraphics[width=0.32\linewidth]{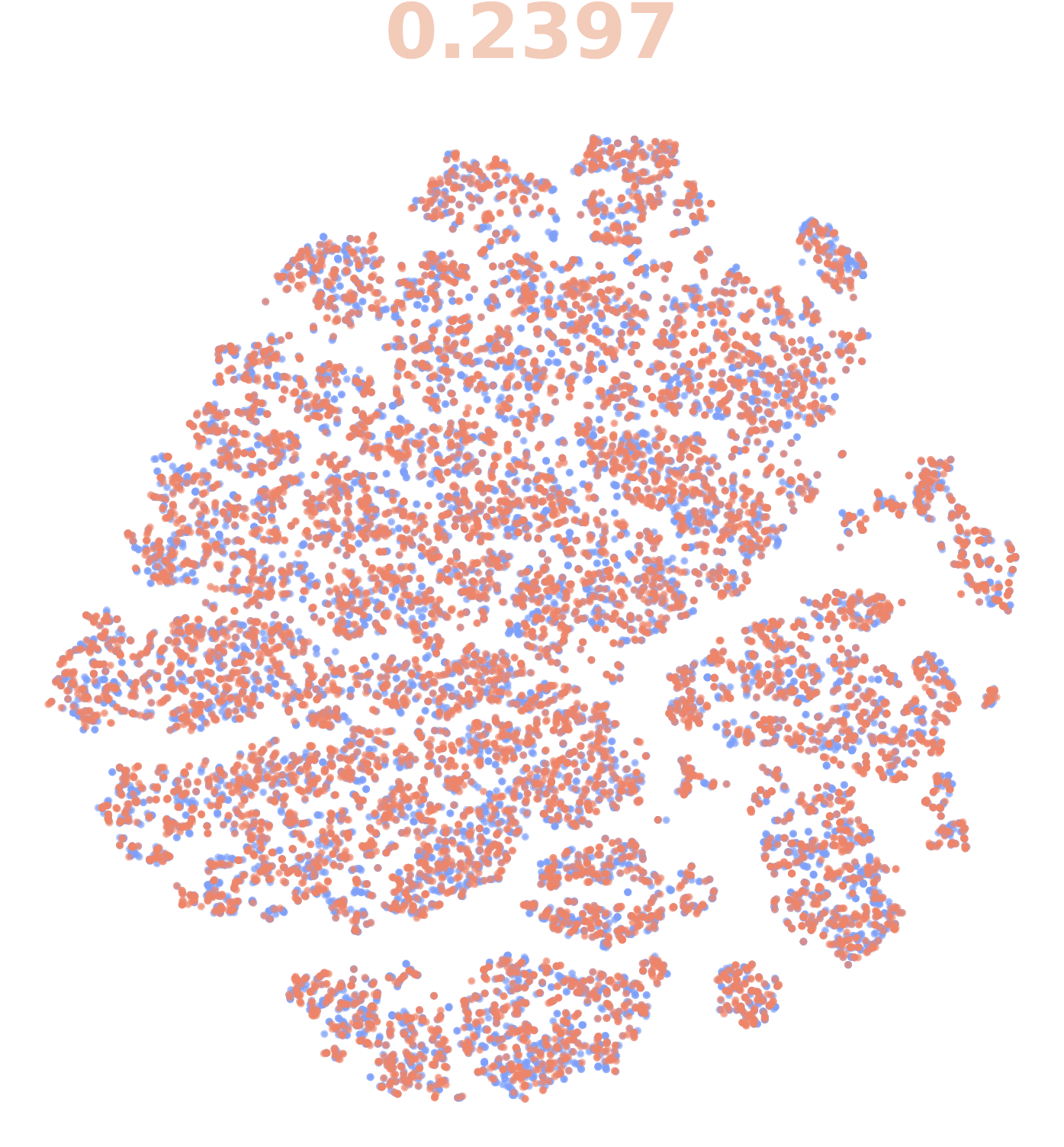} \\
        \includegraphics[width=0.32\linewidth]{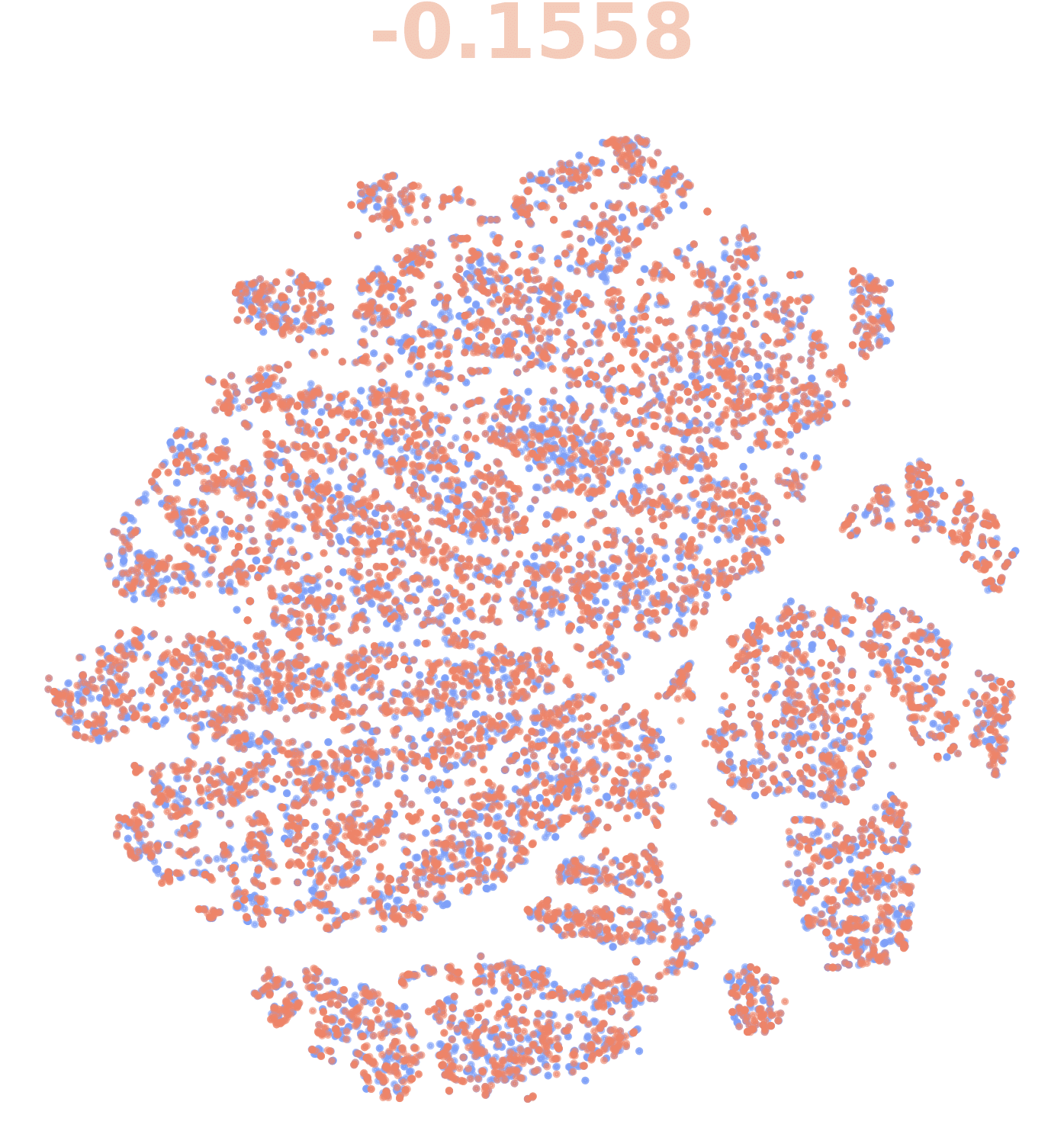} &
        \includegraphics[width=0.32\linewidth]{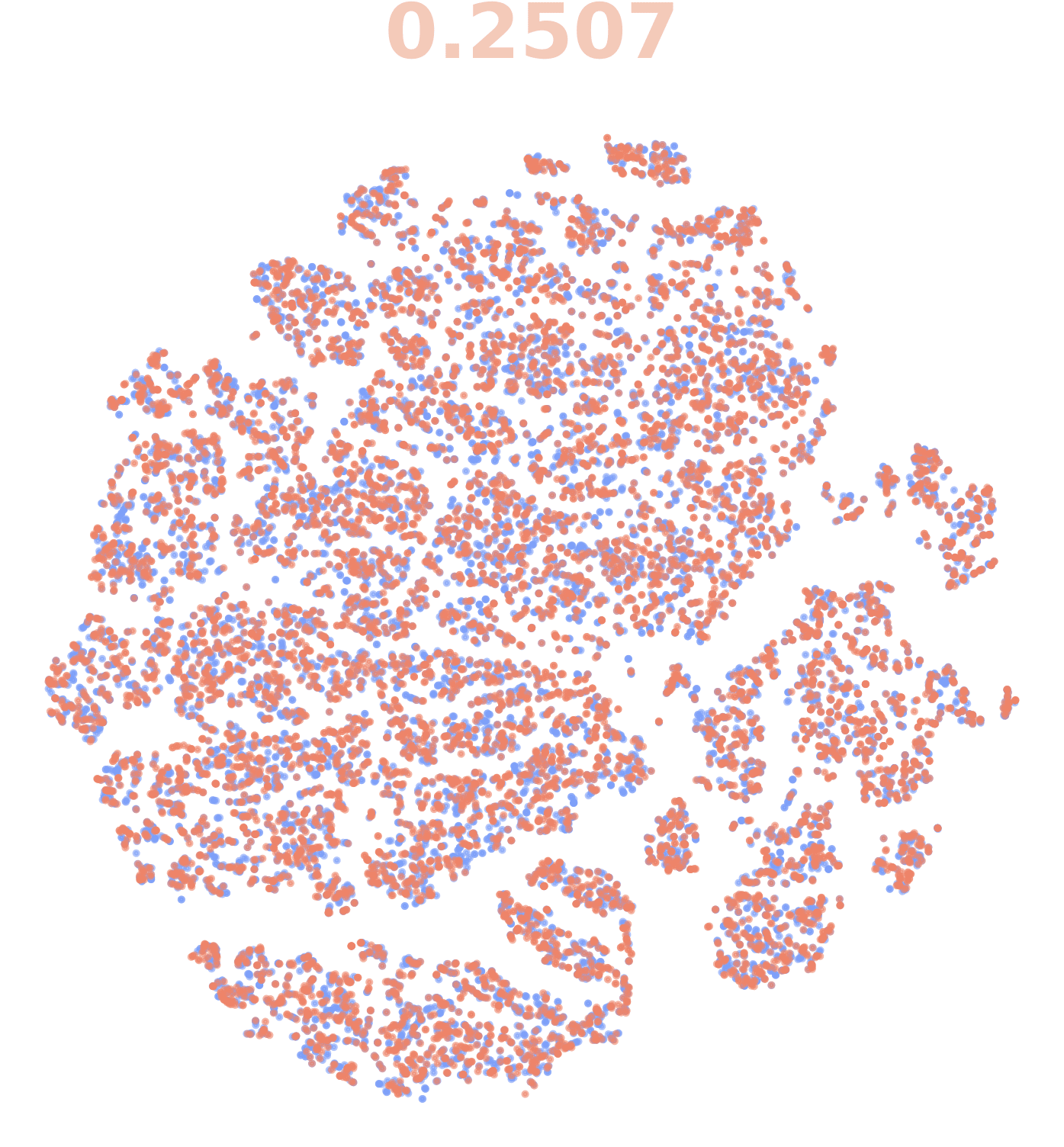} &
        \includegraphics[width=0.32\linewidth]{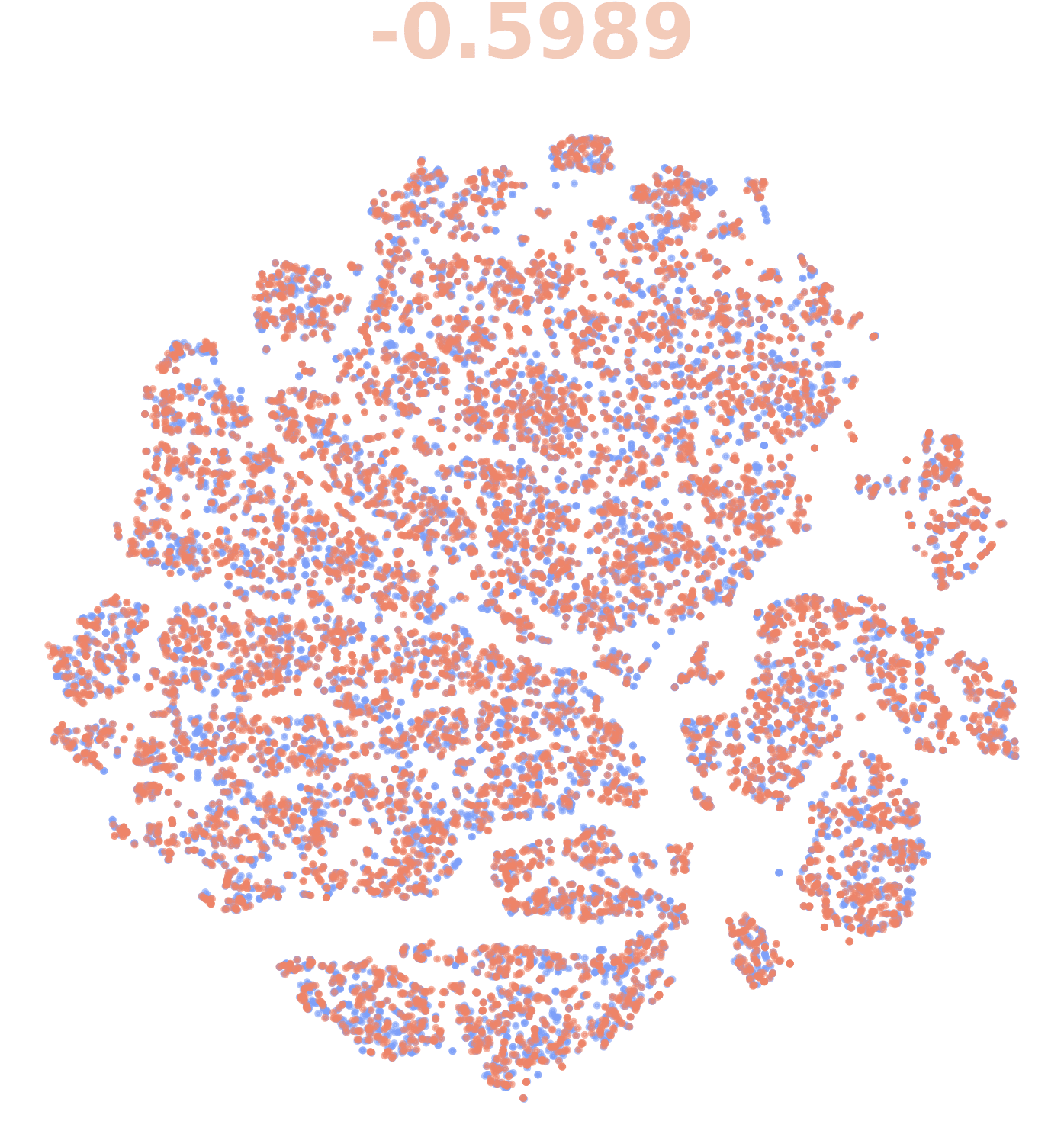} \\
    \end{tabular}
\caption{t-SNE visualization of representations conditionally generated by the representation generator vs. those produced by the pre-trained encoder on the QM9 dataset: (a) $\alpha$, (b) $\Delta \epsilon$, (c) $\epsilon_{\text{HOMO}}$, (d) $\epsilon_{\text{LUMO}}$, (e) $\mu$, and (f) $C_v$. The Silhouette Score is scaled by $\mb{10^2}$ for clarity.}
\label{fig:qm9_sampler_conditional_rep_clustering}
\end{figure}

\subsection{Visualization of Molecule Samples}
\label{sec:vis_mol}
In this section, we provide additional random molecule samples to offer deeper insights into the performance of GeoRCG. \Cref{fig:qm9_unconditional_random_molecules} and~\Cref{fig:DRUG_unconditional_random_molecules} show unconditional random samples generated by GeoRCG trained on the QM9 and GEOM-DRUG datasets, respectively. \Cref{fig:qm_conditional_random} presents random samples conditioned on the $\alpha$ property, along with their corresponding errors.

\begin{figure}[htbp]
\centering
\includegraphics[width=\linewidth]{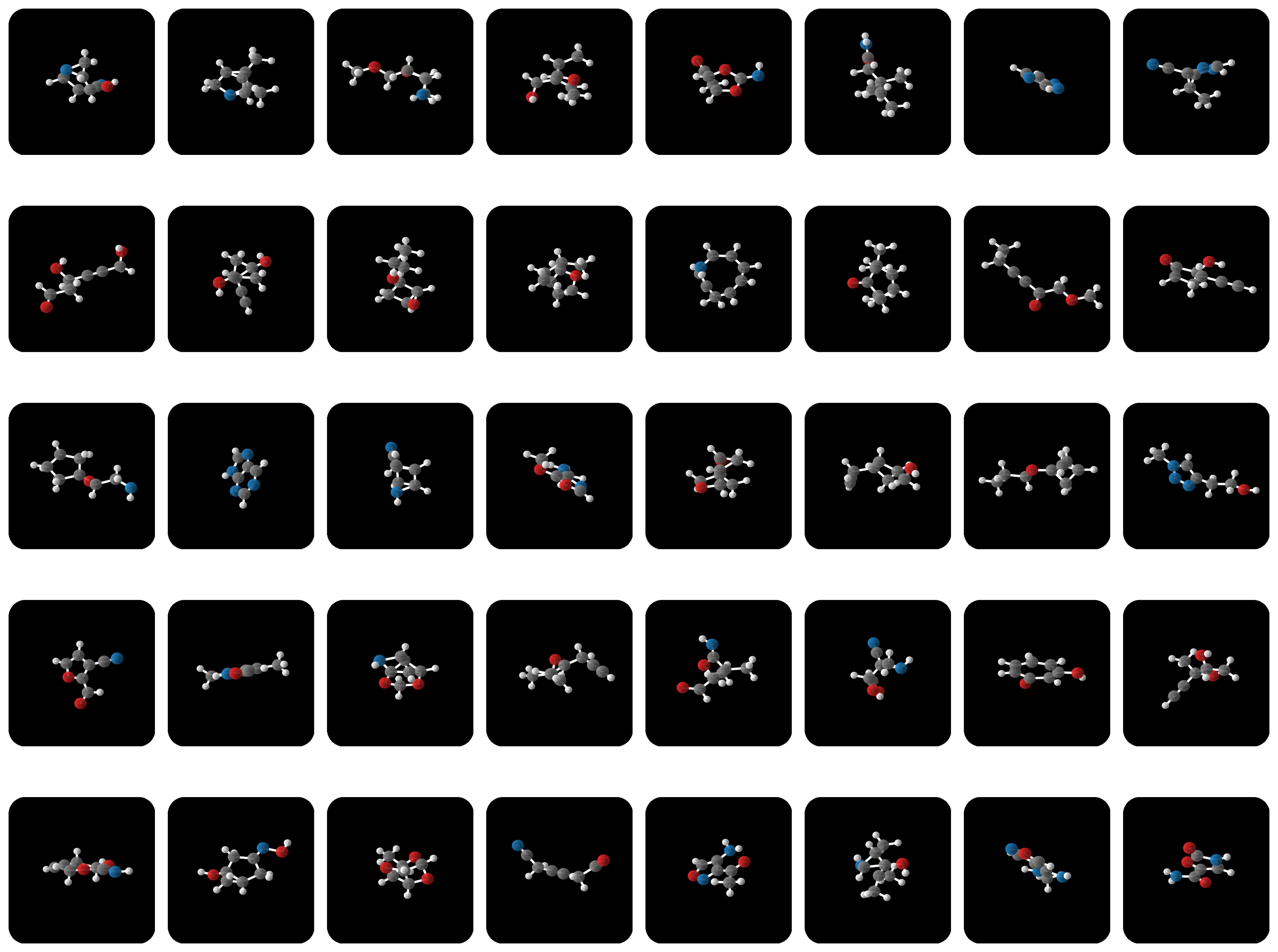}
\caption{Unconditional random samples from GeoRCG trained on QM9. The number of nodes is randomly sampled from the node distribution $q(N)$.}
\label{fig:qm9_unconditional_random_molecules}
\end{figure}

\begin{figure}[htbp]
\centering
\includegraphics[width=\linewidth]{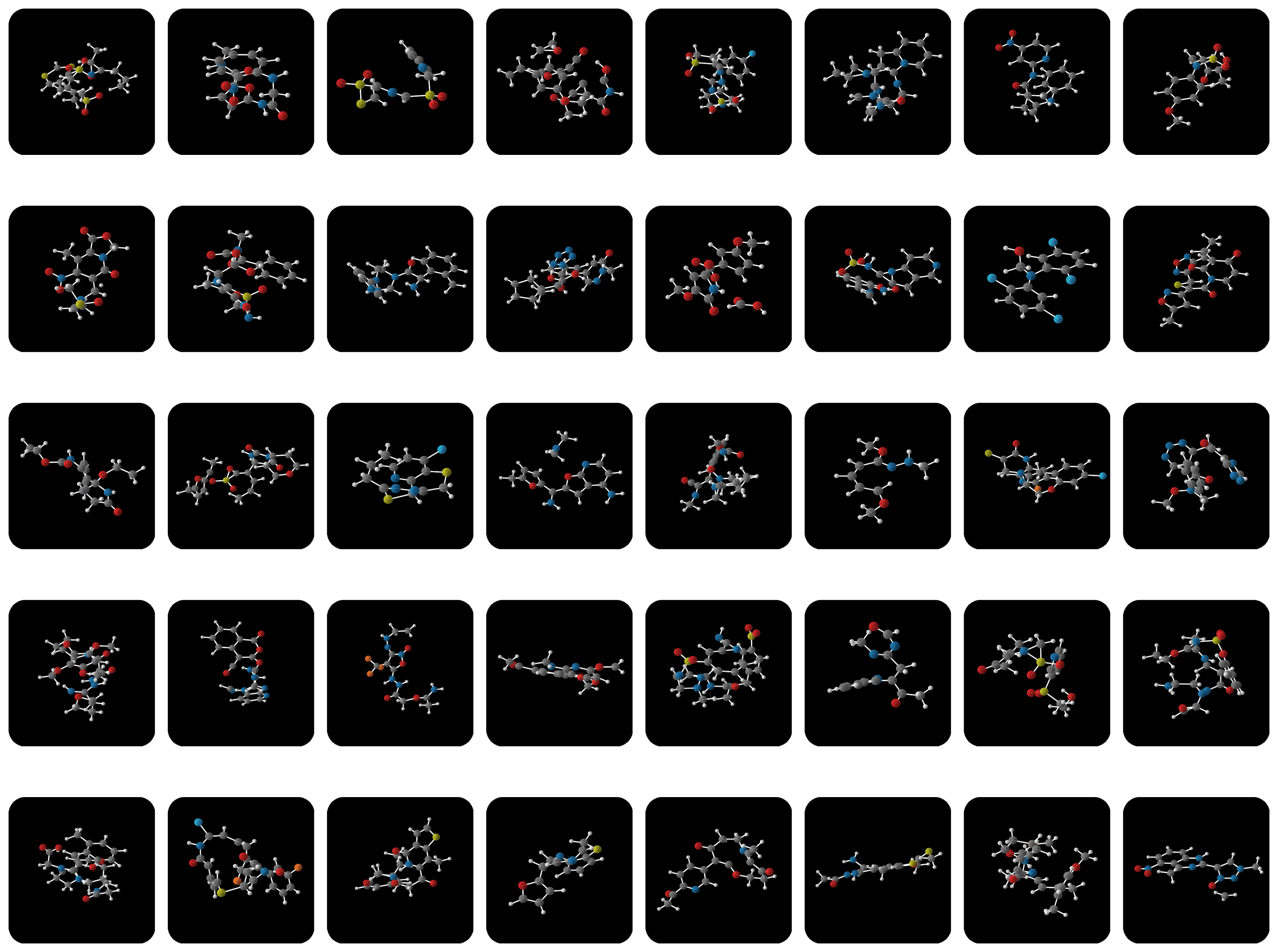}
\caption{Unconditional random samples from GeoRCG trained on GEOM-DRUG. The number of nodes is randomly sampled from the node distribution $q(N)$.}
\label{fig:DRUG_unconditional_random_molecules}
\end{figure}

\begin{figure}[htbp]
\centering
\includegraphics[width=\linewidth]{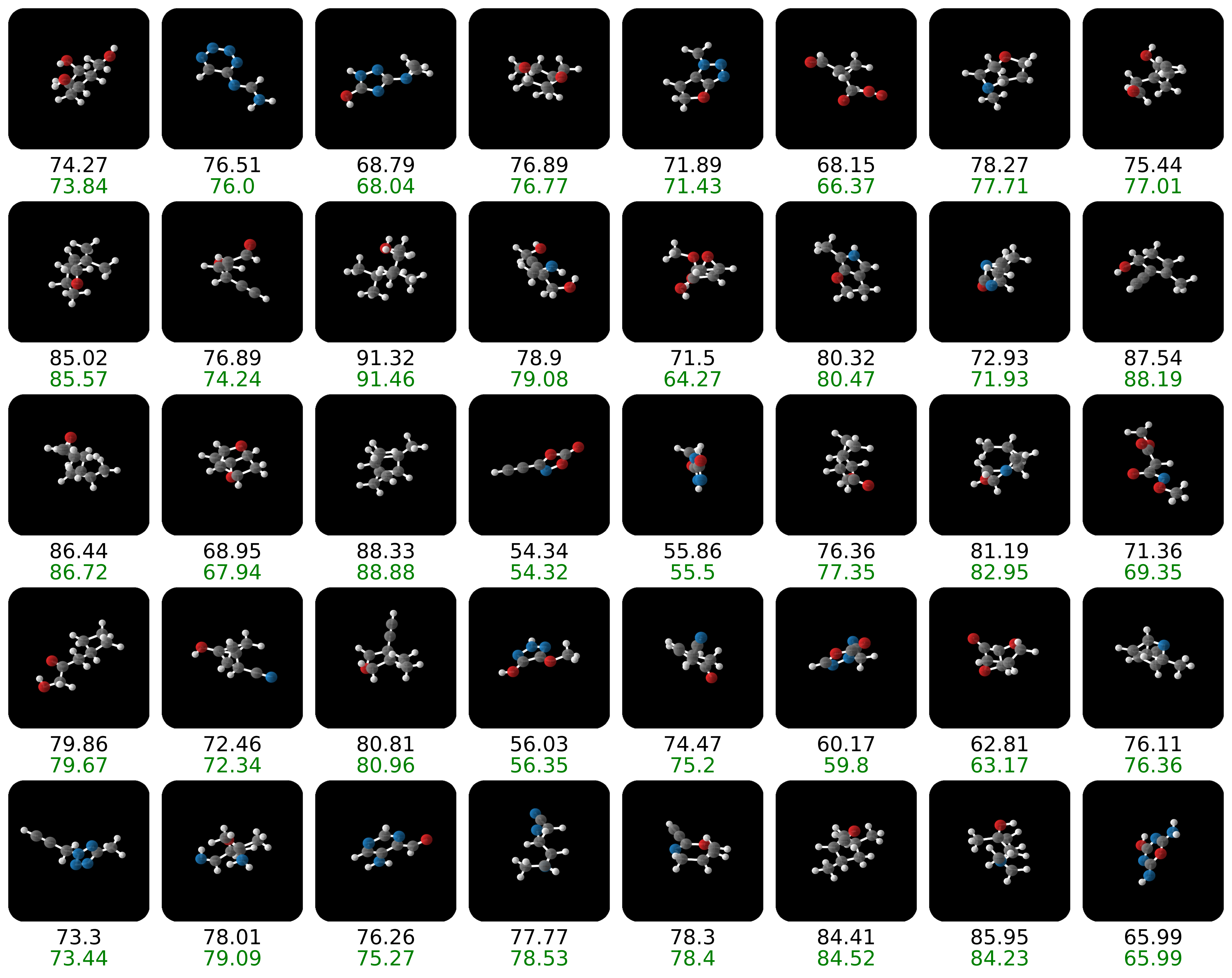}
\caption{Conditional random samples from GeoRCG trained on QM9 dataset and $\alpha$ property. Black numbers indicate the specified property value condition, while \textcolor{green!50!black}{green} numbers represent the evaluated property value of the generated samples. The number of nodes and property value conditions are randomly sampled from the joint distribution $q(N, c)$.}
\label{fig:qm_conditional_random}
\end{figure}

\end{document}